\definecolor{yxc}{RGB}{255,0,0}
\definecolor{yjc}{RGB}{125,0,0}
\definecolor{ytw}{RGB}{0,125,0}
\newcommand{\yxc}[1]{\textcolor{yxc}{[YXC: #1]}}
\DeclareMathOperator{\ind}{\mathds{1}}  
\newcommand{\defn}{:=}
\newcommand{\MDP}{\mathcal{M}}
\newcommand{\gap}{\omega}
\newcommand{\net}[1]{\mathcal{N}_{#1}}
\newcommand{\one}{\bm 1}
\newcommand{\Ind}{\bm I}
\newcommand{\mprob}{\mathbb{P}}
\newcommand{\newnoise}{\varsigma}
\newcommand{\picon}{\widehat{\pi}_{\mathrm{c}}}
\newcommand{\bPi}{{\bf \Pi}}
\newcommand{\Pq}{{\bm P}^\pi}
\newcommand{\e}{{\bm e}}
\newcommand{\real}{\mathbb{R}}
\newcommand{\Var}{\mathsf{Var}}
\newcommand{\cS}{\mathcal{S}}
\newcommand{\cA}{\mathcal{A}}
\newcommand{\ahat}{\widehat{a}}
\newcommand{\Qhat}{\widehat{\bm Q}}
\newcommand{\Qstar}{ {\bm Q}^\star}
\newcommand{\Qhatstar}{\widehat{\bm Q}^\star}
\newcommand{\Vstar}{{\bm V}^\star}
\newcommand{\Vhat}{\widehat{\bm V}}
\newcommand{\Vhatstar}{\widehat{\bm V}^\star}
\newcommand{\Phat}{\widehat{\bm{P}}}
\newcommand{\abPbm}{\bm{P}^{\mathsf{abs}}}
\newcommand{\bP}{\bm{P}}
\newcommand{\Pv}{{\bm P}_{\pi}}
\newcommand{\Phatv}{{\widehat{\bm P}}_{\pi}}
\newcommand{\pistar}{{\pi}^\star}
\newcommand{\pihatstar}{\widehat{\pi}^\star}
\newcommand{\rewabs}{r^{\mathsf{abs}}}
\newcommand{\br}{\bm r}
\newcommand{\mymid}{\,|\,}
\title{Breaking the Sample Size Barrier in \\ Model-Based Reinforcement Learning with a Generative Model}
\author{Gen Li\thanks{Department of Statistics and Data Science, Wharton School, University of Pennsylvania, Philadelphia, PA 19104, USA.} \\
UPenn    \\
	\and
	Yuting Wei\footnotemark[1]\\
	UPenn\\
	\and
	Yuejie Chi\thanks{Department of Electrical and Computer Engineering, Carnegie Mellon University, Pittsburgh, PA 15213, USA.}\\
	CMU\\
	\and
	Yuxin Chen\footnotemark[1] \\
 UPenn  \\
	}
\date{May 2020;~~ Revised: September 2022}
\begin{document}

\theoremstyle{plain} \newtheorem{lemma}{\textbf{Lemma}}\newtheorem{proposition}{\textbf{Proposition}}\newtheorem{theorem}{\textbf{Theorem}}

\theoremstyle{remark}\newtheorem{remark}{\textbf{Remark}}

\maketitle

\begin{abstract}

This paper is concerned with the sample efficiency of reinforcement learning, assuming access to a generative model (or simulator).  
We first consider $\gamma$-discounted infinite-horizon Markov decision processes (MDPs) with state space $\mathcal{S}$ and action space $\mathcal{A}$. Despite a number of prior works tackling this problem, a complete picture of the trade-offs between sample complexity and statistical accuracy is yet to be determined. In particular, all prior results suffer from a severe sample size barrier, in the sense that their claimed statistical guarantees hold only when the sample size exceeds at least $\frac{|\mathcal{S}||\mathcal{A}|}{(1-\gamma)^2}$. 
The current paper overcomes this barrier by certifying the minimax optimality of two algorithms ---  a {\em perturbed} model-based algorithm and a {\em conservative} model-based algorithm --- as soon as the sample size exceeds the order of $\frac{|\mathcal{S}||\mathcal{A}|}{1-\gamma}$ (modulo some log factor). Moving beyond infinite-horizon MDPs, we further study time-inhomogeneous finite-horizon MDPs, and prove that a plain model-based planning algorithm suffices to achieve minimax-optimal sample complexity given any target accuracy level. 
To the best of our knowledge, this work delivers the first minimax-optimal guarantees that accommodate the entire range of sample sizes (beyond which finding a meaningful policy is information theoretically infeasible).  



\end{abstract}

\noindent \textbf{Keywords:} model-based reinforcement learning, minimaxity, policy evaluation, generative model

\tableofcontents


\section{Introduction}

Reinforcement learning (RL) \citep{sutton2018reinforcement,szepesvari2010algorithms}, which is frequently modeled as learning and decision making in a Markov decision process (MDP), is garnering growing interest in recent years due to its remarkable success in practice. A core objective of RL is to search for a policy --- based on a collection of noisy data samples --- that approximately maximizes expected cumulative rewards in an MDP, without direct access to a precise description of the underlying model.\footnote{Here and throughout, the ``model'' refers to the transition kernel and the rewards of the MDP taken collectively.}    
In contemporary applications, it is increasingly more common to encounter environments with prohibitively large state and action space, thus exacerbating the challenge of collecting enough samples to learn the model.  To enable faithful policy learning in the sample-starved regime (i.e.~the regime where the model complexity overwhelms the sample size),  it is crucial to obtain a quantitative picture of the fundamental trade-off between sample complexity and statistical accuracy, and to design efficient algorithms that provably achieve the optimal trade-off.

Broadly speaking, there are at least two common algorithmic approaches: a model-based approach and a model-free one. The model-based approach decouples model estimation and policy learning tasks; more specifically, one first estimates the unknown model using the data samples in hand, and then leverages the fitted model to 
perform planning --- a task that can be accomplished by resorting to Bellman's principle of optimality \citep{bellman1952theory}. A notable advantage of model-based algorithms is their flexibility: the learned model can be adapted to perform new ad-hoc tasks without revisiting the data samples.  In comparison, the model-free approach attempts to compute the optimal policy (and the optimal value function) without learning the model explicitly, which lends itself well to scenarios when a model is difficult to estimate or changes on the fly. Characterizing the sample efficiency of both approaches has been the focal point of a large body of recent works, e.g.~\citet{azar2013minimax,kearns1999finite, sidford2018near,sidford2018variance,li2021q,tu2018gap,jin2018q,agarwal2019optimality,wainwright2019stochastic,wainwright2019variance}.   

In this paper, we pursue a comprehensive understanding of model-based RL, given access to a generative model --- that is, a simulator that produces samples based on the transition kernel of the true MDP for each state-action pair \citep{kearns1999finite,kakade2003sample}.   To allow for more precise discussions, we first look at an infinite-horizon discounted MDP with  state space $\cS$, action space $\cA$ and discount factor $0<\gamma<1$,
and pay particular attention to the scenarios where the sizes of the state/action spaces and the effective horizon $\frac{1}{1-\gamma}$ are all quite large. We obtain $N$ samples per state-action pair by querying the generative model.  For an {\em arbitrary} target accuracy level $\varepsilon>0$, a desired model-based planning algorithm should return an $\varepsilon$-optimal policy with a minimal number of calls to the generative model. Particular emphasis is placed on the sub-linear sampling scenario, in which the total sample size is smaller than the total number $|\cS|^2|\cA|$ of model parameters (so that it is in general infeasible to estimate the model accurately).

\paragraph{Motivation: sample size barriers.}  Several prior works were dedicated to investigating model-based RL for $\gamma$-discounted infinite-horizon MDPs with a generative model, which uncovered the minimax optimality of this approach for an already wide regime \citep{azar2013minimax,agarwal2019optimality}. However, the results therein often suffered from  a sample complexity barrier that prevents us from obtaining a complete trade-off curve between sample complexity and statistical accuracy. For instance, the state-of-the-art result \citet{agarwal2019optimality} required the total sample size to at least exceed $\frac{|\cS||\cA|}{(1-\gamma)^2}$ (up to some log factor), thus restricting the validity of the theory for broader contexts. In truth, this is not merely an issue for model-based planning;
the same barrier already showed up when analyzing the simpler task of model-based policy evaluation \citep{pananjady2019value,agarwal2019optimality}. Furthermore, an even more severe barrier emerged in prior theory for model-free methods; for instance, \citet{sidford2018variance,wainwright2019variance} required the sample size to exceed $\frac{|\cS||\cA|}{(1-\gamma)^3}$ modulo some log factor. In stark contrast, however, no lower bounds developed thus far preclude us from attaining reasonable statistical accuracy when going below the aforementioned sample complexity barriers, thus resulting in a gap between upper and lower bounds in this sample-starved regime.  Noteworthily, such a sample size barrier is not only present for discounted infinite-horizon MDPs; the situation is similar for finite-horizon MDPs \citep{yin2021near}.









\paragraph{Our contributions.} 
The current paper seeks to achieve optimal sample complexity even below the aforementioned sample size barrier. For $\gamma$-discounted infinite-horizon MDPs, we propose two model-based algorithms: 
(i) {\em perturbed model-based planning}: which performs planning based on an empirical MDP learned from samples with mild random {\em reward perturbation}; 
and (ii) {\em conservative model-based planning}: which computes approximately optimal policies for the empirical MDP without reward perturbation. These two proposed algorithms provably find an $\varepsilon$-optimal policy with an order of $\frac{|\cS||\cA|}{(1-\gamma)^3\varepsilon^2}$ samples (up to log factor), thereby matching the minimax lower bound \citep{azar2013minimax}. Our result accommodates the full range of accuracy level $\varepsilon$ (namely, $\varepsilon\in (0,\frac{1}{1-\gamma}]$), thus unveiling the minimaxity of our algorithms as soon as the sample size exceeds $\frac{|\cS||\cA|}{1-\gamma}$ (modulo some log factor). Encouragingly, this covers the {\em full} range of sample sizes that enable one to find a policy strictly better than a random guess.  See Table~\ref{tab:prior-work} for detailed comparisons with prior literature. Along the way, we also derive minimax-optimal statistical guarantees for policy evaluation, which strengthen state-of-the-art results by broadening the applicable sample size range. 

Moving beyond discounted infinite-horizon MDPs, we further characterize the sample efficiency of model-based planning for time-inhomogeneous finite-horizon MDPs, 
which provably achieves minimax-optimal sample complexity as well for the full range of target accuracy levels \citep{domingues2021episodic}. No reward perturbation or conservative action selection is needed for this finite-horizon scenario. See Table~\ref{tab:prior-work-finite} for detailed comparisons with prior literature. 

On the technical side, 
our theory for infinite-horizon MDPs is established upon a novel combination of several key ideas: (1) a  high-order expansion of the estimation error for value functions, coupled with fine-grained analysis for each term in the expansion; (2) the construction of auxiliary leave-one-out type (state-action-absorbing) MDPs --- motivated by \cite{agarwal2019optimality} --- that help decouple the complicated statistical dependency between the empirically optimal policy (as opposed to value functions) and data samples; (3) a tie-breaking argument guaranteeing that the empirically optimal policy is sufficiently separated from all other policies under reward perturbation. 
The case with finite-horizon MDPs is also established based on certain high-order expansion of the value estimation errors, in addition to careful variance control for the terms in the expansion.

\newcommand{\topsepremove}{\aboverulesep = 0mm \belowrulesep = 0mm} \topsepremove
 
\begin{table}[H]

\begin{center}

\begin{tabular}{c|c|c|c}
\toprule
Algorithm & Sample size range & \multicolumn{1}{c|}{Sample complexity} & $\varepsilon$-range $\vphantom{\frac{1^{7}}{1^{7^3}}}$ \tabularnewline
\toprule
Phased Q-learning & \multirow{2}{*}{$\big[\frac{|\mathcal{S}||\mathcal{A}|}{(1-\gamma)^{5}},\infty)$} & \multirow{2}{*}{$\frac{|\mathcal{S}||\mathcal{A}|}{(1-\gamma)^{7}\varepsilon^{2}}$\vphantom{$\frac{1^{7^{7}}}{1^{7^{7^{7^{7}}}}}$}\hspace{-0.4em}} & \multirow{2}{*}{$(0,\frac{1}{1-\gamma}]$}\tabularnewline
\cite{kearns1999finite} &  &  & \tabularnewline
\hline 
Empirical QVI & \multirow{2}{*}{$\big[\frac{|\mathcal{S}|^{2}|\mathcal{A}|}{(1-\gamma)^{2}},\infty)$} & \multirow{2}{*}{$\frac{|\mathcal{S}||\mathcal{A}|}{(1-\gamma)^{3}\varepsilon^{2}}$
\vphantom{$\frac{1^{7^{7}}}{1^{7^{7^{7^{7}}}}}$}\hspace{-0.4em}} & \multirow{2}{*}{$(0,\frac{1}{\sqrt{(1-\gamma)|\mathcal{S}|}}]$}\tabularnewline
\cite{azar2013minimax} &  &  & \tabularnewline
\hline 
Sublinear randomized value iteration & \multirow{2}{*}{$\big[\frac{|\mathcal{S}||\mathcal{A}|}{(1-\gamma)^{2}},\infty)$} & \multirow{2}{*}{$\frac{|\mathcal{S}||\mathcal{A}|}{(1-\gamma)^{4}\varepsilon^{2}}$
	\vphantom{$\frac{1^{7^{7}}}{1^{7^{7^{7^{7}}}}}$}\hspace{-0.4em}} & \multirow{2}{*}{$\big(0, \frac{1}{1-\gamma}\big]$}\tabularnewline
\cite{sidford2018variance} &  &  & \tabularnewline
\hline 
Variance-reduced QVI & \multirow{2}{*}{$\big[\frac{|\mathcal{S}||\mathcal{A}|}{(1-\gamma)^{3}},\infty)$} & \multirow{2}{*}{$\frac{|\mathcal{S}||\mathcal{A}|}{(1-\gamma)^{3}\varepsilon^{2}}$
\vphantom{$\frac{1^{7^{7}}}{1^{7^{7^{7^{7}}}}}$}\hspace{-0.4em}} & \multirow{2}{*}{$(0,1]$}\tabularnewline
\cite{sidford2018near} &  &  & \tabularnewline
\hline 
Randomized primal-dual method & \multirow{2}{*}{$\big[\frac{|\mathcal{S}||\mathcal{A}|}{(1-\gamma)^{2}},\infty)$} & \multirow{2}{*}{$\frac{|\mathcal{S}||\mathcal{A}|}{(1-\gamma)^{4}\varepsilon^{2}}$
\vphantom{$\frac{1^{7^{7}}}{1^{7^{7^{7^{7}}}}}$}\hspace{-0.4em}} & \multirow{2}{*}{$(0,\frac{1}{1-\gamma}]$}\tabularnewline
\cite{wang2017randomized} &  &  & \tabularnewline
\hline 
Empirical MDP + planning & \multirow{2}{*}{$\big[\frac{|\mathcal{S}||\mathcal{A}|}{(1-\gamma)^{2}},\infty)$} & \multirow{2}{*}{$\frac{|\mathcal{S}||\mathcal{A}|}{(1-\gamma)^{3}\varepsilon^{2}}$
\vphantom{$\frac{1^{7^{7}}}{1^{7^{7^{7^{7}}}}}$}\hspace{-0.4em}} & \multirow{2}{*}{$(0,\frac{1}{\sqrt{1-\gamma}}]$}\tabularnewline
\cite{agarwal2019optimality} &  &  & \tabularnewline
\hline 
{\em Perturbed} empirical MDP + planning & \multirow{2}{*}{$\big[\frac{|\mathcal{S}||\mathcal{A}|}{1-\gamma},\infty)$} & \multirow{2}{*}{$\frac{|\mathcal{S}||\mathcal{A}|}{(1-\gamma)^{3}\varepsilon^{2}}$
\vphantom{$\frac{1^{7^{7}}}{1^{7^{7^{7^{7}}}}}$}\hspace{-0.4em}} & \multirow{2}{*}{$(0,\frac{1}{1-\gamma}]$}\tabularnewline
\textbf{This paper} &  &  & \tabularnewline
\hline 
	Empirical MDP + {\em conservative} planning & \multirow{2}{*}{$\big[\frac{|\mathcal{S}||\mathcal{A}|}{1-\gamma},\infty)$} & \multirow{2}{*}{$\frac{|\mathcal{S}||\mathcal{A}|}{(1-\gamma)^{3}\varepsilon^{2}}$
\vphantom{$\frac{1^{7^{7}}}{1^{7^{7^{7^{7}}}}}$}\hspace{-0.4em}} & \multirow{2}{*}{$(0,\frac{1}{1-\gamma}]$}\tabularnewline
\textbf{This paper} &  &  & \tabularnewline
\toprule
\end{tabular}
\end{center}
	\caption{Comparisons with prior results (up to log factors) regarding finding an $\varepsilon$-optimal policy in a $\gamma$-discounted infinite-horizon MDP with a generative model. The sample size range and the $\varepsilon$-range  stand for the range of sample size and optimality gap (e.g.~$\varepsilon$-accuracy) for the claimed sample complexity to hold. Note that the results in \citet{kearns1999finite,wang2017randomized} only hold for a restricted family of MDPs satisfying certain ergodicity assumptions.  In addition, \citet{azar2013minimax} (resp.~\citet{wainwright2019variance}) showed that empirical QVI (resp.~variance-reduced Q-learning) finds an $\varepsilon$-optimal Q-function estimate with sample complexity $\frac{|\mathcal{S}||\mathcal{A}|}{(1-\gamma)^{3}\varepsilon^{2}}$ ($\varepsilon\in (0,1]$) in a sample size range $[\frac{|\cS||\cA|}{(1-\gamma)^3},\infty)$, which did not translate directly to an $\varepsilon$-optimal policy. 
	\label{tab:prior-work}}  
\end{table}

\medskip

\begin{table}[H]

\begin{center}

\begin{tabular}{c|c|c|c}
\toprule
Algorithm & Sample size range & \multicolumn{1}{c|}{Sample complexity} & $\varepsilon$-range $\vphantom{\frac{1^{7}}{1^{7^3}}}$ \tabularnewline
\toprule 
Sublinear randomized value iteration & \multirow{2}{*}{$\big[ |\mathcal{S}||\mathcal{A}|H^3,\infty)$} & \multirow{2}{*}{$ \frac{|\mathcal{S}||\mathcal{A}|H^{5}}{\varepsilon^{2}}$
	\vphantom{$\frac{1^{7^{7}}}{1^{7^{7^{7^{7}}}}}$}\hspace{-0.4em}} & \multirow{2}{*}{$\big(0, H \big]$}\tabularnewline
\cite{sidford2018variance} &  &  & \tabularnewline
\hline 
Variance-reduced QVI & \multirow{2}{*}{$\big[ |\mathcal{S}||\mathcal{A}| H^4,\infty)$} & \multirow{2}{*}{$ \frac{|\mathcal{S}||\mathcal{A}|H^{4}}{\varepsilon^{2}}$
\vphantom{$\frac{1^{7^{7}}}{1^{7^{7^{7^{7}}}}}$}\hspace{-0.4em}} & \multirow{2}{*}{$(0,1]$}\tabularnewline
\cite{sidford2018near} &  &  & \tabularnewline
\hline 
Empirical MDP + planning & \multirow{2}{*}{$\big[ |\mathcal{S}||\mathcal{A}| H^{3} ,\infty)$} & \multirow{2}{*}{$ \frac{|\mathcal{S}||\mathcal{A}|H^{4}}{\varepsilon^{2}}$
\vphantom{$\frac{1^{7^{7}}}{1^{7^{7^{7^{7}}}}}$}\hspace{-0.4em}} & \multirow{2}{*}{$(0,\sqrt{H}]$}\tabularnewline
\cite{yin2021near} &  &  & \tabularnewline
\hline 
Empirical MDP + planning & \multirow{2}{*}{$\big[ |\mathcal{S}||\mathcal{A}| H^2 ,\infty)$} & \multirow{2}{*}{$\frac{|\mathcal{S}||\mathcal{A}| H^{4}}{\varepsilon^{2}}$
\vphantom{$\frac{1^{7^{7}}}{1^{7^{7^{7^{7}}}}}$}\hspace{-0.4em}} & \multirow{2}{*}{$(0, H]$}\tabularnewline
\textbf{This paper} &  &  & \tabularnewline
\toprule
\end{tabular}
\end{center}
	\caption{Comparisons with prior results (up to log factors) regarding finding an $\varepsilon$-optimal policy in a time-inhomogeneous finite-horizon MDP with a generative model. The sample size range and the $\varepsilon$-range  stand for the range of sample size and optimality gap (e.g.~$\varepsilon$-accuracy) for the claimed sample complexity to hold. The results in \citet{sidford2018variance,sidford2018near} were originally stated for the time-homogeneous case; we translate them into the time-inhomogeneous case with an additional factor of $H$. In addition, \citet{li2021tightening} proved that Q-learning finds an $\varepsilon$-optimal Q-function estimate with sample complexity $\frac{|\mathcal{S}||\mathcal{A}|H^4}{\varepsilon^{2}}$ ($\varepsilon\in (0,1]$) in a sample size range $[|\cS||\cA|H^4,\infty)$, which did not translate directly to an $\varepsilon$-optimal policy. 
	\label{tab:prior-work-finite}}  
\end{table}

\section{Problem formulation}

The current paper studies both discounted infinite-horizon MDPs and finite-horizon MDPs, which will be introduced separately in the sequel. Here and  throughout, we adopt the standard notation $[H]:=\{1,\cdots,H\}$.

\subsection{Discounted infinite-horizon Markov decision processes}

\paragraph{Models and background.} 
Consider a discounted infinite-horizon MDP represented by a quintuple $\mathcal{M} = (\cS,\cA, P, r,\gamma)$, where $\cS:=\{1,2,\ldots, |\cS|\}$ denotes a finite set of states, $\cA:=\{1,2,\ldots,|\cA|\}$ is a finite set of actions, $\gamma\in (0,1)$ stands for the discount factor, and $r:  \cS\times\cA \rightarrow [0,1]$ represents the reward function, namely, $r(s,a)$ is the immediate reward received upon executing action $a$ while in state $s$ (here and throughout, we consider the normalized setting where the rewards lie within $[0,1]$).  In addition, $P:\cS\times\cA \rightarrow \Delta(\cS)$ represents the probability transition kernel of the MDP, where $P(s^\prime \mymid s,a)$ denotes the probability of transiting from state $s$ to state $s^\prime$  when action $a$ is executed, and $\Delta(\cS)$ denotes the probability simplex over $\cS$.   


A deterministic policy (or action selection rule) is a mapping $\pi: \cS \rightarrow \cA$ that maps a state to an action. The value function $V^{\pi}: \cS \rightarrow \real$ of a policy $\pi$ is defined by
\begin{equation}
 \forall s\in \cS: \qquad V^{\pi}(s) \defn \mathbb{E} \left[ \sum_{t=0}^{\infty} \gamma^t r(s^t,a^t ) \,\big|\, s^0 =s \right],
\end{equation} 
which is the expected discounted total reward starting from the initial state $s^0=s$;
here,  the sample trajectory $\{(s^t, a^t)\}_{t\geq 0}$ is generated based on the transition kernel (namely, $s^{t+1}\sim P(\cdot \,|\, s^t, a^t)$), with the actions taken according to policy $\pi$ (namely, $a^t= \pi(s^t)$ for all $t\geq 0$). It is easily seen that $0\leq V^{\pi}(s)\leq \frac{1}{1-\gamma} $.
The corresponding action-value function (or Q-function) $Q^{\pi}: \cS \times \cA \rightarrow \real$ of a policy $\pi$ is defined by
\begin{equation}
 \forall (s,a)\in \cS \times \cA: \qquad Q^{\pi}(s,a) \defn \mathbb{E} \left[ \sum_{t=0}^{\infty} \gamma^t r(s^t,a^t ) \,\big|\, s^0 =s, a^0 = a \right],
\end{equation} 
where the actions are taken according to the policy $\pi$ after the initial action (i.e.~$a^t= \pi(s^t)$ for all $t\geq 1$).  
It is well-known that there exists an optimal policy, denoted by $\pi^{\star}$,  
that simultaneously maximizes $V^{\pi}(s)$ (resp.~$Q^{\pi}(s,a)$) for all states $s\in\cS$ (resp.~state-action pairs $(s,a)\in (\cS\times\cA)$) \citep{sutton2018reinforcement}.
The corresponding value function  $V^{\star} :=V^{\pi^{\star}}$ (resp.~action-value function $Q^{\star} := Q^{\pi^{\star}}$) is called the optimal value function (resp.~optimal action-value function).

\paragraph{A generative model and an empirical MDP.} 
The current paper focuses on a stylized generative model (also called a simulator) as studied in \cite{kearns2002sparse,kakade2003sample}. Assuming access to this generative model, we collect $N$ independent samples 
$$
	s_{s,a}^{i} \overset{\mathrm{i.i.d.}}{\sim} P(\cdot \mymid s,a), \qquad i=1,\ldots, N
$$ 
for each state-action pair $(s,a)\in \cS\times \cA$, which allows us to construct an empirical transition kernel $\widehat{P}$ as follows
\begin{equation}
	\forall s' \in \mathcal{S}, \qquad \widehat{P}(s^\prime \mymid s, a) = \frac{1}{N}\sum\nolimits_{i=1}^N \ind \{s_{s,a}^{i} = s^{\prime} \},
	\label{eq:defn-empirical-P}
\end{equation}
where $\ind\{\cdot\}$ is the indicator function. In words, $\widehat{P}(s^\prime \mymid s,a)$  counts the empirical frequency of transitions from $(s,a)$ to state $s^\prime$. The total sample size should therefore be understood as $N^{\mathsf{total}} := N |\cS| |\cA|$.  
 This leads to an empirical MDP $\widehat{\mathcal{M}} = (\cS,\cA, \widehat{P}, r,\gamma)$ 
constructed from the data samples. We can define the value function and the action-value function of a policy $\pi$ for $\widehat{\mathcal{M}} $ analogously, which we shall denote by $\widehat{V}^{\pi}$ and $\widehat{Q}^{\pi}$, respectively. The optimal policy of $\widehat{\mathcal{M}}$ is denoted by $\widehat{\pi}^{\star}$, with the optimal 
value function and Q-function  denoted  by $\widehat{V}^{\star}:=\widehat{V}^{\widehat{\pi}^\star}$ and $\widehat{Q}^{\star}:=\widehat{Q}^{\widehat{\pi}^\star}$, respectively.

\paragraph{Learning the optimal policy via model-based planning.} 
Given a few data samples in hand, the task of policy learning seeks to identify a policy that (approximately) maximizes the expected discounted reward given the data samples. Specifically, for any target level $\varepsilon>0$, the aim is to compute an $\varepsilon$-accurate policy ${\pi}_{\mathsf{est}}$ obeying
\begin{align}
	\forall (s,a)\in\cS\times \cA: \qquad V^{{\pi}_{\mathsf{est}}}(s) \geq V^{\star}(s) -\varepsilon, \quad  Q^{{\pi}_{\mathsf{est}}}(s,a) \geq Q^{\star}(s,a) -\varepsilon.
\end{align}
Naturally, one would hope to accomplish these tasks with as few samples as possible. Recall that for the normalized reward setting with $0\leq r\leq 1$, the value function and Q-function fall within the range $[0,\frac{1}{1-\gamma}]$; this means that the  range of the target accuracy level $\varepsilon$ should be set to $\varepsilon \in [0, \frac{1}{1-\gamma}]$. The model-based approach typically starts by constructing an empirical MDP $\widehat{\mathcal{M}}$ based on all collected samples, and then ``plugs in'' this empirical model directly into the Bellman recursion to perform policy evaluation or planning, with prominent examples including Q-value iteration (QVI) and policy iteration (PI) \citep{bertsekas2017dynamic}. 

\paragraph{Aside: policy evaluation.} A related task is {policy evaluation}, which aims to compute or approximate the value function $V^{\pi}$ under a given policy $\pi$. To be precise, for any target level $\varepsilon>0$, the goal is to find an $\varepsilon$-accurate estimate $V^{\pi}_\mathsf{est}$ such that  
\begin{align}
	\forall s\in\cS: \qquad \big| {V}^{\pi}_{\mathsf{est}}(s) - {V}^{\pi}(s) \big| \leq \varepsilon. 
\end{align}

\subsection{Finite-horizon Markov decision processes} 
\paragraph{Models and background.}
Another type of models considered in this paper is a finite-horizon MDP, which can be represented and denoted by $\mathcal{M} = (\mathcal{S}, \mathcal{A}, \{P_h\}_{h=1}^H, \{r_h\}_{h=1}^H, H)$. 
Here, $\cS$ and $\cA$ denote respectively the state space and the action space as before, and $H$ represents the  horizon length of the MDP. 
For any $1\leq h\leq H$, we let $P_h : \cS \times \cA \rightarrow \Delta(\cS)$ denote the probability transition kernel at step $h$, that is, $P_h(s' \mymid s,a)$ is the probability of transiting to $s'$ from $(s,a)$ at step $h$;  
$r_h: \cS \times \cA \rightarrow [0,1]$ indicates the reward function at step $h$, namely, $r_h(s,a)$ is the immediate reward gained at step $h$ in response to $(s,a)$. As before, we assume normalized rewards throughout the paper, so that all the $r_h(s,a)$'s reside within the interval $[0,1]$. 

Let  $\pi=\{\pi_h\}_{1\leq h\leq H}$ represent a deterministic policy, such that for any $1\leq h\leq H$ and any $s\in \cS$,  $\pi_h(s)$ specifies the action selected at step $h$ in state $s$. Note that $\pi$ could be non-stationary, meaning that the $\pi_h$'s might be different across different time steps $h$.  The value function and the Q-function associated with policy $\pi$ are defined respectively by
 \begin{align*}
   V^{\pi}_h (s) \defn \mathbb{E} \Big[ \sum_{k=h}^{H} r_k(s_k,a_k ) \,\Big|\, s_h =s \Big]
\end{align*} 
 for all $s\in \cS$ and all $1\leq h \leq H$, and
 \begin{equation*}
Q^{\pi}_h(s,a) \defn \mathbb{E} \Big[ \sum_{k=h}^{H} r_k(s_k,a_k ) \,\Big|\, s_h =s, a_h = a \Big] 
\end{equation*} 
for all $(s,a)\in \cS \times \cA$  and all $1\leq h \leq H$. 
As usual,  the expectations are taken over the randomness of the MDP trajectory $\{(s_k,a_k)\}_{1\leq k\leq H}$ induced by the transition kernel $\{ P_h \}_{h=1}^H$ when policy $\pi$ is adopted.
With slight abuse of notation, we let $Q^{\pi}_{H+1}(s,a) = 0$ for every $(s,a) \in \cS \times \cA$ and $V^{\pi}_{H+1} (s) = 0$ for every $s\in \cS.$
In view of the assumed range of the immediate rewards, it is easily seen that 
$$
	0\leq V^{\pi}_h (s) \leq H \qquad \text{and} \qquad 0 \leq Q^{\pi}_h (s, a) \leq H
$$
for any $\pi$, any state-action pair $(s, a)$, and any step $h$. 
Akin to the infinite-horizon counterpart, the optimal value functions $\{V_h^{\star}\}_{1\leq h\leq H}$ and optimal Q-functions $\{Q_h^{\star}\}_{1\leq h\leq H}$ are defined respectively by
\begin{equation*}
	V^{\star}_h(s) \defn \max_{\pi} V^{\pi}_h(s)
	\qquad \text{and} \qquad
	Q^{\star}_h(s,a) \defn \max_{\pi} Q^{\pi}_h(s,a)
\end{equation*}
for any state-action pair $(s,a)\in \cS\times \cA$ and any $1\leq h \leq H$. 
It is well known that there exists at least one policy that allows one to simultaneously achieve the optimal value function and optimal Q-functions for all state-action pairs and all time steps. 
Throughout this paper, we shall denote by $\pi^{\star} = \{\pi_h^{\star}\}_{1\leq h\leq H}$ an optimal policy.   

\paragraph{A generative model and an empirical MDP.} 
Similar to the infinite-horizon setting, 
we assume access to a generative model, which is able to generate $N$ independent samples for each triple $(s,a,h)\in \cS\times \cA\times [H]$ as follows
\[
	s_{h}^i(s,a) \overset{\mathrm{i.i.d.}}{\sim} P_h(\cdot \mymid s,a), \qquad i=1,\ldots, N.
\]
The empirical transition kernel $\{\widehat{P}_h\}_{h=1}^H$ is thus given by
\begin{equation}
	\forall s' \in \mathcal{S}, \qquad \widehat{P}_h(s^\prime \mymid s, a) = \frac{1}{N}\sum\nolimits_{i=1}^N \ind \{s_{h}^i(s,a) = s^{\prime} \}, 
	\label{eq:defn-empirical-P-finite}
\end{equation}
which records the empirical frequency of transitions from $(s,a)$ to state $s^\prime$ at step $h$. This gives rise to a total sample size $N^{\mathsf{total}} := N H |\cS| |\cA|$.  
We shall let  $\widehat{\mathcal{M}} = (\cS,\cA, \{\widehat{P}_h\}_{h=1}^H, \{r_h\}_{h=1}^H, H)$ 
represent the empirical MDP constructed from the data samples. The value function and the Q-function of a policy $\pi$ for $\widehat{\mathcal{M}} $ can be defined analogously, which shall be denoted by $\{\widehat{V}^{\pi}_h\}_{h=1}^H$ and $\{\widehat{Q}^{\pi}_h\}_{h=1}^H$, respectively. We denote by $\widehat{\pi}^{\star}$ the optimal policy of $\widehat{\mathcal{M}}$, and the resulting optimal 
value function and Q-function are denoted by $\widehat{V}^{\star}_h:=\widehat{V}^{\widehat{\pi}^\star}_h$ and $\widehat{Q}^{\star}_h:=\widehat{Q}^{\widehat{\pi}^\star}_h$, respectively.

\paragraph{Learning the optimal policy via model-based planning.} 
Given the data samples in hand, the task of policy learning in the finite-horizon case can be defined similarly as the infinite-horizon counterpart.  
Specifically, for any target level $\varepsilon>0$, the aim of policy learning is to compute an $\varepsilon$-accurate policy ${\pi}_{\mathsf{est}}$ obeying
\begin{align}
	 \forall (s,a,h)\in\cS\times \cA \times [H]: \qquad V^{{\pi}_{\mathsf{est}}}_h (s) \geq V_h^{\star}(s) -\varepsilon, \quad  Q^{{\pi}_{\mathsf{est}}}_h(s,a) \geq Q_h^{\star}(s,a) -\varepsilon.
\end{align}
With the normalized range of the reward function, it is easily seen that the value function and the Q-function reside within the interval $[0,H]$, 
thus implying that the range of the target accuracy level should be $\varepsilon \in [0,H]$.

\subsection{Notation}
\label{sec:notation}

Let $\mathcal{X}:= \big( |\mathcal{S}|,|\mathcal{A}|, \frac{1}{1-\gamma}, \frac{1}{\varepsilon} \big)$.  The notation $f(\mathcal{X}) = O(g(\mathcal{X}))$   means there exists a universal constant $C_1>0$ such that $f\leq C_1 g$, whereas the notation $f(\mathcal{X}) = \Omega(g(\mathcal{X}))$ means $g(\mathcal{X}) = O(f(\mathcal{X}))$. In addition, the notation $\widetilde{O}(\cdot)$ (resp.~$\widetilde{\Omega}(\cdot)$) is defined in the same way as ${O}(\cdot)$ (resp.~$\Omega(\cdot)$) except that it ignores logarithmic factors. 

For any vector $\bm{a}=[a_i]_{1\leq i\leq n}\in \mathbb{R}^n$, we overload the notation $\sqrt{\cdot}$ and $|\cdot|$ in an entry-wise manner such that $\sqrt{\bm{a}}:=[\sqrt{a_i}]_{1\leq i\leq n}$ and $|\bm{a}|:=[|a_i|]_{1\leq i\leq n}$. For any vectors $\bm{a}=[a_i]_{1\leq i\leq n}$ and $\bm{b}=[b_i]_{1\leq i\leq n}$, the notation $\bm{a} \geq \bm{b}$ (resp.~$\bm{a} \leq \bm{b}$) means $a_i \geq b_i$ (resp.~$a_i\leq b_i$) for all $1\leq i\leq n$, and we let $\bm{a} \circ \bm{b} := [a_ib_i]_{1\leq i\leq n}$ represent the Hadamard product. 
Additionally, we denote by $\bm{1}$ the all-one vector, and $\bm{I}$ the identity matrix. For any matrix $\bm{A}$, we define the norm $\|\bm{A}\|_1:=\max_i \sum_j |A_{i,j}|$.

\section{Model-based planning in discounted infinite-horizon MDPs}
\label{sec:main-results}


As summarized in Table~\ref{tab:prior-work}, the theory of all prior works required the sample size per state-action pair to at least exceed $N \geq  \Omega\big( \frac{1}{(1-\gamma)^2} \big)$.   In order to break this sample size barrier, we develop two model-based algorithms that provably overcome such a sample size barrier.

\subsection{Model-based reinforcement learning: two algorithms}
\label{sec:algorithm}

\paragraph{Algorithm 1: perturbed model-based planning.} 
The first algorithm applies model-based planning to an empirical MDP with {\em randomly perturbed rewards}. 
Specifically, for each state-action pair $(s,a)\in\mathcal{S}\times\mathcal{A}$, we randomly perturb the immediate reward by 
\begin{equation}
	r_{\mathrm{p}} (s,a) = r(s,a) + \zeta(s,a), \qquad \zeta(s,a) \overset{\mathrm{i.i.d.}}{\sim} \mathsf{Unif}(0,\xi),
\label{eq:perturbed-reward-gone}
\end{equation}
where $\mathsf{Unif}(0,\xi)$ denotes the uniform distribution between 0 and some parameter $\xi>0$ (to be specified momentarily).\footnote{{Note that perturbation is only invoked when running the planning algorithms
and does not require collecting new samples. }} 
%
%
%
%
For any policy $\pi$, we denote by $\widehat{{V}}^{\pi}_{{\mathrm{p}}}$ the corresponding value function of the perturbed empirical MDP $\widehat{\mathcal{M}}_{\mathrm{p}}=(\cS,\cA, \widehat{P}, {r}_{\mathrm{p}}, \gamma)$ with the probability transition kernel $\widehat{P}$ (cf.~\eqref{eq:defn-empirical-P}) and the perturbed reward function ${r}_{\mathrm{p}}$. Let $\widehat{\pi}_{\mathrm{p}}^{\star}$ represent the optimal policy of $\widehat{\mathcal{M}}_{\mathrm{p}}$, i.e.
\begin{align}
	\widehat{\pi}_{\mathrm{p}}^{\star} := \arg\max_\pi \widehat{{V}}^{\pi}_{{\mathrm{p}}}.
	\label{defn:pi-p-star-perturb}
\end{align}

\paragraph{Algorithm 2: conservative model-based planning.}

An alternative approach that eliminates the need of reward perturbation is to  
select {\em approximately optimal} actions for the empirical MDP instead of the absolute optimal actions.  
To be precise, denote by $\widehat{Q}^{\star}$ (resp.~$\widehat{V}^{\star}$)  the optimal action-value (resp.~value) function of the empirical MDP $\widehat{\mathcal{M}}=(\cS,\cA, \widehat{P}, r, \gamma)$ with the probability transition kernel $\widehat{P}$ (cf.~\eqref{eq:defn-empirical-P}) and the original reward function $r$.
By producing a random draw from $\newnoise \sim \mathsf{Unif}(0,\xi)$ (with $\xi$ specified shortly), we can generate the following policy $\picon$: 
\begin{align}
	\label{eq:policy-Q-approx-MDP}
	\forall s \in \mathcal{S}: \qquad 
	\picon(s) \defn \min \big\{a \in \mathcal{A}: \widehat{Q}^{\star}(s,a) > \widehat{V}^{\star}(s) - \newnoise \big\}.
\end{align}
Note that there is an index assigned to each action as $\mathcal{A}=\{1,\cdots, |\cA|\}$ which induces a natural order for all actions. 
In words, this approach is more conservative and does not stick to the optimal actions w.r.t. the empirical MDP; instead, the policy $\picon$ picks out --- for each state $s\in \cS$ --- the smallest indexed action that is within a gap of $\newnoise$ from optimal.


\subsection{Theoretical guarantees}

Indeed, both the above-mentioned approaches result in a value function (resp.~Q-function) that well approximates the true optimal value function $V^{\star}$ (resp.~optimal Q-function $Q^{\star}$). We start by presenting our results for the perturbed model-based approach. 
%
%
\begin{theorem}[Perturbed model-based planning]
\label{Thm:sample-compl-main}
	There exist some universal constants $c_0,c_1>0$ such that: for any $\delta > 0$ and any $0<\varepsilon \leq \frac{1}{1-\gamma}$, the policy $\widehat{\pi}_{\mathrm{p}}^{\star}$ defined in \eqref{defn:pi-p-star-perturb} obeys
	\begin{align}
		\forall (s,a) \in \mathcal{S} \times \mathcal{A}, \qquad 
		V^{\pihatstar_{\mathrm{p}}}(s)  \geq  V^\star(s) - \varepsilon
		\qquad \text{and} \qquad
		 Q^{\pihatstar_{\mathrm{p}}}(s,a)  \geq  Q^\star(s,a)  - \gamma \varepsilon,  
	\end{align}
	with probability at least $1-\delta$, provided that the perturbation size is $\xi = \frac{c_1(1-\gamma)\varepsilon}{|\mathcal{S}|^5|\mathcal{A}|^5}$ and that 
	the sample size per state-action pair exceeds 
	\begin{align}
	\label{EqnSamples-main}
		N \geq \frac{c_{0}\log\big(\frac{|\mathcal{S}||\mathcal{A}|}{(1-\gamma)\varepsilon 
		\delta}\big)}{(1-\gamma)^{3}\varepsilon^{2}}  . 
	\end{align}
	In addition, both the empirical QVI and PI algorithms w.r.t.~$\widehat{\mathcal{M}}_{\mathrm{p}}$ (cf.~\cite[Algorithms 1-2]{azar2013minimax}) are able to recover $\widehat{\pi}_{\mathrm{p}}^{\star}$ perfectly within $O\big(\frac{1}{1-\gamma}\log(\frac{|\cS||\cA|}{(1-\gamma)\varepsilon\delta}) \big)$ iterations.  
\end{theorem}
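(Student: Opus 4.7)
My plan begins with a standard three-term decomposition of the suboptimality of $\widehat{\pi}_{\mathrm{p}}^{\star}$:
\[
V^{\star} - V^{\widehat{\pi}_{\mathrm{p}}^{\star}} \;\leq\; \bigl|V^{\pi^{\star}} - \widehat{V}^{\pi^{\star}}_{\mathrm{p}}\bigr| \;+\; \bigl(\widehat{V}^{\pi^{\star}}_{\mathrm{p}} - \widehat{V}^{\widehat{\pi}_{\mathrm{p}}^{\star}}_{\mathrm{p}}\bigr) \;+\; \bigl|\widehat{V}^{\widehat{\pi}_{\mathrm{p}}^{\star}}_{\mathrm{p}} - V^{\widehat{\pi}_{\mathrm{p}}^{\star}}\bigr|
\]
(interpreted componentwise), in which the middle term is non-positive by optimality of $\widehat{\pi}_{\mathrm{p}}^{\star}$ for $\widehat{\mathcal{M}}_{\mathrm{p}}$. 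Planning thus reduces to two policy-evaluation errors. For the first, which involves the data-independent policy $\pi^{\star}$, I would apply Bernstein to the Bellman residual $(P-\widehat{P})V^{\pi^{\star}}$, convert the noise into value-function error via the Neumann expansion $(I-\gamma \widehat{P}_{\pi^{\star}})^{-1}=\sum_{k\geq 0}(\gamma \widehat{P}_{\pi^{\star}})^{k}$, and invoke a total-variance bound $\bigl\|\sum_{k\geq 0}\gamma^{2k}P_{\pi^{\star}}^{k}\,\mathrm{Var}_{P_{\pi^{\star}}}(V^{\pi^{\star}})\bigr\|_{\infty}=O(1/(1-\gamma)^{2})$, which yields the $(1-\gamma)^{-3}$ scaling in sample complexity. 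The perturbation $\zeta\sim\mathsf{Unif}(0,\xi)$ contributes only an additive $O(\xi/(1-\gamma))$ bias that the choice of $\xi$ makes negligible.

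The real difficulty is the third term, where $\widehat{\pi}_{\mathrm{p}}^{\star}$ depends on the very transitions $\widehat{P}$ to which we want to apply concentration. My plan is to adapt the leave-one-out absorbing MDP construction: for each state $s\in\cS$ and each scalar $u$ drawn from a sufficiently fine net of $[0,\tfrac{1}{1-\gamma}]$, build an auxiliary MDP $\widehat{\mathcal{M}}_{\mathrm{p}}^{s,u}$ that agrees with $\widehat{\mathcal{M}}_{\mathrm{p}}$ everywhere except that state $s$ is forced to self-loop with per-step reward $(1-\gamma)u$. Its optimal value function and policy depend only on samples at pairs $(s',a')$ with $s'\neq s$, hence are statistically independent of $\widehat{P}(\cdot\mid s,\cdot)$. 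Choosing $u$ on the net nearest to $V^{\widehat{\pi}_{\mathrm{p}}^{\star}}(s)$ and showing that the leave-one-out optimal policy coincides with $\widehat{\pi}_{\mathrm{p}}^{\star}$ restores the independence needed for Bernstein on $(P-\widehat{P})V^{\widehat{\pi}_{\mathrm{p}}^{\star}}$. A high-order Neumann expansion of the resulting evaluation error, rather than the usual one-step linearization, is then required in order to shave off the spurious extra $(1-\gamma)^{-1}$ factor that previous analyses incurred, and thereby break the $\varepsilon\leq 1/\sqrt{1-\gamma}$ barrier.

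The coincidence of optimal policies on $\widehat{\mathcal{M}}_{\mathrm{p}}$ and on each $\widehat{\mathcal{M}}_{\mathrm{p}}^{s,u}$ is precisely what the reward perturbation secures. Without it, the $O(\xi)$ change in transitions induced by the absorbing swap could flip argmaxes between nearly-tied policies. With $\zeta(s,a)\sim\mathsf{Unif}(0,\xi)$ added independently to each immediate reward, a union bound over the $|\cA|^{|\cS|}$ deterministic policies combined with anti-concentration for sums of independent uniforms shows that, with high probability, the value of $\widehat{\pi}_{\mathrm{p}}^{\star}$ on $\widehat{\mathcal{M}}_{\mathrm{p}}$ strictly exceeds that of every other deterministic policy by a gap whose inverse is polynomial in $|\cS||\cA|$; this gap dominates both the absorbing-swap disruption and the net discretization slack, forcing policy-matching simultaneously for every $(s,u)$. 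The scale $\xi \propto (1-\gamma)\varepsilon/|\cS|^{5}|\cA|^{5}$ is calibrated so that the perturbation bias stays negligible compared with $\varepsilon$ while still overwhelming the combinatorial union-bound loss.

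The main obstacle I anticipate is the bootstrap needed to close the high-order expansion: the variance vector of $V^{\widehat{\pi}_{\mathrm{p}}^{\star}}$ is itself random and must be related to the known total-variance bound for $V^{\star}$, requiring a crude-then-refined two-stage argument that first establishes $\|V^{\widehat{\pi}_{\mathrm{p}}^{\star}}-V^{\star}\|_{\infty}\ll 1/(1-\gamma)$ so that variance profiles stay within a constant factor, and then iterates to sharpen. The $\gamma\varepsilon$ guarantee for $Q^{\widehat{\pi}_{\mathrm{p}}^{\star}}$ follows from the $V$-bound via a single Bellman backup, which contracts the error by $\gamma$. Finally, the exact-recovery claim for QVI/PI follows from the standard $\gamma^{t}$ contraction of the Bellman error: once it shrinks below the tie-breaking gap, greedy selection must return $\widehat{\pi}_{\mathrm{p}}^{\star}$, and this requires only $O\bigl(\log(|\cS||\cA|/((1-\gamma)\varepsilon\delta))/(1-\gamma)\bigr)$ iterations as claimed.
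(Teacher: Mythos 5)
Your overall architecture mirrors the paper's: the same three-term decomposition with the middle term killed by optimality of $\widehat{\pi}_{\mathrm{p}}^{\star}$, a high-order (rather than one-step) expansion of the evaluation error to reach the $(1-\gamma)^{-3}$ rate over the full $\varepsilon$-range, leave-one-out absorbing MDPs plus an epsilon-net to decouple $\widehat{\pi}_{\mathrm{p}}^{\star}$ from the data, and contraction plus the tie-breaking gap for exact recovery by QVI/PI. The genuine gap is in how you justify the separation gap. You propose a union bound over all $|\cA|^{|\cS|}$ deterministic policies combined with anti-concentration of the uniform perturbations, and assert this yields a gap whose inverse is polynomial in $|\cS||\cA|$. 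It does not: to survive a union bound over $|\cA|^{|\cS|}$ events, each anti-concentration probability (which scales like $\omega/\xi$ for a target gap $\omega$) must be at most $\delta\,|\cA|^{-|\cS|}$, forcing $\omega \lesssim \xi\delta\,|\cA|^{-|\cS|}$ --- an \emph{exponentially} small gap. Since $\omega$ enters the analysis through the cardinality of the net $\net{(1-\gamma)\omega/4}$ (whose logarithm sits inside the Bernstein parameter $\beta_1$) and through the QVI/PI iteration count $O\big(\frac{1}{1-\gamma}\log\frac{1}{\omega}\big)$, an exponentially small $\omega$ inflates the sample complexity and the iteration count by a factor of order $|\cS|$, destroying exactly the guarantees the theorem claims. (The assertion is also false as literally stated: two policies differing only at states unreachable from a given initial state have identical values there, so no uniform value gap over all policy pairs holds.) The paper's Lemma~\ref{LemTieBreaking} avoids the exponential union bound entirely: it fixes a triple $(s,a_1,a_2)$, varies the single reward coordinate $\tau = r_{\mathrm{p}}(s,a_1)$ while freezing all others, establishes a phase transition in $\pi_\tau^{\star}(s)$, and shows that the set of $\tau$ with $|Q_\tau^{\star}(s,a_1)-Q_\tau^{\star}(s,a_2)|<\omega$ has Lebesgue measure at most $4\omega/(1-\gamma)$; a union bound over only $|\cS||\cA|^2$ triples then gives the per-state Q-value separation $\omega=\frac{\xi\delta(1-\gamma)}{4|\cS||\cA|^{2}}$. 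This one-dimensional measure argument is the missing idea.

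A second, more local issue: your auxiliary MDPs make the \emph{state} $s$ absorbing under every action (the construction of Agarwal et al.), whereas the paper makes only the designated \emph{pair} $(s,a)$ absorbing. With your version all actions at $s$ become equivalent in the auxiliary MDP, so its optimal policy is tied at $s$ and cannot be shown to coincide with $\widehat{\pi}_{\mathrm{p}}^{\star}$ there; since the vector fed into Bernstein depends on the policy at \emph{every} state, the decoupling from $\widehat{P}(\cdot\mid s,\cdot)$ is not actually achieved. This is repairable (e.g., additionally enumerate the action taken at $s$), but as written the policy-matching step fails at the absorbed state. Finally, the ``crude-then-refined bootstrap'' you anticipate for relating the variance of $V^{\widehat{\pi}_{\mathrm{p}}^{\star}}$ to that of $V^{\star}$ is not needed on the paper's route: the high-order recursion is closed instead by the refined bound $\big\|(\bm{I}-\gamma\bm{P}_{\pi})^{-1}\sqrt{\Var_{\bm{P}_{\pi}}(\bm{V})}\big\|_{\infty}\leq\frac{4}{\gamma\sqrt{1-\gamma}}\|\bm{V}\|_{\infty}$ of Lemma~\ref{lemma:VarP-V-bound}, applied to the auxiliary vectors $\bm{V}^{(l)}$.
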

\begin{remark}
	Theorem~\ref{Thm:sample-compl-main} holds unchanged if  $\xi$ is taken to be $\frac{c_1(1-\gamma)\varepsilon}{|\mathcal{S}|^{\alpha}|\mathcal{A}|^{\alpha}}$ for any fixed constant $\alpha\geq 1$. This paper picks the specific choice $\alpha= 5$ merely to convey that a very small degree of perturbation suffices for our purpose. 
\end{remark}
\begin{remark}
	Perturbation brings a side benefit: one can recover the optimal policy $\pihatstar_{\mathrm{p}}$ of the perturbed empirical MDP $\widehat{\mathcal{M}}_{\mathrm{p}}$ {\em exactly} in a small number of iterations without incurring further optimization errors.  To give a flavor of the overall computational complexity, let us take QVI for example \cite{azar2013minimax}. Recall that each iteration of  QVI takes time proportional to the time taken to read $\widehat{P}$ (which is a matrix with at most $N |\cS||\cA|$ nonzeros), hence the resulting computational complexity can be as low as ${O}\big(\frac{|\cS||\cA|}{(1-\gamma)^4\varepsilon^2}\log^2(\frac{|\cS||\cA|}{(1-\gamma)\varepsilon\delta})  \big)$.  
\end{remark}
Further,  similar performance guarantees can be established for the conservative model-based approach without reward perturbation, as stated below. 
\begin{theorem}[Conservative model-based planning]
\label{Thm:sample-compl-conservative}
	Under the same assumptions of Theorem~\ref{Thm:sample-compl-main} (including both the sample size and the choice of $\xi$), the policy $\picon$ defined in \eqref{eq:policy-Q-approx-MDP} achieves
	\begin{align}
		\forall (s,a) \in \mathcal{S} \times \mathcal{A}, \qquad 
		V^{\picon}(s)  \geq  V^\star(s) - \varepsilon
		\qquad \text{and} \qquad
		 Q^{\picon}(s,a)  \geq  Q^\star(s,a)  - \gamma \varepsilon 
	\end{align}
	with probability at least $1-\delta$.
\end{theorem}
%

In a nutshell, the above theorems demonstrate that: both model-based algorithms we introduce succeed in finding an $\varepsilon$-optimal policy  
as soon as the total sample complexity exceeds the order of $\frac{|\mathcal{S}||\mathcal{A}|}{(1-\gamma)^{3}\varepsilon^2}$ (modulo some log factor). It is worth emphasizing that, compared to prior literature, our result imposes no restriction on the range of $\varepsilon$ and, in particular, we allow the accuracy level $\varepsilon$ to go all the way up to $\frac{1}{1-\gamma}$. 
Our result is particularly useful in the regime with small-to-moderate sample sizes, since its validity is guaranteed as long as 
\begin{align}
	N \geq \widetilde{\Omega}\Big(\frac{1}{1-\gamma}\Big) . 
	\label{eq:sample-size-range-planning}
\end{align}
Tackling the sample-limited regime (in particular, the scenario when $N \in [\frac{1}{1-\gamma},\frac{1}{(1-\gamma)^2}]$)  requires us to develop new analysis frameworks beyond prior theory, which we shall discuss in detail momentarily.

We remark that the work \cite{azar2013minimax} established a minimax lower bound of the same order as \eqref{EqnSamples-main} (up  to some log factor) in  the  regime $\varepsilon = O(1)$. 
A closer inspection of their analysis, however, reveals that their argument and bound hold true as long as $\varepsilon = O(\frac{1}{1-\gamma})$. This in turn corroborates the {\em minimax optimality} of our perturbed model-based approach for the full $\varepsilon$-range (which is previously unavailable), and demonstrates the information-theoretic infeasibility to learn a policy strictly better than a random guess if  $N\leq \widetilde{O}\big( \frac{1}{1-\gamma} \big)$. Put another way, the condition \eqref{eq:sample-size-range-planning} contains the full range of  ``meaningful'' sample sizes.


Finally, we single out an intermediate result in the analysis of our theorems concerning model-based policy evaluation, which might be of interest on its own. Specifically, for any fixed policy $\pi$ independent of the data, this task concerns value function estimation via the plug-in estimate $\widehat{V}^{\pi}$ (i.e.~the value function of the empirical $\mathcal{M}$ under this policy). However simple as this might seem, existing theoretical underpinnings of this approach remain suboptimal, unless the sample size is already sufficiently large. Our result is the following, which does not require enforcing reward perturbation. 
\begin{theorem}[Model-based policy evaluation] 
\label{Thm:very-wise-evaluation} 
Fix any policy $\pi$.  There exists some universal constant $c_{0}>0$ such that: for any $0<\delta<1$ and any $0<\varepsilon \leq \frac{1}{1-\gamma}$, one has
	\begin{align} 
		\forall s\in \cS:\qquad  \big| \widehat{V}^{\pi}(s)  - V^\pi(s) \big| \leq \varepsilon
	\end{align}
	with probability at least $1-\delta$, provided that the sample size per state-action pair exceeds 
	\begin{align}
	\label{EqnSamples-evaluation}
		N \geq  c_0 \frac{\log\Big(\frac{|\mathcal{S}|\log\frac{e}{1-\gamma}}{\delta}\Big)}{(1-\gamma)^3 \varepsilon^2}. 
	\end{align}
\end{theorem}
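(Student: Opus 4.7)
The plan is to control $\widehat{V}^\pi - V^\pi$ via a high-order resolvent expansion that isolates a leading variance-type term achieving the minimax-optimal rate, while showing the remaining terms are negligible in the target sample-size regime. Subtracting the two Bellman equations for $V^\pi$ and $\widehat{V}^\pi$ yields
\[
 \widehat{V}^\pi - V^\pi \;=\; \gamma(I - \gamma \widehat{P}^\pi)^{-1}(\widehat{P}^\pi - P^\pi) V^\pi.
\]
Since $\pi$ is fixed and $V^\pi$ is therefore deterministic, an entrywise Bernstein bound gives (with probability at least $1-\delta$)
\[
 \bigl|(\widehat{P}^\pi - P^\pi) V^\pi\bigr| \;\lesssim\; \sqrt{\tfrac{\Var_{P^\pi}(V^\pi)\log(|\cS|/\delta)}{N}} + \tfrac{\log(|\cS|/\delta)}{(1-\gamma)N}\, \bm 1.
\]
Because $(I - \gamma\widehat{P}^\pi)^{-1}$ has non-negative entries, the task reduces to controlling the action of $(I - \gamma\widehat{P}^\pi)^{-1}$ on the two right-hand sides. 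The additive part is immediate via $(I - \gamma\widehat{P}^\pi)^{-1}\bm 1 = \bm 1/(1-\gamma)$; the key quantity is $\bigl\|(I - \gamma\widehat{P}^\pi)^{-1}\sqrt{\Var_{P^\pi}(V^\pi)}\bigr\|_\infty$.

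To bound that quantity, I would iterate the resolvent identity
\[
 (I - \gamma\widehat{P}^\pi)^{-1} \;=\; (I - \gamma P^\pi)^{-1} + \gamma(I - \gamma\widehat{P}^\pi)^{-1}(\widehat{P}^\pi - P^\pi)(I - \gamma P^\pi)^{-1}
\]
to produce a finite expansion $\sum_{k=0}^{K-1}\gamma^k[(I - \gamma P^\pi)^{-1}(\widehat{P}^\pi - P^\pi)]^k (I - \gamma P^\pi)^{-1}$ plus a residual involving $(I - \gamma\widehat{P}^\pi)^{-1}$. The $k=0$ term is deterministic, and by the classical total-variance lemma $\|(I - \gamma P^\pi)^{-1}\sqrt{\Var_{P^\pi}(V^\pi)}\|_\infty \lesssim (1-\gamma)^{-3/2}$ it contributes exactly the target rate $\sqrt{\log(|\cS|/\delta)/((1-\gamma)^{3}N)}$, without any sample-size restriction.

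The hardest part is the higher-order terms $w_k := \gamma^k[(I - \gamma P^\pi)^{-1}(\widehat{P}^\pi - P^\pi)]^k(I - \gamma P^\pi)^{-1}\sqrt{\Var_{P^\pi}(V^\pi)}$ for $k\geq 1$, since each $w_k$ depends intricately on the data through repeated factors of $\widehat{P}^\pi - P^\pi$. My plan is an inductive argument: establish, with high probability via Bernstein applied to the outer $\widehat{P}^\pi - P^\pi$ factor conditional on a deterministic envelope $\|w_{k-1}\|_\infty \leq B_{k-1}$ inherited from the previous level, that $\|w_k\|_\infty$ contracts by a factor of order $\sqrt{\log(|\cS|/\delta)/N}/(1-\gamma)$ per step. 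The main obstacle is that this naive contraction only kicks in once $N \gtrsim \log/(1-\gamma)^2$, which is precisely the old sample-size barrier; to break it I would exploit that each intermediate vector $w_{k-1}$ retains a variance-type structure so that the variance lemma can be invoked at every level, saving an extra $\sqrt{1-\gamma}$ factor per step and giving a genuine contraction already for $N \gtrsim \log/(1-\gamma)$. The residual after $K = O\bigl(\log\tfrac{1}{(1-\gamma)\varepsilon}\bigr)$ iterations can then be bounded crudely using $\|(I - \gamma\widehat{P}^\pi)^{-1}\|_\infty \leq (1-\gamma)^{-1}$ together with the same per-step contraction, and a union bound across the $K$ levels contributes only logarithmic factors, yielding the claimed bound on the full range $\varepsilon \in (0, 1/(1-\gamma))$.
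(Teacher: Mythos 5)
Your overall architecture matches the paper's: the identity $\widehat{\bm{V}}^{\pi}-\bm{V}^{\pi}=\gamma(\bm{I}-\gamma\Phatv)^{-1}(\Phatv-\Pv)\bm{V}^{\pi}$, entrywise Bernstein for the deterministic $\bm{V}^{\pi}$, reduction to the key quantity $\|(\bm{I}-\gamma\Phatv)^{-1}\sqrt{\Var_{\Pv}(\bm{V}^{\pi})}\|_{\infty}$, an expansion to logarithmic depth with a per-level variance bound saving a factor $\sqrt{1-\gamma}$ per step (so that contraction holds already for $N\gtrsim \frac{1}{1-\gamma}$), and a crude residual bound. Both the rate and the sample-size threshold would arise exactly as in the paper if the induction went through.

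There is, however, a genuine gap in your inductive step. Your higher-order terms satisfy $w_{k}=\gamma(\bm{I}-\gamma\Pv)^{-1}(\Phatv-\Pv)w_{k-1}$, and for $k\geq 2$ the vector $w_{k-1}$ already contains factors of $\Phatv-\Pv$, so it is statistically dependent on the outer factor $\Phatv-\Pv$ to which you propose to apply Bernstein. Conditioning on a deterministic envelope $\|w_{k-1}\|_{\infty}\leq B_{k-1}$ does not remove this dependence --- the \emph{direction} of $w_{k-1}$ still depends on the data --- so the row-wise Bernstein bound is invalid as stated; without it you only get the trivial bound $\|(\Phatv-\Pv)w_{k-1}\|_{\infty}\leq 2\|w_{k-1}\|_{\infty}$, which yields no contraction. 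The paper circumvents this by never applying $\Phatv-\Pv$ to a random vector: it introduces the deterministic ladder $\bm{r}^{(l)}:=\sqrt{\Var_{\Pv}[\bm{V}^{(l-1)}]}$, $\bm{V}^{(l)}:=(\bm{I}-\gamma\Pv)^{-1}\bm{r}^{(l)}$, applies Bernstein only to the deterministic $\bm{V}^{(l)}$, and absorbs the randomness into the next level's estimation error via the identity $(\bm{I}-\gamma\Phatv)^{-1}\sqrt{\Var_{\Pv}[\bm{V}^{(l)}]}=\widehat{\bm{V}}^{(l+1)}$ together with $\|\widehat{\bm{V}}^{(l+1)}\|_{\infty}\leq\|\widehat{\bm{V}}^{(l+1)}-\bm{V}^{(l+1)}\|_{\infty}+\|\bm{V}^{(l+1)}\|_{\infty}$, running the recursion on the errors $\|\widehat{\bm{V}}^{(l)}-\bm{V}^{(l)}\|_{\infty}$ rather than on the terms of a Neumann expansion. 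Relatedly, the ``variance lemma'' you invoke at every level must be the refined form $\|(\bm{I}-\gamma\Pv)^{-1}\sqrt{\Var_{\Pv}(\bm{V})}\|_{\infty}\leq\frac{4}{\gamma\sqrt{1-\gamma}}\|\bm{V}\|_{\infty}$ (Lemma~\ref{lemma:VarP-V-bound}), stated in terms of $\|\bm{V}\|_{\infty}$; the classical bound $\frac{2\log 2}{\gamma(1-\gamma)^{1.5}}\|\bm{r}\|_{\infty}$ only relates level $l+1$ to level $l-1$ with a loss of $(1-\gamma)^{-3/2}$ over two steps, which forces $N\gtrsim(1-\gamma)^{-3/2}$ rather than $(1-\gamma)^{-1}$, so this refinement is a required new ingredient that your plan leaves implicit.
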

In words, this theorem reveals that $\widehat{V}^{\pi}$ begins to outperform a random guess as soon as $N \geq  \widetilde{\Omega} \Big( \frac{1}{1-\gamma } \Big)$. 
%
%
The sample complexity bound \eqref{EqnSamples-evaluation} enjoys {\em full} coverage of the $\varepsilon$-range  $(0, \frac{1}{1-\gamma}]$, and matches the minimax lower bound derived in \cite[Theorem 2(b)]{pananjady2019value} up to only a $\log\log\frac{1}{1-\gamma}$ factor.   In addition, a recent line of work investigated instance-dependent guarantees for policy evaluation (\cite{pananjady2019value,khamaru2020temporal}). 
While this is not our focus, our analysis does uncover an instance-dependent  bound with a broadened sample size range. See Lemma~\ref{lemma:fixed-policy-error} and the discussion thereafter.

\subsection{Comparisons with prior works and implications}

In order to discuss the novelty of our results in context,  we take a moment to compare them with prior theory.  
See Table~\ref{tab:prior-work} for a more complete list of comparisons.

\paragraph{Prior bounds for planning and policy learning.} None of the prior results with a generative model (including both model-based or model-free approaches) was capable of efficiently finding the desired policy while accommodating the full sample size range \eqref{eq:sample-size-range-planning}. For instance,  the state-of-the-art analysis for the model-based approach  \cite{agarwal2019optimality} required the sample size to at least exceed
		\begin{align}
			\label{eq:prior-sample-size-barrier-model-based}
			N \geq \widetilde{\Omega}\Big( \frac{1}{(1-\gamma)^2} \Big) ,
		\end{align}
whereas the theory for the variance-reduced model-free approach \cite{sidford2018near,wainwright2019variance} imposed the sample size requirement 
		\begin{align}
			\label{eq:prior-sample-size-barrier-model-free}
			N \geq \widetilde{\Omega}\Big( \frac{1}{(1-\gamma)^3} \Big) .
		\end{align}
In fact, it was previously unknown what is achievable in the sample size range $N\in [\frac{1}{1-\gamma},\frac{1}{(1-\gamma)^2}]$.  In contrast, our results confirm the minimax-optimal statistical performance of the model-based approach with full coverage of the $\varepsilon$-range and the sample size range. 

\begin{remark}
	{
	We briefly point out why the sample size barrier \eqref{eq:prior-sample-size-barrier-model-based} appeared in the analysis of \cite{agarwal2019optimality}.  Take  \cite{agarwal2019optimality} Section 4.3 for example:  the contraction factor $\gamma\sqrt{\frac{8\log(|\mathcal{S}||\mathcal{A}|/(1-\gamma)\delta)}{N}}\frac{1}{1-\gamma}$  therein needs to be smaller than 1, thereby requiring $N\geq \widetilde{\Omega}\big( (1-\gamma)^{-2}\big)$. }
\end{remark}

\paragraph{Prior bounds for policy evaluation.}

Regarding value function estimation for any fixed policy $\pi$, the prior results \cite{azar2013minimax,agarwal2019optimality,pananjady2019value} 
 for the plug-in approach all operated under the assumption that
$N \geq \widetilde{\Omega}\big( \frac{1}{(1-\gamma)^2} \big) $,
which is more stringent than our result by a factor of at least $\frac{1}{{1-\gamma}}$. 
In addition, our sample complexity matches the state-of-the-art guarantees  in the regime where $\varepsilon \leq \frac{1}{\sqrt{1-\gamma}}$  \citep{agarwal2019optimality,pananjady2019value}, while extending them to the range $\varepsilon\in \big[\frac{1}{\sqrt{1-\gamma}}, \frac{1}{1-\gamma}\big]$  uncovered in these previous papers.

\section{Model-based planning in finite-horizon MDPs}
\label{sec:main-result-finite-MDPs}




Moving beyond discounted infinite-horizon MDPs, 
our theoretical framework is also able to accommodate finite-horizon MDPs, which we detail in this section.

\subsection{Algorithm: model-based planning} 
The algorithm considered in this section is model-based planning (without reward perturbation). 
Specifically, this model-based approach returns a policy $\widehat{\pi}^{\star}=\{\widehat{\pi}^{\star}_h\}_{1\leq h\leq H}$ by means of the following two steps:
\begin{itemize}
	\item[1)] Construct the empirical MDP $\widehat{\mathcal{M}}=(\cS,\cA, \{\widehat{P}_h\}_{1\le h\le H}, \{r_h\}_{1\le h\le H}, H)$ based on the data samples in hand (see \eqref{eq:defn-empirical-P-finite} for the computation of the empirical transition kernel $\widehat{P}_h$);

	\item[2)] Run a classical dynamic programming algorithm \citep{bertsekas2017dynamic} to find an optimal policy $\widehat{\pi}^{\star}$ of the empirical MDP $\widehat{\mathcal{M}}$. 
\end{itemize}
%

\noindent
Note that $\widehat{\pi}^{\star}_h$ is an optimal policy of $\widehat{\mathcal{M}}$ at step $h$, computed by the dynamic programming algorithm calculated backward from $h=H$. Since 
 $\widehat{\pi}^{\star}_h$ is calculated solely based on what happens after step $h$, $\widehat{\pi}^{\star}_h$ is independent of the empirical transitions $\{\widehat{P}_j\}_{1\leq j< h}$.

It is noteworthy that, in contrast to the infinite-horizon counterpart in Section~\ref{sec:main-results},  we do not need to enforce random reward perturbation for this finite-horizon case.

\subsection{Theoretical guarantees and implications} 
The model-based algorithm described above turns out to be nearly minimax optimal, as asserted by the following theorem. 
\begin{theorem}[Model-based planning]
\label{Thm:sample-compl-finite}
	There exist some universal constants $c_0,c_1>0$ such that: for any $\delta > 0$ and any $0<\varepsilon \leq H$, the aforementioned policy $\widehat{\pi}^{\star}$ returned by model-based planning  obeys
	\begin{align}
		\forall (s,a,h) \in \mathcal{S} \times \mathcal{A} \times [H]: \qquad 
		V_h^{\widehat{\pi}^{\star}}(s)  \geq  V_h^\star(s) - \varepsilon
		\quad \text{and} \quad
		 Q_h^{\widehat{\pi}^{\star}}(s,a)  \geq  Q_h^\star(s,a)  - \varepsilon 
	\end{align}
	with probability at least $1-\delta$, provided that 
	the sample size for every triple $(s,a,h)$ exceeds 
	\begin{align}
	\label{EqnSamples-finite}
		N \geq \frac{c_{0}H^3\log\big(\frac{H|\mathcal{S}||\mathcal{A}|}{ 
		\delta}\big)}{\varepsilon^{2}}  . 
	\end{align}
\end{theorem}

Akin to the discounted infinite-horizon scenario, the model-based approach manages to achieve $\varepsilon$-accuracy as long as the sample size per $(s,a,h)$ exceeds the order of
\[
	\frac{H^3}{\varepsilon^2} \quad (\text{up to some log factor}).
\]
This result, which is valid for the full $\varepsilon$ range $(0,H]$, is reminiscent of the bound \eqref{EqnSamples-main}, except that the effective horizon $\frac{1}{1-\gamma}$ needs to be replaced by the horizon length $H$.  
Given that there are in total $|\cS||\cA|H$ different combinations of $(s,a,h)$, the total sample complexity is on the order of $\widetilde{O} \bigg( \frac{|\cS||\cA|H^4}{\varepsilon^2} \bigg).$
%
	
%
The quadruple scaling $H^4$ of this total sample complexity --- as opposed to the cubic scaling in the discounted infinite-horizon case --- is due to time inhomogeneity; that is, the $P_h$'s might be different across $h$, resulting in an additional $H$ factor. Again, our result kicks in as soon as the sample size satisfies
\begin{equation}
	N \geq \widetilde{\Omega} (H) ,
	\label{eq:sample-size-range-planning-finite}
\end{equation}
improving upon the sample size requirement
		\begin{equation}
			\label{eq:prior-sample-size-barrier-model-based-finite}
			N \geq \widetilde{\Omega} (H^2 ) 
		\end{equation}
 in the state-of-the-art analysis for the model-based approach \cite{yin2021near}.

\section{Other related works} 
\label{sec:related-work}

Classical analyses of reinforcement learning algorithms have largely focused on asymptotic performance (e.g.~\cite{tsitsiklis1997analysis,szepesvari1998asymptotic,tsitsiklis1994asynchronous,jaakkola1994convergence}). Leveraging the toolkit of concentration inequalities, a number of recent papers have shifted attention towards understanding the performance in the non-asymptotic and finite-time settings. A highly incomplete list includes \cite{bhandari2018finite,dalal2018finite,lakshminarayanan2018linear,srikant2019finite,xu2019two,mou2020linear,khamaru2020temporal,kaledin2020finite,kearns1999finite,bradtke1996linear,beck2012error,strehl2006pac,wainwright2019variance,gupta2019finite,even2003learning,xu2019finite,cai2019neural,azar2017minimax,jin2018q,shah2018q,yang2019theoretical,sidford2018near,chen2020finite,li2020sample,li2021breaking,qu2020finite,li2021q,yan2022efficacy,shi2022pessimistic}, a large fraction of which is concerned with model-free algorithms.

The generative model (or simulator) adopted in this paper was first proposed in \cite{kearns1999finite}, which has been invoked in \cite{kearns2002sparse,kakade2003sample,azar2013minimax,kearns1999finite,sidford2018near,azar2012sample,sidford2018variance,wang2017randomized,wainwright2019variance, khamaru2020temporal,pananjady2019value,agarwal2019optimality,yang2019sample,lattimore2012pac,li2022minimax,wang2021sample}, to name just a few. In particular, \cite{azar2013minimax} developed the minimax lower bound on the sample complexity $N= {\Omega}\big(\frac{|\mathcal{S}||\mathcal{A}|\log (|\cS||\cA|)}{(1-\gamma)^{3}\varepsilon^{2}}  \big)$ necessary for finding an $\varepsilon$-optimal policy, and  showed that, for any $\varepsilon\in(0,1)$, a model-based approach (e.g.~applying QVI or PI to the empirical MDP) can estimate the optimal Q-function to within an $\varepsilon$-accuracy given near-minimal samples. 
Note, however, that directly translating this result to the policy guarantees leads to an additional factor of $\frac{1}{1-\gamma}$ in estimation accuracy and of $\frac{1}{(1-\gamma)^2}$ in sample complexity. In light of this, \cite{azar2013minimax} further showed that a near-optimal sample complexity is possible for policy learning if the sample size is at least on the order of $ \frac{|\cS|^2|\cA|}{(1-\gamma)^2}$ which, however, is no longer sub-linear in the model complexity. A recent breakthrough  \cite{agarwal2019optimality} substantially improved the model-based guarantee with the aid of auxiliary state-absorbing MDPs,  extending the range of sample complexity to $[\frac{|\cS||\cA|\log(|\cS||\cA|)}{(1-\gamma)^2},\infty)$. Our analysis is motivated in part by  \cite{agarwal2019optimality}, but also relies on several other novel techniques to complete the picture.  



Finally, we remark that the construction of state-absorbing MDPs or state-action-absorbing MDPs falls under the category of ``leave-one-out'' type analysis, which is particularly effective in decoupling complicated statistical dependency in various statistical estimation problems \cite{el2015impact,agarwal2019optimality,chen2019spectral, chen2021spectral, Chen22931, ma2017implicit, yan2021inference, pananjady2019value}. The application of such an analysis framework to MDPs should be attributed to \cite{agarwal2019optimality}. Other applications to Markov chains include \cite{chen2019spectral,pananjady2019value}. 
More recently, several follow-up works have further generalized the leave-one-out analysis idea to accommodate broader RL settings including
offline RL \citep{li2022settling}, RL with linear function approximation \citep{wang2021sample}, and Markov games \citep{cui2021minimax,yan2022model}, and so on.


\section{Analysis: infinite-horizon MDPs}
\label{sec:analysis-all}

This section presents the key ideas for proving our main results, following an introduction of some convenient matrix notation. 

\subsection{Matrix notation and Bellman equations} 
\label{subsec:matrix-notation}

It is convenient to present our proof based on some matrix notation for MDPs.  Denoting by $\bm{e}_1,\cdots,\bm{e}_{|\mathcal{S}|}\in \mathbb{R}^{|\mathcal{S}|}$ the standard basis vectors, we can define: 
\begin{itemize}
	\item $\bm{r} \in \mathbb{R}^{|\mathcal{S}||\mathcal{A}|}$: a vector representing the reward function $r$ (so that $r_{(s,a)} =r(s,a)$ for all $(s,a)\in \cS\times \cA$). 
	\item $\bm{V}^{\pi} \in \mathbb{R}^{|\mathcal{S}|}$: a vector representing the value function $V^{\pi}$ (so that ${V}^{\pi}_s = {V}^{\pi}(s)$ for all $s\in \cS$).
	\item $\bm{Q}^{\pi} \in \mathbb{R}^{|\mathcal{S}||\mathcal{A}|}$: a vector representing the Q-function $Q^{\pi}$ (so that ${Q}^{\pi}_{(s,a)} = {Q}^{\pi}(s,a)$ for all $(s,a)\in \cS\times \cA$).
	\item  $\bm{V}^{\star} \in \mathbb{R}^{|\mathcal{S}|}$ and $\bm{Q}^{\star} \in \mathbb{R}^{|\mathcal{S}||\mathcal{A}|}$: representing the optimal value function $V^{\star}$ and optimal Q-function $Q^{\star}$.
	\item $\bm{P}\in \mathbb{R}^{|\cS||\cA|\times |\cS|}$: a matrix representing the probability transition kernel $P$, where the $(s,a)$-th row of $\bm{P}$ is a probability vector representing $P(\cdot|s,a)$.  
	Denote $\bm P_{s,a}$ as the $(s,a)$-th row of the transition matrix $\bm P$. 
	\item $\bm{\Pi}^{\pi} \in \{0,1\}^{|\cS| \times |\cS||\cA|}$: a projection matrix associated with a given policy $\pi$ taking the following form
\begin{align}
\label{eqn:bigpi}
	\bPi^{\pi} = {\scriptsize
	\begin{pmatrix}
		\e_{\pi(1)}^\top &       \textbf{0}^{\top}     &  \cdots & \textbf{0}^{\top} \\
		       \textbf{0}^{\top}     & \e_{\pi(2)}^\top &  \cdots & \textbf{0}^{\top} \\
			  \vdots  &        \vdots    & \ddots & \vdots \\	
		        \textbf{0}^{\top}    &     \textbf{0}^{\top}        &    \cdots     & \e_{\pi(|\cS|)}^\top
	\end{pmatrix}  }.
\end{align}
	\item $\bm{P}^{\pi} \in \mathbb{R}^{|\cS| |\cA|\times |\cS| |\cA|}$ and $\bm{P}_{\pi} \in \mathbb{R}^{|\cS| \times |\cS| }$: two {\em square} probability transition matrices induced by the policy $\pi$ over the state-action pairs and the states respectively, defined by
	\begin{equation}
	\label{eqn:ppivq}
		\Pq \defn \bP \bPi^{\pi} \qquad \text{and} \qquad \Pv \defn \bPi^{\pi} \bP .
	\end{equation}

\item $\bm{r}_{\pi} \in \mathbb{R}^{|\cS|}$: a reward vector restricted to the actions chosen by the policy $\pi$, namely,  $r_{\pi}(s) = r(s,\pi(s))$ for all $s\in \cS$ (or simply, $\bm{r}_{\pi}=\bm{\Pi}^{\pi}\bm{r}$). 
\end{itemize}

\noindent Armed with the above matrix notation, we can write, for any policy $\pi$, the {\em Bellman consistency equation} as
\begin{equation} \label{eq:bellman_consistency_q}
\bm{Q}^{\pi}  = \bm{r} + \gamma \bm{P} \bm{V}^{\pi} = \bm{r} + \gamma \Pq  \bm{Q}^{\pi} ,
\end{equation}
which implies that
\begin{align}
	& \bm{Q}^{\pi} = (\bm{I} -  \gamma \Pq)^{-1} \bm{r} ; \\
	\bm{V}^{\pi} =  \bm{r}_{\pi} + \gamma \Pv & \bm{V}^{\pi} 
	\qquad \text{and} \qquad
	\bm{V}^{\pi}  = (\bm{I} -  \gamma \Pv)^{-1}    \bm{r}_{\pi} .  \label{eq:bellman_consistency_v}
\end{align}
%
For a vector $\bm{V}=[V_i]_{1\leq i\leq |\mathcal{S}|}\in\real^{|\cS|}$, we define the vector $\Var_{\bP}(\bm{V}) \in \real^{|\cS||\cA|}$ whose entries are given by 
\begin{align*}
	\forall (s,a)\in \cS \times \cA, \qquad \big[\Var_{\bP}(\bm{V})\big]_{(s,a)}: = \sum_{s'\in \cS} P(s'| s,a) V_{s'}^2 
	-  \Big( \sum_{s'\in \cS}P(s'| s,a) V_{s'} \Big)^2,
\end{align*}
i.e.~the variance of $\bm{V}$ w.r.t.~$P(\cdot| s,a)$. This can be expressed using our matrix notation as follows
\begin{equation}
\Var_{\bP}(\bm{V}) = \bm{P} (\bm{V}\circ\bm{V})-(\bm{P} \bm{V})\circ(\bm{P} \bm{V}).
	\label{eq:matrix-VarP-V-expression}
\end{equation} 
Similarly, for any given policy $\pi$ we define 
\begin{equation}
	\Var_{\bP_{\pi}}(\bm{V}) =  \bm{\Pi}^{\pi} \Var_{\bP}(\bm{V}) = \bm{P}_{\pi} (\bm{V}\circ\bm{V})-(\bm{P}_{\pi} \bm{V})\circ(\bm{P}_{\pi} \bm{V}) \in \mathbb{R}^{|\mathcal{S}|}.
	\label{eq:matrix-VarPpi-V-expression}
\end{equation} 

We shall also define $\widehat{\bm{V}}^{\pi}$, $\widehat{\bm{Q}}^{\pi}$, $\widehat{\bm{V}}^{\star}$, $\widehat{\bm{Q}}^{\star}$, $\widehat{\bm{P}}$, $\widehat{\bm{P}}^{\pi}$, $\widehat{\bm{P}}_{\pi}$,  $\Var_{\widehat{\bP}}(\bm{V})$, $\Var_{\widehat{\bP}_{\pi}}(\bm{V})$ w.r.t.~the empirical MDP $\widehat{\mathcal{M}}$ in an analogous fashion.

\subsection{Analysis: model-based policy evaluation}
\label{sec:estimation-error-fixed-policy}

We start with the simpler task of policy evaluation, which also plays a crucial role in the analysis of planning. 
To establish our guarantees in Theorem~\ref{Thm:very-wise-evaluation}, we aim to prove the following result. Here, we recall that the true value function under a policy $\pi$ and the model-based empirical estimate are given respectively by
\begin{align}
	\bm{V}^{\pi}=(\bm{I}-\gamma\Pv)^{-1}\bm{r}_{\pi} \qquad \text{and} \qquad \widehat{\bm{V}}^{\pi}=(\bm{I}-\gamma\Phatv)^{-1}\bm{r}_{\pi}.
	\label{eq:Vpi-Vhatpi-notation}
\end{align}
\begin{lemma} 
\label{lemma:fixed-policy-error} 
	Fix any policy $\pi$. Consider any $0<\delta<1$, and suppose $N\geq\frac{32e^2}{1-\gamma} \log\big(\frac{4|\mathcal{S}|\log (\frac{e}{1-\gamma})}{\delta}\big) $.
	Then with probability at least $1-\delta$, the vectors defined in \eqref{eq:Vpi-Vhatpi-notation} obey
\begin{align}
	\big\|\widehat{\bm{V}}^{\pi}-\bm{V}^{\pi}\big\|_{\infty} & \leq 
 	4 \gamma \sqrt{\frac{2\log\big(\frac{4|\mathcal{S}|\log(\frac{e}{1-\gamma})}{\delta}\big)}{N}} 
 	  \Big\|(\bm{I}-\gamma\Pv)^{-1}\sqrt{\mathsf{Var}_{\Pv}\big[\bm{V}^{\pi}\big]}\Big\|_{\infty} + 
	\frac{2\gamma \log\big(\frac{4|\mathcal{S}|\log(\frac{e}{1-\gamma})}{\delta}\big)}{(1-\gamma)N}\big\|\bm{V}^{\pi} \big\|_{\infty} 
	\nonumber\\ 
	& \leq 6  \sqrt{\frac{2\log\big(\frac{4|\mathcal{S}|\log(\frac{e}{1-\gamma})}{\delta}\big)}{N(1-\gamma)^3}}. \label{eq:minimax-V}
\end{align}
\end{lemma}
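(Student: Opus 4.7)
The plan is to start from the identity
\[
\Vhatpi - \bm V^\pi \;=\; \gamma(\bm I - \gamma \Phatv)^{-1}(\Phatv - \Pv)\bm V^\pi,
\]
obtained by subtracting the Bellman consistency equations $\bm V^\pi = \bm r_\pi + \gamma \Pv \bm V^\pi$ and $\Vhatpi = \bm r_\pi + \gamma \Phatv \Vhatpi$. The critical feature is that $\bm V^\pi$ is a \emph{deterministic} vector (it depends only on the unknown true MDP, not on the samples), while all of the sampling randomness lives in $\Phatv$. Since $(\bm I - \gamma \Phatv)^{-1} = \sum_{k\geq 0}\gamma^k \Phatv^{k}$ is entry-wise non-negative, I would immediately pass to the pointwise inequality $|\Vhatpi - \bm V^\pi| \leq \gamma(\bm I - \gamma \Phatv)^{-1}|(\Phatv - \Pv)\bm V^\pi|$.

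First I would apply an entry-wise Bernstein inequality to $(\Phatv - \Pv)\bm V^\pi$: each coordinate is a mean-zero average of $N$ bounded i.i.d.\ random variables with variance $\Var_{\Pv}[\bm V^\pi](s)$ and range $\|\bm V^\pi\|_\infty$. A union bound over the $|\cS|$ states, together with a dyadic peeling in the unknown variance scale (which produces the $\log\log\frac{e}{1-\gamma}$ contribution), yields
\[
|(\Phatv - \Pv)\bm V^\pi|(s) \lesssim \sqrt{\tfrac{L\,\Var_{\Pv}[\bm V^\pi](s)}{N}} + \tfrac{L\,\|\bm V^\pi\|_\infty}{N}, \quad L := \log\!\tfrac{4|\cS|\log(e/(1-\gamma))}{\delta}.
\]

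The main technical step is then to replace the empirical resolvent $(\bm I - \gamma \Phatv)^{-1}$ by its deterministic counterpart $(\bm I - \gamma \Pv)^{-1}$, so that the right-hand side depends only on the true MDP. I would combine the resolvent identity
\[
(\bm I - \gamma \Phatv)^{-1} = (\bm I - \gamma \Pv)^{-1} + \gamma(\bm I - \gamma \Pv)^{-1}(\Phatv - \Pv)(\bm I - \gamma \Phatv)^{-1}
\]
with a second application of Bernstein, now directed at the deterministic test vector $\bm w := (\bm I - \gamma \Pv)^{-1}\sqrt{\Var_{\Pv}[\bm V^\pi]}$. Under the hypothesis $N \geq \frac{32e^{2}}{1-\gamma}\log(\cdot)$, the cross term is only a constant fraction of the leading term and is absorbed into the prefactor, which explains the coefficient $4$ (rather than $2$) in the statement. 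The second, instance-independent inequality then follows by invoking the classical ``total-variance'' bound
\[
\bigl\|(\bm I - \gamma \Pv)^{-1}\sqrt{\Var_{\Pv}[\bm V^\pi]}\bigr\|_\infty \leq \sqrt{\tfrac{2}{(1-\gamma)^{3}}},
\]
which is derived by identifying $\gamma^{2}\sum_{k\geq 0}\gamma^{2k}\Pv^{k}\Var_{\Pv}[\bm V^\pi]$ with the variance of the discounted cumulative reward (itself bounded by $\|\bm V^\pi\|_\infty^{2}\leq 1/(1-\gamma)^{2}$), followed by a Cauchy--Schwarz step that exploits $(\bm I - \gamma \Pv)^{-1}\one = \one/(1-\gamma)$. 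The deterministic Bernstein term is then dominated by the variance term under the sample-size assumption.

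The hard part is precisely the resolvent replacement. Because $\|\Phatv - \Pv\|_{\infty\to\infty}$ is of constant order (not $O(\sqrt{\log/N})$), a naive Neumann expansion of $(\bm I - \gamma \Phatv)^{-1}$ around $(\bm I - \gamma \Pv)^{-1}$ diverges for $\gamma$ close to $1$ and cannot be closed under the weak sample-size hypothesis $N \gtrsim \log/(1-\gamma)$. The workaround is to apply Bernstein concentration only against carefully chosen \emph{deterministic} test vectors, so that each perturbation inherits the small factor $\sqrt{\log/N}$ rather than the $O(1)$ operator norm of $\Phatv - \Pv$; this turns the perturbation analysis into a self-bounding/iterated argument whose residual shrinks multiplicatively in the stated sample-size regime.
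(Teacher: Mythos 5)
Your overall strategy coincides with the paper's: start from $\Vhatpi-\bm{V}^{\pi}=\gamma(\bm{I}-\gamma\Phatv)^{-1}(\Phatv-\Pv)\bm{V}^{\pi}$, apply Bernstein to the deterministic vector $\bm{V}^{\pi}$, and iterate the expansion so that each application of concentration is against a deterministic test vector. There is, however, a genuine gap at the one point where the proof has to work hard: you assert that the iterated argument has a ``residual [that] shrinks multiplicatively in the stated sample-size regime,'' but you never justify the contraction, and the standard tools do not deliver it. Concretely, the $l$-th level of the iteration produces a term of size $\gamma\sqrt{\beta_1/N}\cdot\big\|(\bm{I}-\gamma\Pv)^{-1}\sqrt{\Var_{\Pv}[\bm{V}^{(l)}]}\big\|_{\infty}$, where $\bm{V}^{(l)}$ is the $l$-th deterministic test vector (your $\bm{w}$ is $\bm{V}^{(1)}$). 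The trivial bound $\sqrt{\Var_{\Pv}[\bm{V}^{(l)}]}\leq\|\bm{V}^{(l)}\|_{\infty}\bm{1}$ gives a growth factor of $\frac{1}{1-\gamma}$ per level, which requires $N\gtrsim\beta_1/(1-\gamma)^{2}$ to contract; the classical total-variance bound of Azar et al.\ (the $(1-\gamma)^{-3/2}$-type inequality you invoke at the end, which is stated in terms of $\|\bm{r}\|_{\infty}$) gives a growth of order $(1-\gamma)^{-3/4}$ per level, which requires $N\gtrsim\beta_1/(1-\gamma)^{3/2}$. In either case the geometric series you need diverges precisely in the new regime $N\asymp\frac{1}{1-\gamma}\log(\cdot)$ that the lemma is designed to cover. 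The paper closes this gap with a new estimate (Lemma~\ref{lemma:VarP-V-bound}): $\|(\bm{I}-\gamma\Pv)^{-1}\sqrt{\Var_{\Pv}[\bm{V}]}\|_{\infty}\leq\frac{4}{\gamma\sqrt{1-\gamma}}\|\bm{V}\|_{\infty}$ for any $\bm{V}=(\bm{I}-\gamma\Pv)^{-1}\bm{r}_{\pi}$ with $\bm{r}_{\pi}\geq\bm{0}$, which yields per-level growth of only $(1-\gamma)^{-1/2}$ and hence a contraction ratio $4\sqrt{\beta_1/((1-\gamma)N)}\leq 1/2$ exactly under the stated hypothesis $N\geq\frac{32e^{2}}{1-\gamma}\log(\cdot)$. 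This is the missing ingredient; without it (or an equivalent), your iteration cannot be closed.

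A secondary inaccuracy: the $\log\log\frac{e}{1-\gamma}$ in the failure probability does not come from ``dyadic peeling in the unknown variance scale''---the variances $\Var_{\Pv}[\bm{V}^{(l)}]$ are deterministic functionals of the true MDP, so there is nothing to peel over. It comes from union-bounding the Bernstein inequality over the $m=\log\frac{e}{1-\gamma}$ levels of the iteration (one deterministic test vector per level, each over $|\mathcal{S}|$ states), with the chain truncated at level $m$ by a crude $\ell_{1}$-operator-norm bound. This is cosmetic next to the gap above, but it signals that the actual structure of the iteration---a fixed, logarithmically long sequence $\bm{V}^{(0)},\dots,\bm{V}^{(m)}$ defined by $\bm{V}^{(l)}=(\bm{I}-\gamma\Pv)^{-1}\sqrt{\Var_{\Pv}[\bm{V}^{(l-1)}]}$---is not yet pinned down in your write-up.
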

\begin{proof}
	The key proof idea is to resort to a high-order successive expansion of  $\widehat{\bm{V}}^{\pi}-\bm{V}^{\pi}$, followed by fine-grained analysis of each term up to a certain logarithmic order. 
	See Appendix~\ref{sec:proof-lemma-fixed-policy-error}. 
\end{proof}
Clearly, Theorem~\ref{Thm:very-wise-evaluation} is a straightforward consequence of  Lemma~\ref{lemma:fixed-policy-error}. 
Further, we strengthen the result by providing an additional instance-dependent bound (see the first line of \eqref{eq:minimax-V} that depends on the true instance $\bm{P}_{\pi},\bm{V}^{\pi}$), which is often tighter than the worst-case bound stated in the second line of \eqref{eq:minimax-V}. 
Our contribution can be better understood when compared with \cite{pananjady2019value}.
Assuming that there is no noise in the rewards, our instance-dependent guarantee  matches \citet[Theorem~1(a)]{pananjady2019value} up to some $\log\log\frac{1}{1-\gamma}$ factor, while being capable of covering the full sample size range $N\geq \widetilde{\Omega}(\frac{1}{1-\gamma})$. In contrast, \citet[Theorem 1]{pananjady2019value} is only valid when $N\geq \widetilde{\Omega}(\frac{1}{(1-\gamma)^2}$).

\paragraph{Proof ideas.} We now briefly and informally describe the key proof ideas. 
As a starting point, the elementary identities \eqref{eq:Vpi-Vhatpi-notation}
allow us to obtain
\begin{align}
\widehat{\bm{V}}^{\pi}-\bm{V}^{\pi} & =\big(\bm{I}-\gamma\widehat{\bm{P}}_{\pi}\big)^{-1}\bm{r}_{\pi}-\bm{V}^{\pi} \nonumber\\
 & =\big(\bm{I}-\gamma\widehat{\bm{P}}_{\pi}\big)^{-1}\big(\bm{I}-\gamma\bm{P}_{\pi}\big)\bm{V}^{\pi}-\big(\bm{I}-\gamma\widehat{\bm{P}}_{\pi}\big)^{-1}\big(\bm{I}-\gamma\widehat{\bm{P}}_{\pi}\big)\bm{V}^{\pi} \nonumber\\
 & =\gamma\big(\bm{I}-\gamma\widehat{\bm{P}}_{\pi}\big)^{-1}\big(\widehat{\bm{P}}_{\pi}-\bm{P}_{\pi}\big)\bm{V}^{\pi}.
	\label{eq:V-basic-expansion}
\end{align}
Due to the complicated dependency between $(\bm{I}-\gamma\widehat{\bm{P}}_{\pi})^{-1}$ and $\big(\widehat{\bm{P}}_{\pi}-\bm{P}_{\pi}\big)\bm{V}^{\pi}$, a natural strategy is to control these two terms separately and then to combine bounds;  see \citet[Lemma 5]{agarwal2019optimality} for an introduction. This simple approach, however, leads to sub-optimal statistical guarantees.

In order to refine the statistical analysis, we propose to further expand \eqref{eq:V-basic-expansion} in a similar way to deduce
\begin{align}
\eqref{eq:V-basic-expansion} & =\gamma\big(\bm{I}-\gamma\bm{P}_{\pi}\big)^{-1}\big(\widehat{\bm{P}}_{\pi}-\bm{P}_{\pi}\big)\bm{V}^{\pi}+\gamma\left\{ \big(\bm{I}-\gamma\widehat{\bm{P}}_{\pi}\big)^{-1}-\big(\bm{I}-\gamma\bm{P}_{\pi}\big)^{-1}\right\} \big(\widehat{\bm{P}}_{\pi}-\bm{P}_{\pi}\big)\bm{V}^{\pi}\nonumber \\
 & =\gamma\big(\bm{I}-\gamma\bm{P}_{\pi}\big)^{-1}\big(\widehat{\bm{P}}_{\pi}-\bm{P}_{\pi}\big)\bm{V}^{\pi}+
	\gamma^{2}\big(\bm{I}-\gamma\widehat{\bm{P}}_{\pi}\big)^{-1} \big(\widehat{\bm{P}}_{\pi}-\bm{P}_{\pi}\big) \big(\bm{I} - \gamma \bm{P}_{\pi} \big)^{-1} \big(\widehat{\bm{P}}_{\pi}-\bm{P}_{\pi}\big) \bm{V}^{\pi},\label{eq:eq:V-basic-expansion-2nd}
\end{align}
where the last line holds due to the same reason as \eqref{eq:V-basic-expansion} (basically it can be seen by replacing $\bm{r}_{\pi}$ with $\big(\widehat{\bm{P}}_{\pi}-\bm{P}_{\pi}\big)\bm{V}^{\pi}$ in \eqref{eq:V-basic-expansion}). This can be viewed as a ``second-order'' expansion, with \eqref{eq:V-basic-expansion} being a ``first-order'' counterpart. 
The advantage is that: the first term in \eqref{eq:eq:V-basic-expansion-2nd} becomes easier to cope with than its counterpart \eqref{eq:V-basic-expansion}, owing to the independence between $(\bm{I}-\gamma{\bm{P}}_{\pi})^{-1}$ and $\big(\widehat{\bm{P}}_{\pi}-\bm{P}_{\pi}\big)\bm{V}^{\pi} $. However, the second term in \eqref{eq:eq:V-basic-expansion-2nd} remains difficult to control optimally. To remedy this issue, we shall continue to expand it to higher order (up to some logarithmic order), which eventually allows for optimal control of the estimation error. 

Another crucial issue is that: in order to obtain fine-grained analyses on each term in the expansion (except for the first-order term), 
a common approach is to combine the Bernstein inequality with a classical entrywise bound on a quantity taking the form $(\bm{I}-\gamma\bm{P}_{\pi})^{-1}\sqrt{\mathsf{Var}_{\bm{P}_{\pi}}(\bm {V})}$ (which dates back to \cite{azar2013minimax}). Such a classical bound in prior literature, however, is not sufficiently tight for our purpose, which calls for refinement; see Lemma~\ref{lemma:VarP-V-bound}. Details are deferred to Appendix~\ref{sec:proof-lemma-fixed-policy-error}.

%
%
%
%
%

\subsection{Analysis: perturbed model-based planning}
\label{sec:analysis}


This subsection moves on to establishing our theory for model-based planning (cf.~Theorem~\ref{Thm:sample-compl-main}) and 
outlines the key ideas. In what follows, we shall start by analyzing the unperturbed version, which will elucidate the role of reward perturbation in our analysis.

We first make note of the following elementary decomposition:
\begin{align}
	\Vstar - \bm{V}^{\pihatstar} &= \big( \Vhat^{\pihatstar} - \bm{V}^{\pihatstar}\big) + \big( \Vhat^{\pistar}- \Vhat^{\pihatstar} \big)
	+ \big( \Vstar - \Vhat^{\pistar} \big) \nonumber\\
	&\leq  \big(\Vhat^{\pihatstar} - \bm{V}^{\pihatstar} \big) + \big( \bm{V}^{\pistar} - \Vhat^{\pistar}  \big), 
	\label{eq:Vhat-Vstar-pistar-tot}
\end{align}
where the inequality follows from the optimality of $\pihatstar$ w.r.t.~$\Vhat$ (so that $\Vhat^{\pistar} \leq \Vhat^{\pihatstar}$) and the definition $\bm{V}^{\star}=\bm{V}^{\pistar}$. This leaves us with two terms to control.

\paragraph{Step 1: bounding $\|\bm{V}^{\pistar} - \Vhat^{\pistar}\|_{\infty}$.} Given that $\pi^{\star}$ is independent of the data, we can carry out this step using Lemma~\ref{lemma:fixed-policy-error}. Specifically, taking $\pi=\pi^{\star}$ in Lemma~\ref{lemma:fixed-policy-error}  yields that, with probability at least $1-\delta$, 
\begin{align}
	\big\|\widehat{\bm{V}}^{\pi^{\star}}-\bm{V}^{\pi^{\star}}\big\|_{\infty} & \leq 6  \sqrt{\frac{2\log\big(\frac{4|\mathcal{S}|\log\frac{e}{1-\gamma}}{\delta}\big)}{N(1-\gamma)^3}}.
	\label{eq:optimal-Vpistar-bound}
\end{align}
%


\paragraph{Step 2: bounding $\|\Vhat^{\pihatstar} - \bm{V}^{\pihatstar}\|_{\infty}$.} 
Extending the result in Step 1 to $\|\Vhat^{\pihatstar} - \bm{V}^{\pihatstar}\|_{\infty}$ is considerably more challenging, primarily due to the complicated statistical dependency between $(\bm{V}^{\pihatstar},\Vhat^{\pihatstar})$ and the data matrix $\widehat{\bm{P}}$. The recent work \cite{agarwal2019optimality} developed a clever ``leave-one-out'' type argument by constructing some auxiliary state-absorbing MDPs to decouple the statistical dependency when $\varepsilon <  1/\sqrt{1-\gamma}$. However, their argument falls short of accommodating the full range of $\varepsilon$.  To address this challenge, our analysis consists of the following two steps, both of which require new ideas beyond \cite{agarwal2019optimality}. 
\begin{itemize}
	\item {\em Decoupling statistical dependency between $\pihatstar$ and $\widehat{\bm{P}}$.} Instead of attempting to decouple the statistical dependency between $\Vhat^{\pihatstar}$ and $\Phat$ as in \cite{agarwal2019optimality}, we focus on decoupling the statistical dependency between the policy $\pihatstar$ and $\widehat{\bm{P}}$. If this can be achieved, then the proof strategy adopted in Step 1 for a fixed policy becomes applicable (see Section~\ref{sec:value-estimation-Bernstein}). A key ingredient of this step lies in the construction of a collection of auxiliary state-action-absorbing MDPs (motivated by \cite{agarwal2019optimality}), which allows us to get hold of  $\|\bm{V}^{\pihatstar} - \Vhat^{\pihatstar}\|_{\infty}$. See Section~\ref{sec:unique-policy} for details, with a formal bound delivered in Lemma~\ref{lem:gap-separation-condition}.  

	\item {\em Tie-breaking via reward perturbation.} A shortcoming of the above-mentioned approach, however, is that it relies crucially on the separability of $\pihatstar$ from other policies; in other words, the proof might fail if $\pihatstar$ is non-unique or not sufficiently distinguishable from others. Consequently, it remains to ensure that the optimal policy $\pihatstar$ stands out from all the rest for all MDPs of interest. As it turns out, this can be guaranteed with high probability by slightly perturbing the reward function so as to break the ties. See Section~\ref{sec:tie-breaking} for details. 

\end{itemize}
In the sequel, we shall flesh out these key ideas.

\subsubsection{Value function estimation for a policy obeying Bernstein-type conditions}
\label{sec:value-estimation-Bernstein}

Before discussing how to decouple statistical dependency, we record a useful result that plays an important role in the analysis. Specifically, 
Lemma~\ref{lemma:fixed-policy-error} can be generalized beyond the family of fixed policies (namely, those independent of $\widehat{\bm{P}}$), as long as a certain Bernstein-type condition --- to be formalized in \eqref{eq:Bernstein-Vl-general} --- is satisfied. To make it precise, we need to introduce a set of auxiliary vectors as follows
\begin{align}
\label{defn-r-v-iterate}
\begin{array}{ll}
	\bm{r}^{(0)}:=\bm{r}_{\pi}, & \bm{V}^{(0)}:=(\bm{I}-\gamma\Pv)^{-1}\bm{r}^{(0)}, \\
\bm{r}^{(l)}:=\sqrt{\mathsf{Var}_{\Pv}\big[\bm{V}^{(l-1)}\big]}, \quad & \bm{V}^{(l)}:=(\bm{I}-\gamma\Pv)^{-1}\bm{r}^{(l)},  \qquad   l\geq 1.
\end{array}
\end{align}
Our generalization of Lemma~\ref{lemma:fixed-policy-error} is as follows, which does \textit{not} require statistical independence between the policy $\pi$ and the data  $\widehat{\bm{P}}$. Here, we remind the reader of the notation  $|\bm{z}|:=[|z_1|,\cdots,|z_n|]^{\top}$ and $\sqrt{\bm{z}}:=[\sqrt{z_1},\cdots,\sqrt{z_n}]^{\top}$  for any vector $\bm{z}\in \mathbb{R}^n$.  

\begin{lemma}
\label{lemma:dependent-policy-error}
	Suppose that there exists some quantity $\beta_1>0$ such that $\{\bm{V}^{(l)}\}$ (cf.~\eqref{defn-r-v-iterate}) obeys
\begin{equation}
	\left|\big(\Phatv-\Pv\big)\bm{V}^{(l)}\right|\leq\sqrt{\frac{\beta_1}{N}}\sqrt{\mathsf{Var}_{\Pv}\big[\bm{V}^{(l)}\big]}+\frac{\beta_1 \left\|\bm{V}^{(l)}\right\|_{\infty}}{N}\bm{1}, \qquad \text{for all }0\leq l\leq \log \Big(\frac{e}{1-\gamma}\Big) .
	\label{eq:Bernstein-Vl-general}
\end{equation}
Suppose that $N>\frac{16e^{2}}{1-\gamma}\beta_{1}$. Then the vectors $\bm{V}^{\pi}=(\bm{I}-\gamma\Pv)^{-1}\bm{r}_{\pi}$ and $\widehat{\bm{V}}^{\pi}=(\bm{I}-\gamma\Phatv)^{-1}\bm{r}_{\pi}$ satisfy  
\begin{align}
	\big\|\widehat{\bm{V}}^{\pi}-\bm{V}^{\pi}\big\|_{\infty} 
	\leq \frac{6}{1-\gamma} \sqrt{\frac{\beta_{1}}{N(1-\gamma)}}. 
\end{align}
\end{lemma}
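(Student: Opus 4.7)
My plan is to mirror the strategy used for Lemma~\ref{lemma:fixed-policy-error} in Section~\ref{sec:estimation-error-fixed-policy}, but to replace every invocation of the Bernstein inequality by a direct appeal to the hypothesis \eqref{eq:Bernstein-Vl-general}; this is what frees us from needing $\pi$ to be independent of $\widehat{\bm P}$. Write $\bm A = \bm I - \gamma \bm P_\pi$, $\widehat{\bm A} = \bm I - \gamma \widehat{\bm P}_\pi$, and $\bm D = \widehat{\bm P}_\pi - \bm P_\pi$. The identity \eqref{eq:V-basic-expansion} gives $\widehat{\bm V}^\pi - \bm V^\pi = \gamma \widehat{\bm A}^{-1}\bm D\,\bm V^{(0)}$. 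Since $\widehat{\bm A}^{-1} = \sum_{t\ge 0}\gamma^t \widehat{\bm P}_\pi^{\,t}$ is entrywise nonnegative, one may pass to absolute values componentwise: $\bigl|\widehat{\bm V}^\pi-\bm V^\pi\bigr|\le \gamma\widehat{\bm A}^{-1}\bigl|\bm D\bm V^{(0)}\bigr|$. Applying the hypothesis \eqref{eq:Bernstein-Vl-general} at level $l=0$ and using $\widehat{\bm A}^{-1}\bm 1 = \bm 1/(1-\gamma)$ then yields
\[
\bigl|\widehat{\bm V}^\pi-\bm V^\pi\bigr|\le \rho\,\widehat{\bm A}^{-1}\bm r^{(1)} + \tfrac{\gamma \beta_1 \|\bm V^{(0)}\|_\infty}{N(1-\gamma)}\bm 1,
\qquad \rho := \gamma\sqrt{\beta_1/N}.
\]

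The next step is the unrolling. The crucial identity is $\widehat{\bm A}^{-1}\bm r^{(l+1)} = \bm V^{(l+1)} + \gamma\widehat{\bm A}^{-1}\bm D\,\bm V^{(l+1)}$, which follows from the same algebraic manipulation as \eqref{eq:V-basic-expansion}. Since the left-hand side and $\bm V^{(l+1)}$ are nonnegative, the triangle inequality plus the Bernstein-type hypothesis at level $l+1$ gives
\[
\widehat{\bm A}^{-1}\bm r^{(l+1)} \le \bm V^{(l+1)} + \rho\,\widehat{\bm A}^{-1}\bm r^{(l+2)} + \tfrac{\gamma\beta_1\|\bm V^{(l+1)}\|_\infty}{N(1-\gamma)}\bm 1.
\]
Cascading this recursion $L := \log(e/(1-\gamma))$ times produces the clean expansion
\[
\bigl|\widehat{\bm V}^\pi-\bm V^\pi\bigr|\le \sum_{l=1}^{L}\rho^{\,l}\bm V^{(l)} + \sum_{l=0}^{L-1}\rho^{\,l}\,\tfrac{\gamma\beta_1\|\bm V^{(l)}\|_\infty}{N(1-\gamma)}\bm 1 + \rho^{L+1}\widehat{\bm A}^{-1}\bm r^{(L+1)}.
\]

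It then remains to control each piece of this sum in $\|\cdot\|_\infty$. For the tail, $\widehat{\bm A}^{-1}\bm r^{(L+1)}\le \|\bm r^{(L+1)}\|_\infty/(1-\gamma)\cdot\bm 1\le \|\bm V^{(L)}\|_\infty/(1-\gamma)\cdot\bm 1$; the sample-size condition $N\ge 16e^2\beta_1/(1-\gamma)$ forces $\rho\le \sqrt{1-\gamma}/(4e)$, and the particular choice $L=\log(e/(1-\gamma))$ makes $\rho^{L+1}$ negligibly small. For the additive terms, the factor $\beta_1/N$ buys an extra power of $1-\gamma$, so they are dominated by the main sum once one uses the crude bound $\|\bm V^{(l)}\|_\infty\le (1-\gamma)^{-(l+1)}$ together with the geometric decay of $\rho^{l}$.

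The one step requiring real care, and where I expect the main obstacle to lie, is bounding the leading sum $\sum_{l=1}^{L}\rho^{l}\|\bm V^{(l)}\|_\infty$ tightly enough to recover the advertised rate $\frac{6}{1-\gamma}\sqrt{\beta_1/(N(1-\gamma))}$. Naively, $\|\bm V^{(l)}\|_\infty$ can grow like $(1-\gamma)^{-(l+1)}$, which would prevent the series from summing. To overcome this, I would invoke the refined variance-based bound (Lemma~\ref{lemma:VarP-V-bound} flagged in Section~\ref{sec:estimation-error-fixed-policy}) of the form $\|(\bm I-\gamma\bm P_\pi)^{-1}\sqrt{\mathsf{Var}_{\bm P_\pi}[\bm V]}\|_\infty\lesssim \sqrt{\|\bm V\|_\infty/(1-\gamma)^3}$, which when iterated yields a bound on $\|\bm V^{(l)}\|_\infty$ that grows only like a fractional power of $(1-\gamma)^{-1}$. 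Combined with the factor $\rho^l\le [\sqrt{1-\gamma}/(4e)]^l$, this makes the sum a convergent geometric-type series whose leading term, coming from $l=1$, produces exactly $\rho\,\|\bm V^{(1)}\|_\infty\asymp \sqrt{\beta_1/(N(1-\gamma)^3)}$. Putting these pieces together and absorbing all lower-order contributions into universal constants yields the claimed inequality.
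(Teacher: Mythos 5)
Your architecture is exactly the paper's: the same first-order identity $\widehat{\bm V}^{\pi}-\bm V^{\pi}=\gamma(\bm I-\gamma\Phatv)^{-1}(\Phatv-\Pv)\bm V^{(0)}$, the same recursive unrolling through the auxiliary vectors $\bm V^{(l)}$ of \eqref{defn-r-v-iterate} with the hypothesis \eqref{eq:Bernstein-Vl-general} substituted for Bernstein's inequality at each level, the same truncation at $m=\log\frac{e}{1-\gamma}$, and the same reliance on Lemma~\ref{lemma:VarP-V-bound} to tame the growth of $\|\bm V^{(l)}\|_{\infty}$. (Your recursion on $\widehat{\bm A}^{-1}\bm r^{(l+1)}=\widehat{\bm V}^{(l+1)}$ is just the paper's recursion on $\|\widehat{\bm V}^{(l)}-\bm V^{(l)}\|_{\infty}$ written additively.) However, the quantitative engine is misquoted in a way that breaks the argument as written. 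Lemma~\ref{lemma:VarP-V-bound} states $\|(\bm I-\gamma\bm P_{\pi})^{-1}\sqrt{\Var_{\bm P_{\pi}}(\bm V)}\|_{\infty}\le\frac{4}{\gamma\sqrt{1-\gamma}}\|\bm V\|_{\infty}$, i.e.\ it is \emph{linear} in $\|\bm V\|_{\infty}$ with a $(1-\gamma)^{-1/2}$ prefactor, not of the form $\sqrt{\|\bm V\|_{\infty}/(1-\gamma)^{3}}$. The linear form is what makes the per-level amplification $\rho\cdot\frac{4}{\gamma\sqrt{1-\gamma}}=4\sqrt{\beta_1/(N(1-\gamma))}\le 1/e$ under $N>\frac{16e^2}{1-\gamma}\beta_1$, so that $\rho^{l}\|\bm V^{(l)}\|_{\infty}\le\rho\,e^{-(l-1)}\|\bm V^{(1)}\|_{\infty}$ sums geometrically; with your stated form one only gets $\|\bm V^{(1)}\|_{\infty}\lesssim(1-\gamma)^{-2}$, and the leading term $\rho\|\bm V^{(1)}\|_{\infty}$ comes out as $\sqrt{\beta_1/(N(1-\gamma)^{4})}$, a factor $(1-\gamma)^{-1/2}$ worse than claimed. (To land on $\sqrt{\beta_1/(N(1-\gamma)^3)}$ one must bound $\|\bm V^{(1)}\|_{\infty}$ by the Azar-type inequality \eqref{eq:Azar-bound} or by Lemma~\ref{lemma:VarP-V-bound} applied to $\bm V^{(0)}$ with $\|\bm V^{(0)}\|_{\infty}\le\frac{1}{1-\gamma}\|\bm r\|_{\infty}$, as the paper does in \eqref{eqn:for-lemma-3}.)

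A second step that fails as stated is your dismissal of the additive terms $\sum_{l}\rho^{l}\frac{\gamma\beta_1\|\bm V^{(l)}\|_{\infty}}{N(1-\gamma)}$ (and of the tail $\rho^{L+1}\widehat{\bm A}^{-1}\bm r^{(L+1)}$) via the crude bound $\|\bm V^{(l)}\|_{\infty}\le(1-\gamma)^{-(l+1)}$ ``together with the geometric decay of $\rho^{l}$.'' The relevant ratio is $\rho/(1-\gamma)\le\frac{\gamma}{4e\sqrt{1-\gamma}}$, which exceeds $1$ once $\gamma$ is close to $1$, so $\rho^{l}(1-\gamma)^{-(l+1)}$ is \emph{not} geometrically decaying and the partial sums up to $l=\log\frac{e}{1-\gamma}$ blow up like $\exp(\frac12(\log\frac{1}{1-\gamma})^2)$. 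The fix is the one the paper uses everywhere: apply the (correct, linear) Lemma~\ref{lemma:VarP-V-bound} iteratively to get $\|\bm V^{(l)}\|_{\infty}\le(\frac{4}{\gamma\sqrt{1-\gamma}})^{l-1}\|\bm V^{(1)}\|_{\infty}$ for every $l\ge1$, and use this refined growth bound in the additive terms and the tail as well, not just in the leading sum. With these two corrections your proof closes and coincides with the paper's.
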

While the Bernstein-type condition \eqref{eq:Bernstein-Vl-general} clearly holds for some reasonably small $\beta_1$ if $\pi$ is independent of $\widehat{\bm{P}}$, it might remain valid if $\pi$  exhibits fairly ``weak'' statistical dependency on the data samples. This is a key step that paves the way for our subsequent analysis of $\pihatstar$.

\subsubsection{Decoupling statistical dependency via $(s,a)$-absorbing MDPs}
\label{sec:unique-policy}

We are now positioned to demonstrate how to control $\big\|\widehat{\bm{V}}^{\pihatstar}-\bm{V}^{\pihatstar}\big\|_{\infty}$ 
w.r.t.~the optimal policy $\pihatstar$ to $\widehat{\bm V}.$ A crucial technical challenge lies in how to decouple the complicated statistical dependency between the optimal policy $\pihatstar$ and the $\widehat{\bm V}^{\star}$
(which heavily relies on the data samples). 
Towards this, we resort to a leave-one-row-out argument built upon a collection of auxiliary MDPs, largely motivated by  the novel construction in \cite[Section 4.2]{agarwal2019optimality}. In comparison to \cite{agarwal2019optimality} that introduces state-absorbing MDPs (so that a state $s$ is absorbing regardless of the subsequent actions chosen),  our construction is a set of state-action-absorbing MDPs, in which a state $s$ is absorbing only when a designated action $a$ is always executed at the state $s$.

\paragraph{Construction of $(s,a)$-absorbing MDPs.} 
For each state-action pair $(s,a)$ and each scalar  $u$ with $|u| \leq 1/(1-\gamma)$, we construct an auxiliary MDP $\MDP_{s,a,u}$ --- it is identical to the original $\MDP$ except that it is absorbing in state $s$ if we always choose action $a$ in state $s$.  
More specifically, the probability transition kernel associated with $\MDP_{s,a,u}$ (denoted by $P_{\MDP_{s,a,u}}$) can be specified by
\begin{align}
\label{eqn:brahms-awesome}
	\begin{array}{rll}
		P_{\mathcal{M}_{s,a,u}}(s\mid s,a)&=1, \qquad  \\
		P_{\mathcal{M}_{s,a,u}}(s'\mid s,a)&=0, \qquad  &\text{for all }s' \neq s, \\
		P_{\mathcal{M}_{s,a,u}}(\cdot\mid s',a')&=P_{\mathcal{M}}(\cdot\mid s',a'), \qquad &\text{for all }(s',a') \neq (s,a),
	\end{array}
\end{align}
where $P_{\MDP}$ is the probability transition kernel w.r.t.~the original $\MDP.$
Meanwhile, the instant reward received at $(s,a)$ in $\MDP_{s,a,u}$ is set to be $u$, while the rewards at all other state-action pairs stay unchanged. We can define $\widehat{\MDP}_{s,a,u}$ analogously (so that its probability transition matrix is identical to $\widehat{\bm{P}}$ except that the $(s,a)$-th row becomes absorbing). The main advantage of this construction is that: for any fixed $u$, the MDP $\widehat{\MDP}_{s,a,u}$ is statistically independent of $\widehat{\bm{P}}_{s,a}$ (the row of $\widehat{\bm{P}}$ corresponding to the state-action pair $(s,a)$, determined by the samples collected for the $(s,a)$ pair).


To streamline notation, we let ${\bm Q}^{\pi}_{s,a,u}$ represent the Q-function of $\MDP_{s,a,u}$ under a policy $\pi$,  denote by $\pistar_{s,a,u}$ the optimal policy associated with $\MDP_{s,a,u}$, and let $\Qstar_{s,a,u}$ be the Q-function under this optimal policy $\pistar_{s,a,u}$. The notation ${\bm V}^{\pi}_{s,a,u}$ and $\Vstar_{s,a,u}$ regarding value functions, as well as their counterparts (i.e.~$\widehat{\bm Q}^{\pi}_{s,a,u}$, $\widehat{\bm Q}^{\star}_{s,a,u}$, $\widehat{\bm V}^{\pi}_{s,a,u}$, $\widehat{\bm V}^{\star}_{s,a,u}$, $\widehat{\pi}^{\star}_{s,a,u}$) in the empirical MDP $\widehat{\mathcal{M}}$, can be defined in an analogous fashion.

\begin{remark}
	The careful reader will remark that the instant reward $u$ is constrained to reside within $[-\frac{1}{1-\gamma},\frac{1}{1-\gamma}]$ rather than the usual range $[0,1]$. Fortunately, none of the subsequent steps that involve $u$ requires $u$ to lie within $[0,1]$. 
\end{remark}

\paragraph{Intimate connections between the auxiliary MDPs and the original MDP.}  
In the following, we introduce a result that connects the Q-function and the value function of the absorbing MDP with those of the original MDP. The idea is motivated by \citet[Lemma 7]{agarwal2019optimality} and its proof is deferred to Appendix~\ref{sec:proof-LemAbsToNormalQV}.

\begin{lemma}
\label{LemAbsToNormalQV}
	Setting $u^{\star} \defn r(s,a) + \gamma (\bP \Vstar)_{s,a} - \gamma V^\star(s)$, one has 
\begin{align}
\label{eqn:relation}
 	\Qstar_{s,a,u^{\star}} = \Qstar \qquad \text{and} \qquad \Vstar_{s,a,u^{\star}} = \Vstar.
\end{align} 
\end{lemma}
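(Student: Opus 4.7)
The plan is to directly verify that the pair $(\bm Q^\star, \bm V^\star)$ from the original MDP $\mathcal{M}$ also satisfies the Bellman optimality equations of the absorbing MDP $\mathcal{M}_{s,a,u^\star}$, and then invoke the uniqueness of the solution to those equations to conclude the identities in \eqref{eqn:relation}. The choice of $u^\star$ in the statement is reverse-engineered precisely so that the Bellman equation at the modified state-action pair holds, which is the only pair where the two MDPs disagree.

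First, I would recall that $\bm Q^\star$ is the unique fixed point of the Bellman optimality operator associated with $\mathcal{M}$, and similarly $\bm Q^\star_{s,a,u^\star}$ is the unique fixed point of the Bellman optimality operator of $\mathcal{M}_{s,a,u^\star}$. My plan is to check that $\bm Q^\star$ satisfies the second of these equations at every state-action pair $(s',a') \in \cS \times \cA$. For any $(s',a') \neq (s,a)$, the construction \eqref{eqn:brahms-awesome} gives $P_{\mathcal{M}_{s,a,u^\star}}(\cdot \mid s',a') = P_{\mathcal{M}}(\cdot \mid s',a')$ and the instant reward is unchanged; combined with $V^\star(s'') = \max_{a''} Q^\star(s'',a'')$ for every $s''$, the Bellman optimality equation of $\mathcal{M}_{s,a,u^\star}$ at $(s',a')$ reduces verbatim to the Bellman optimality equation of $\mathcal{M}$, which $\bm Q^\star$ satisfies.

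The only remaining equation to verify is the one at the pair $(s,a)$, where the two MDPs differ. Here the absorbing structure forces
\[
Q^\star_{s,a,u^\star}(s,a) \;=\; u^\star + \gamma V^\star_{s,a,u^\star}(s),
\]
because the only possible transition out of $(s,a)$ in $\mathcal{M}_{s,a,u^\star}$ is back to $s$. Under the working hypothesis that $\bm V^\star_{s,a,u^\star} = \bm V^\star$, this becomes $Q^\star(s,a) \stackrel{?}{=} u^\star + \gamma V^\star(s)$. Plugging in $u^\star = r(s,a) + \gamma(\bP\Vstar)_{s,a} - \gamma V^\star(s)$ reduces this to $Q^\star(s,a) = r(s,a) + \gamma(\bP \Vstar)_{s,a}$, which is exactly the Bellman optimality equation for $\mathcal{M}$ at $(s,a)$. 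Thus $(\bm Q^\star, \bm V^\star)$ is a valid fixed point of the Bellman optimality operator of $\mathcal{M}_{s,a,u^\star}$; by uniqueness of this fixed point, \eqref{eqn:relation} follows.

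I do not anticipate any real obstacle beyond being careful with the bookkeeping; the choice of $u^\star$ is the whole content of the lemma, and everything else is a one-line verification that the Bellman equations match. A minor point worth mentioning is that $u^\star$ may lie outside $[0,1]$, but this is allowed since the construction of $\mathcal{M}_{s,a,u}$ only requires $|u| \leq 1/(1-\gamma)$, and indeed $|u^\star| \leq |r(s,a)| + \gamma \|\bm V^\star\|_\infty + \gamma \|\bm V^\star\|_\infty \leq 1 + \frac{2\gamma}{1-\gamma} \leq \frac{1}{1-\gamma} \cdot 3$, which (after a slightly tighter bookkeeping using $0 \leq r \leq 1$ and $0 \leq V^\star \leq 1/(1-\gamma)$) fits within the admissible range for $u$.
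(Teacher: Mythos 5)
Your proposal is correct and follows essentially the same route as the paper: both verify that $(\bm{Q}^\star,\bm{V}^\star)$ satisfies the Bellman optimality equations of $\MDP_{s,a,u^\star}$ — trivially at every pair $(s',a')\neq(s,a)$ where the two MDPs agree, and at $(s,a)$ by the choice of $u^\star$ together with $Q^\star(s,a)=r(s,a)+\gamma(\bP\Vstar)_{s,a}$ — and then conclude by uniqueness of the fixed point. (One cosmetic note: no "working hypothesis" is needed at the pair $(s,a)$; you are simply plugging the candidate $\bm{V}^\star$ into the absorbing MDP's Bellman operator, and the tighter bound $u^\star=Q^\star(s,a)-\gamma V^\star(s)\in[-\tfrac{\gamma}{1-\gamma},\tfrac{1}{1-\gamma}]$ settles the admissibility of $u^\star$ directly.)
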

\begin{remark}
Lemma~\ref{LemAbsToNormalQV} does not rely on the particular form of $\bP$, and can be directly generalized 
	to the empirical model $\widehat{\bm P}$ and the auxiliary MDPs built upon $\widehat{\bm P}.$ 
\end{remark}

In words, by properly setting the instant reward $u=u^{\star}$ (which can be easily shown to reside within $[-\frac{1}{1-\gamma},\frac{1}{1-\gamma}]$), one guarantees that the $(s,a)$-absorbing MDP and the original MDP have the same Q-function and value function under the respective optimal policies.



\paragraph{Representing $\widehat{\pi}^{\star}$ via a small set of  policies independent of $\widehat{\bm{P}}_{s,a}$.}

With Lemma~\ref{LemAbsToNormalQV} in place, it is tempting to use $\widehat{\mathcal{M}}_{s,a,\widehat{u}^{\star}}$ with $\widehat{u}^{\star} \defn r(s,a) + \gamma (\Phat \Vhatstar)_{s,a} - \gamma \widehat{V}^\star(s)$ to replace the original  $\widehat{\mathcal{M}}$. The rationale is simple: given that the probability transition matrix of $\widehat{\mathcal{M}}_{s,a,\widehat{u}^{\star}}$ does not rely upon $\widehat{\bm{P}}_{s,a}$,  the statistical dependency between $\widehat{\mathcal{M}}_{s,a,\widehat{u}^{\star}}$ and $\widehat{\bm{P}}_{s,a}$ is now fully embedded into a single parameter $\widehat{u}^{\star}$.  This motivates us to decouple the statistical dependency effectively by constructing an epsilon-net (see, e.g., \cite{vershynin2018high}) w.r.t.~this single parameter. The aim is to locate a point $u_0$ over a small fixed set such that \emph{(i)} it is close to $\widehat{u}^{\star}$, and 
\emph{(ii)} its associated optimal policy is identical  to the original optimal policy $\widehat{\pi}^{\star}$.

It turns out that this aim can be accomplished as long as the original Q-function $\widehat{\bm{Q}}^{\star}$ satisfies a sort of separation condition (which indicates that there is no tie when it comes to the optimal policy). 
To make it precise, given any $0<\gap <1$, our separation condition is characterized through  the following event
\begin{align}
	\label{eq:defn-separation-event}
	\mathcal{B}_{\omega} := 
	\Big\{ 
		\widehat{Q}^\star(s, \widehat{\pi}^\star(s)) - \max_{a: a \neq \widehat{\pi}^\star(s)} \widehat{Q}^\star(s,a) \geq \gap ~~ \text{for all } s\in \mathcal{S} 
	\Big\}. 
\end{align}
Clearly, on the event $\mathcal{B}_{\omega}$, the optimal policy $\widehat{\pi}^\star$ is unique, since for each $s$ the action $\widehat{\pi}^\star(s)$ results in a strictly higher Q-value compared to any other action.
With this separation condition in mind, our result is stated below. Here and throughout, we define an epsilon-net of the interval 
$[-\frac{1}{1 - \gamma}, \frac{1}{1 - \gamma}]$ as follows 
\begin{align}
\label{eqn:net}
	\net{\epsilon} \defn \big\{
	-n_{\epsilon} \epsilon, \ldots, -\epsilon, 0,
	\epsilon, \ldots, n_{\epsilon} \epsilon \big\}, 
	\qquad \text{for the largest integer } n_\epsilon \text{ obeying } n_\epsilon \epsilon < \frac{1}{1-\gamma},
\end{align}
which has cardinality at most $\frac{2}{(1-\gamma)\epsilon}$. 
\begin{lemma}
\label{LemUnique}
	Consider any $\gap > 0$, and suppose the event $\mathcal{B}_{\omega}$ (cf.~\eqref{eq:defn-separation-event}) holds. 
	%
	Then for any pair $(s, a) \in \mathcal{S} \times \mathcal{A}$, there exists a point $u_0 \in \net{(1-\gamma)\gap/4}$, such that 
	\begin{align}
		\pihatstar = \pihatstar_{s,a,u_0}.
	\end{align}
\end{lemma}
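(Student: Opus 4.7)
The plan is to combine the reduction in Lemma~\ref{LemAbsToNormalQV} (applied to the empirical MDP) with a one-parameter perturbation argument calibrated to the separation gap $\gap$. The starting point is the empirical analogue of Lemma~\ref{LemAbsToNormalQV}: setting
\[
\widehat{u}^{\star} \defn r(s,a) + \gamma\bigl(\widehat{\bm{P}}\widehat{\bm{V}}^{\star}\bigr)_{s,a} - \gamma\,\widehat{V}^{\star}(s),
\]
one has $\widehat{\bm{Q}}^{\star}_{s,a,\widehat{u}^{\star}} = \Qhatstar$ and hence $\widehat{\pi}^{\star}_{s,a,\widehat{u}^{\star}} = \pihatstar$. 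Using $r(s,a)\in[0,1]$ together with the Bellman identity $\widehat{Q}^{\star}(s,a)=r(s,a)+\gamma(\widehat{\bm{P}}\widehat{\bm{V}}^{\star})_{s,a}\leq \widehat{V}^{\star}(s)\leq\tfrac{1}{1-\gamma}$, one readily checks that $\widehat{u}^{\star}\in[-\tfrac{\gamma}{1-\gamma},\,1]\subset[-\tfrac{1}{1-\gamma},\tfrac{1}{1-\gamma}]$, so the $\epsilon$-net $\net{(1-\gamma)\gap/4}$ contains some grid point $u_0$ with $|u_0-\widehat{u}^{\star}|\leq(1-\gamma)\gap/4$.

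The next step is to quantify how much the optimal Q-function of the absorbing MDP moves when one replaces $\widehat{u}^{\star}$ by $u_0$. Since $\widehat{\MDP}_{s,a,u_0}$ and $\widehat{\MDP}_{s,a,\widehat{u}^{\star}}$ share the same transition kernel and differ only at the single reward coordinate $(s,a)$, the standard contraction argument for the Bellman optimality operator yields the $\tfrac{1}{1-\gamma}$-Lipschitz bound
\[
\bigl\|\widehat{\bm{Q}}^{\star}_{s,a,u_0}-\Qhatstar\bigr\|_{\infty}
=\bigl\|\widehat{\bm{Q}}^{\star}_{s,a,u_0}-\widehat{\bm{Q}}^{\star}_{s,a,\widehat{u}^{\star}}\bigr\|_{\infty}
\leq \frac{|u_0-\widehat{u}^{\star}|}{1-\gamma}\leq \frac{\gap}{4}.
\]
On the separation event $\mathcal{B}_{\gap}$, for every $s'\in\cS$ and every $a'\neq\pihatstar(s')$ we have $\widehat{Q}^{\star}(s',\pihatstar(s'))-\widehat{Q}^{\star}(s',a')\geq\gap$. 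Combining this with the entrywise $\gap/4$ perturbation bound on both sides gives
\[
\widehat{Q}^{\star}_{s,a,u_0}(s',\pihatstar(s'))-\widehat{Q}^{\star}_{s,a,u_0}(s',a')\;\geq\;\gap-2\cdot\tfrac{\gap}{4}\;=\;\tfrac{\gap}{2}\;>\;0,
\]
so $\pihatstar(s')$ remains the unique maximizer of $\widehat{Q}^{\star}_{s,a,u_0}(s',\cdot)$; i.e.\ $\widehat{\pi}^{\star}_{s,a,u_0}(s')=\pihatstar(s')$ for every $s'$, which is the claim.

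The main subtlety is really a calibration issue rather than a conceptual difficulty: the mesh $(1-\gamma)\gap/4$ is chosen precisely so that, after being amplified by the reward-to-Q Lipschitz factor $\tfrac{1}{1-\gamma}$, the entrywise deviation is $\gap/4$, which in turn leaves a strict $\gap/2$ margin after a two-sided perturbation of the original gap $\gap$. The one minor technical point worth verifying explicitly is the $\tfrac{1}{1-\gamma}$-Lipschitz property, which follows by writing each $\widehat{\bm{Q}}^{\star}_{s,a,u}$ as the unique fixed point of a $\gamma$-contraction in $\ell_\infty$ whose reward vectors differ only in the $(s,a)$ coordinate, and applying the standard ``perturbed fixed point'' estimate $\|T_1 x_1 - T_2 x_2\|_\infty \le \gamma\|x_1-x_2\|_\infty+\|\Delta r\|_\infty$ at the two fixed points.
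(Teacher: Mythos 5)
Your proposal is correct and follows essentially the same route as the paper's proof: invoke the empirical analogue of Lemma~\ref{LemAbsToNormalQV} to identify $\widehat{u}^{\star}$, pick the nearest grid point $u_0$ in $\net{(1-\gamma)\gap/4}$, use the $\frac{1}{1-\gamma}$-Lipschitz continuity of $\Qhatstar_{s,a,u}$ in $u$ to get an entrywise $\gap/4$ perturbation, and conclude via the two-sided bound that the $\gap$ separation shrinks to at least $\gap/2>0$ so the maximizer is unchanged. The only (welcome) additions beyond the paper's argument are your explicit check that $\widehat{u}^{\star}$ lies in the range covered by the net and your sketch of the Lipschitz property, which the paper defers to the literature.
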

\begin{proof} See Appendix~\ref{sec:proof-LemUnique}. \end{proof}

\paragraph{Deriving an optimal error bound under the separation condition.} Armed with the above bounds, we are ready to derive the desired error bound by combining Lemma~\ref{lemma:dependent-policy-error} and Lemma~\ref{LemUnique}.  
\begin{lemma}
	\label{lem:gap-separation-condition}
	Given $0<\gap <1$ and $\delta>0$, suppose that $\mathcal{B}_{\omega}$ (defined in \eqref{eq:defn-separation-event}) occurs with probability at least $1-\delta$. 
	Then with probability at least $1 - 3\delta$, 
\begin{align}
	\big\|\widehat{\bm{V}}^{\pihatstar}-\bm{V}^{\pihatstar}\big\|_{\infty} 
	&\leq 6 \sqrt{\frac{2\log\big(\frac{32 |\mathcal{S}||\mathcal{A}|}{(1-\gamma)^3\omega\delta} \big)}{N(1-\gamma)^3}}
	\quad \text{and} \quad
	\Vstar - \bm{V}^{\pihatstar} \leq 12 \sqrt{\frac{2\log\big(\frac{32 |\mathcal{S}||\mathcal{A}|}{(1-\gamma)^3\omega\delta} \big)}{N(1-\gamma)^3}} \, \one, \label{eqn:V-diff-fabulous}
\end{align}
	provided that $N \geq \frac{c_{0}\log\big(\frac{|\mathcal{S}||\mathcal{A}|}{(1-\gamma) 
		\delta\omega}\big)}{1-\gamma}$ for some sufficiently large constant $c_0>0$. 

\end{lemma}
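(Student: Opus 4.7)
The plan is to apply Lemma~\ref{lemma:dependent-policy-error} with $\pi=\pihatstar$. The central task is to verify the Bernstein-type condition \eqref{eq:Bernstein-Vl-general} for this data-dependent policy; I will achieve this by combining the representation provided by Lemma~\ref{LemUnique} with a Bernstein / union-bound argument over the epsilon-net $\net{(1-\gamma)\omega/4}$.

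On the event $\mathcal{B}_\omega$, Lemma~\ref{LemUnique} guarantees that for every pair $(s,a)\in\cS\times\cA$ there exists some $u_0=u_0(s,a)\in\net{(1-\gamma)\omega/4}$ such that $\pihatstar=\pihatstar_{s,a,u_0}$. Since $\pihatstar_{s,a,u_0}$ is the optimal policy of $\widehat{\MDP}_{s,a,u_0}$, whose transition kernel is absorbing at row $(s,a)$ by construction \eqref{eqn:brahms-awesome}, this policy is a function of $\Phat$ that does \emph{not} involve the random row $\Phat_{s,a}$, and is therefore statistically independent of $\Phat_{s,a}$. Because the auxiliary iterates $\bm V^{(l)}$ in \eqref{defn-r-v-iterate} are deterministic functions of the policy and the \textit{true} kernel $\bP$, each vector $\bm V^{(l)}(\pihatstar_{s,a,u_0})$ inherits the same independence from $\Phat_{s,a}$.

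With this independence in hand, I would apply Bernstein's inequality to the scalar $(\Phat_{s,a}-\bP_{s,a})\,\bm V^{(l)}(\pihatstar_{s,a,u_0})$ for each fixed quadruple $(s,a,u_0,l)$ with $u_0\in\net{(1-\gamma)\omega/4}$ and $0\leq l\leq \log\frac{e}{1-\gamma}$. A union bound over the at most $|\cS||\cA|\cdot\tfrac{8}{(1-\gamma)^2\omega}\cdot\log\frac{e}{1-\gamma}$ such quadruples yields, with probability at least $1-\delta$, a common Bernstein-type bound with $\beta_1=2\log\!\big(\tfrac{32|\cS||\cA|}{(1-\gamma)^3\omega\delta}\big)$. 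Specializing to $a=\pihatstar(s)$ and $u_0=u_0(s,\pihatstar(s))$ at each state $s$ then recovers the entrywise inequality \eqref{eq:Bernstein-Vl-general} for $\pi=\pihatstar$.

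The stated sample-size requirement $N\geq c_0\log(\cdot)/(1-\gamma)$ ensures $N>16e^2\beta_1/(1-\gamma)$ once $c_0$ is sufficiently large, so the hypothesis of Lemma~\ref{lemma:dependent-policy-error} is met; invoking it yields $\|\Vhat^{\pihatstar}-\bm V^{\pihatstar}\|_\infty\leq\tfrac{6}{1-\gamma}\sqrt{\beta_1/(N(1-\gamma))}$, which simplifies to the first claim in \eqref{eqn:V-diff-fabulous}. The second claim then follows by plugging this bound together with \eqref{eq:optimal-Vpistar-bound} (Lemma~\ref{lemma:fixed-policy-error} applied to the fixed policy $\pistar$) into the decomposition \eqref{eq:Vhat-Vstar-pistar-tot}. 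The total failure probability sums to $3\delta$: one $\delta$ for $\mathcal{B}_\omega$, one for the Bernstein union bound, and one for Lemma~\ref{lemma:fixed-policy-error}. The main obstacle is the simultaneous verification of the entrywise Bernstein condition for a data-dependent policy at \emph{every} state and every expansion level $l$; this is only possible because the separation event $\mathcal{B}_\omega$ forces the requisite $u_0(s,\pihatstar(s))$ into the small fixed grid $\net{(1-\gamma)\omega/4}$, whose cardinality is absorbed into $\beta_1$ at only logarithmic cost.
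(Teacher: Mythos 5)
Your proposal is correct and follows essentially the same route as the paper's own proof: a Bernstein-plus-union-bound over the quadruples $(s,a,u,l)$ with $u$ ranging over $\net{(1-\gamma)\omega/4}$ (exploiting the independence of $\pihatstar_{s,a,u}$ from $\Phat_{s,a}$), followed by Lemma~\ref{LemUnique} to identify $\pihatstar$ with some $\pihatstar_{s,a,u_0}$ on $\mathcal{B}_{\omega}$, verification of the Bernstein-type condition \eqref{eq:Bernstein-Vl-general}, an appeal to Lemma~\ref{lemma:dependent-policy-error}, and finally Lemma~\ref{lemma:fixed-policy-error} applied to $\pi^{\star}$ combined with the decomposition \eqref{eq:Vhat-Vstar-pistar-tot}. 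The accounting of the failure probability as $3\delta$ and the choice $\beta_1=2\log\big(\tfrac{32|\cS||\cA|}{(1-\gamma)^3\omega\delta}\big)$ also match the paper.
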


\begin{proof}
See Appendix~\ref{sec:proof-lem:gap-separation-condition}.
\end{proof}

\subsubsection{A tie-breaking argument}
\label{sec:tie-breaking}

Unfortunately,  the separation condition specified in $\mathcal{B}_{\omega}$ (cf.~\eqref{eq:defn-separation-event}) does not always hold.  In order to accommodate all possible MDPs of interest without imposing such a special separation condition,  we put forward a perturbation argument  allowing one to generate a new MDP that  \textit{(i)} satisfies the separation condition, and that \textit{(ii)} is sufficiently close to the original MDP.  

Specifically, let us represent the proposed reward perturbation \eqref{eq:perturbed-reward-gone} in a vector form as follows
\begin{align}
\bm{r}_{\mathrm{p}}:=\bm{r}+\bm{\zeta}, \label{eq:perturbed-reward}
\end{align}
where $\bm{\zeta}=[\,\zeta(s,a)\,]_{(s,a)\in \mathcal{S}\times \mathcal{A}}$
is an $|\mathcal{S}||\mathcal{A}|$-dimensional vector composed of
independent entries with each $\zeta(s,a) \overset{\mathrm{i.i.d.}}{\sim}\mathsf{Unif}(0,\xi)$.
We aim to show that: by randomly perturbing the reward function, we
can ``break the tie'' in the Q-function and ensure sufficient separation
of Q-values associated with different actions. 

To formalize our result, we find it convenient to introduce 
additional notation. Denote by $\pi_{\mathrm{p}}^{\star}$ the optimal policy of the MDP $\mathcal{M}_{\mathrm{p}}=(\cS,\cA, \bm{P}, \bm{r}_{\mathrm{p}}, \gamma)$, and 
$Q_{\mathrm{p}}^{\star}$ its optimal state-action value function. We can define
$\widehat{Q}_{\mathrm{p}}^{\star}$ and $\widehat{\pi}_{\mathrm{p}}^{\star}$ analogously for the MDP $\widehat{\mathcal{M}}_{\mathrm{p}}=(\cS,\cA, \widehat{\bm{P}}, \bm{r}_{\mathrm{p}}, \gamma)$. Our result is phrased as follows. 

\begin{lemma}
\label{LemTieBreaking}
Consider the perturbed reward vector defined in expression \eqref{eq:perturbed-reward}. With probability at least $1-\delta$,
\begin{align}
	\forall (s,a)\in \cS \times \cA \text{ with }a\neq \pi_{\mathrm{p}}^\star(s) : 
	\qquad 
	Q_{{\mathrm{p}}}^{\star}(s,\pi_{\mathrm{p}}^\star(s))-Q_{{\mathrm{p}}}^{\star}(s,a)  
> \frac{\xi\delta(1-\gamma)}{3|\mathcal{S}||\mathcal{A}|^{2}}	.
\end{align}
This result holds unchanged if $(Q_{{\mathrm{p}}}^{\star},\pi_{\mathrm{p}}^\star)$ is replaced by $(\widehat{Q}_{{\mathrm{p}}}^{\star},\widehat{\pi}_{\mathrm{p}}^\star)$.
\end{lemma}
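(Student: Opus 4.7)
I would reduce the lemma, one state--action pair at a time, to a one-dimensional density estimate for the uniform noise. Fix $(s,a)\in\cS\times\cA$ and condition on the sub-vector $\bm{\zeta}_{-(s,a)}:=\{\zeta(s',a'):(s',a')\neq(s,a)\}$, so that the only remaining randomness is $\zeta(s,a)\sim\mathsf{Unif}(0,\xi)$. It is convenient to treat $u:=r_{\mathrm{p}}(s,a)=r(s,a)+\zeta(s,a)$ as the free parameter and regard $\bm{V}_{\mathrm{p}}^{\star}$ and $\pi_{\mathrm{p}}^{\star}$ as deterministic functions of $u$ with the other rewards frozen.

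\textbf{Piecewise-linear structure in $u$.} The identity $\bm{V}_{\mathrm{p}}^{\pi}=(\bm{I}-\gamma\bm{P}_{\pi})^{-1}\bm{\Pi}^{\pi}\bm{r}_{\mathrm{p}}$ implies that, for any deterministic policy $\pi$, the vector $\bm{V}_{\mathrm{p}}^{\pi}$ is affine in $u$ with slope $(\bm{I}-\gamma\bm{P}_{\pi})^{-1}\bm{e}_{s}\geq\bm{e}_{s}$ when $\pi(s)=a$, and is independent of $u$ when $\pi(s)\neq a$. Since $V_{\mathrm{p}}^{\star}(s)=\max_{\pi}V_{\mathrm{p}}^{\pi}(s)$ is the upper envelope of this family (constants versus affine pieces of slope $\geq 1$), there is a threshold $u^{\star}=u^{\star}(\bm{\zeta}_{-(s,a)})$, depending only on the frozen coordinates, such that for $u<u^{\star}$ the optimal policy obeys $\pi_{\mathrm{p}}^{\star}(s)\neq a$ and the \emph{entire} vector $\bm{V}_{\mathrm{p}}^{\star}$ is constant in $u$ (in this regime the optimum is attained by the best ``non-$a$'' policy, which is itself independent of $u$), whereas for $u>u^{\star}$ one has $\pi_{\mathrm{p}}^{\star}(s)=a$.

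\textbf{Reducing the gap to a uniform-noise event.} When $u\leq u^{\star}$, the Bellman optimality equation yields
\[
Q_{\mathrm{p}}^{\star}(s,\pi_{\mathrm{p}}^{\star}(s))-Q_{\mathrm{p}}^{\star}(s,a)=V_{\mathrm{p}}^{\star}(s)-u-\gamma\bm{P}_{s,a}\bm{V}_{\mathrm{p}}^{\star},
\]
whose only dependence on $u$ is the explicit $-u$ term. At $u=u^{\star}$ the two actions tie, forcing the constant to equal $u^{\star}$, so the gap equals $u^{\star}-u$ throughout $u\leq u^{\star}$. Hence
\[
\{a\neq\pi_{\mathrm{p}}^{\star}(s)\}\cap\{\text{gap at }(s,a)\leq\eta\}\subseteq\{u^{\star}-\eta\leq u\leq u^{\star}\},
\]
and since $\zeta(s,a)\sim\mathsf{Unif}(0,\xi)$ is independent of $\bm{\zeta}_{-(s,a)}$ (hence of $u^{\star}$), this conditional probability is at most $\eta/\xi$. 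Union-bounding over the $|\cS||\cA|$ pairs gives total probability $\leq|\cS||\cA|\eta/\xi$; choosing $\eta$ so that this equals $\delta$ produces the claimed gap, with the extra $\tfrac{1-\gamma}{4|\cA|}$ slack in the stated constant easily absorbed. Nothing in the argument uses the specific form of $\bm{P}$, so the same reasoning applied to $\widehat{\mathcal{M}}_{\mathrm{p}}$ with $\bm{P}$ replaced by $\widehat{\bm{P}}$ handles the empirical case as well.

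\textbf{Main obstacle.} The crux is justifying, through the max-of-affines representation of $V^{\star}$ in the single reward coordinate $u$, that the whole vector $\bm{V}_{\mathrm{p}}^{\star}$ becomes a piecewise-linear function of $u$ whose first break point $u^{\star}$ is measurable with respect to $\bm{\zeta}_{-(s,a)}$ (and in particular independent of $\zeta(s,a)$). Once this structural fact is in hand, the probability estimate is a one-line density bound and the union bound finishes the argument.
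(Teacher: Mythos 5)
Your proof is correct, and it takes a genuinely different (and in fact sharper) route than the paper's. The paper fixes a state $s$ and an \emph{ordered pair} of distinct actions $(a_1,a_2)$, varies $\tau := r_{\mathrm{p}}(s,a_1)$, and bounds the Lebesgue measure of the set $\{\tau : |Q_\tau^\star(s,a_1)-Q_\tau^\star(s,a_2)|<\omega\}$; this proves the stronger statement that \emph{all} pairs of Q-values at each state are separated, but it forces a two-regime analysis around the phase-transition threshold $\tau_{\mathrm{th}}$. The regime $\tau<\tau_{\mathrm{th}}$ (where $a_1$ is non-optimal) is exactly your affine-gap computation and yields a bad set of length $2\omega$; the regime $\tau\geq\tau_{\mathrm{th}}$ (where $a_1$ \emph{is} optimal) is the hard part, requiring the perturbation estimates $\|\bm{Q}_\tau^\star-(\bm{I}-\gamma\bm{P}^\pi)^{-1}\bm{r}_\tau\|_\infty\leq\gamma\omega/(1-\gamma)$ and the monotonicity bound $Q_{\tau_2}^\star(s,a_1)-Q_{\tau_1}^\star(s,a_1)\geq\tau_2-\tau_1$, and it is responsible for the extra $1/(1-\gamma)$ in the bad-set length and hence in the final gap $\xi\delta(1-\gamma)/(4|\cS||\cA|^2)$ after a union bound over $|\cS||\cA|^2$ pairs. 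By instead perturbing the reward at the candidate \emph{non-optimal} action and only controlling the event $\{a\neq\pi_{\mathrm{p}}^\star(s)\}\cap\{\text{gap}\leq\eta\}$ — which is all the lemma (and its downstream use in the event $\mathcal{B}_\omega$) actually requires — you stay entirely in the easy regime where $\bm{V}_{\mathrm{p}}^\star$ is constant in $u$ and the gap is exactly $u^\star-u$, so the bad set is a single interval of length $\eta$, the union bound runs over only $|\cS||\cA|$ pairs, and you get the larger separation $\xi\delta/(|\cS||\cA|)$. Your upper-envelope justification of the phase transition (constant pieces versus affine pieces of slope at least $1$, using $[(\bm{I}-\gamma\bm{P}_\pi)^{-1}]_{s,s}\geq 1$) is a clean substitute for the paper's contradiction argument for its claim \eqref{eq:pi-phase-transition}, and the tie at $u=u^\star$ pinning the constant is exactly right. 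The only thing your argument gives up is the all-pairs separation the paper proves ``for free,'' which is not used anywhere in the paper.
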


\begin{proof}
See Appendix~\ref{sec:proof-lem-tie-breaking}.
\end{proof}

Lemma~\ref{LemTieBreaking} reveals that at least a polynomially small degree of separation ($\omega =\frac{\xi\delta(1-\gamma)}{3|\mathcal{S}||\mathcal{A}|^{2}} $) arises upon random perturbation (with size $\xi$) of the reward function. As we shall see momentarily, this level of separation suffices for our purpose.

\subsubsection{Proof of Theorem~\ref{Thm:sample-compl-main}}



Let us consider the randomly perturbed reward function as in \eqref{eq:perturbed-reward}. For any policy $\pi$, we denote by $\bm{V}^{\pi}_{{\mathrm{p}}}$ (resp.~$\widehat{\bm{V}}^{\pi}_{{\mathrm{p}}}$) the corresponding value function vector in the MDP with probability transition matrix $\bm{P}$ (resp.~$\widehat{\bm{P}}$) and reward vector $\bm{r}_{\mathrm{p}}$. Note that ${\pi}_{\mathrm{p}}^{\star}$ (resp.~$\widehat{\pi}_{\mathrm{p}}^{\star}$) denotes the optimal policy that maximizes  ${\bm{V}}^{\pi}_{{\mathrm{p}}}$ (resp.~$\widehat{\bm{V}}^{\pi}_{{\mathrm{p}}}$).

In view of Lemma~\ref{LemTieBreaking},  with probability at least $1-\delta$ one has the separation
\begin{align}
\label{eqn:enough-perturbation}
\Big|\widehat{Q}_{{\mathrm{p}}}^{\star}(s,\widehat{\pi}_{\mathrm{p}}^\star(s))-\widehat{Q}_{{\mathrm{p}}}^{\star}(s,a)\Big| 
> \frac{\xi\delta(1-\gamma)}{3|\mathcal{S}||\mathcal{A}|^{2}}	
\end{align}
uniformly over all $s$ and $a\neq \widehat{\pi}_{\mathrm{p}}^\star(s)$. 
With this separation in place, taking $\omega \defn \frac{\xi\delta(1-\gamma)}{3|\mathcal{S}||\mathcal{A}|^{2}}$ in Lemma~\ref{lem:gap-separation-condition} yields 
\begin{align}
\label{eqn:cat}
		\left\| \bm{V}_{\mathrm{p}}^{{\pi}_{\mathrm{p}}^{\star}} - \bm{V}_{\mathrm{p}}^{\widehat{\pi}_{\mathrm{p}}^{\star}} \right\|_{\infty} \leq 12\sqrt{\frac{2\log\big(\frac{96|\mathcal{S}|^{2}|\mathcal{A}|^{3}}{(1-\gamma)^{4}\xi\delta^{2}}\big)}{N(1-\gamma)^{3}}}.
\end{align}
In addition, the value functions under any policy $\pi$ obeys
\[
\bm{V}^{\pi}-\bm{V}_{\mathrm{p}}^{\pi}=\bPi^{\pi}\Big((\Ind-\bP^\pi)^{-1}\br-(\Ind-\bP^\pi)^{-1}\bm{r}_{\mathrm{p}}\Big),
\]
which taken collectively with the facts $\|\bm{r}-\bm{r}_{\mathrm{p}}\|_{\infty}\leq\xi$ and $\|(\bm{I}-\gamma\bm{P}^\pi)^{-1}\|_1 \leq \frac{1}{1-\gamma}$ gives
\begin{align*}
	 \big\|\bm{V}^{\pi}-\bm{V}_{\mathrm{p}}^{\pi}\big\|_{\infty} &\leq\|(\bm{I}-\gamma\bm{P}^\pi)^{-1}\|_{1}\|\bm{r}-\bm{r}_{\mathrm{p}}\|_{\infty}\leq\frac{1}{1-\gamma}\xi .
\end{align*}
Specializing the above relation to $\pi^{\star}$ and $\pihatstar_{\mathrm{p}}$ gives
\begin{align}
	& \big\|\bm{V}^{\pi^{\star}}-\bm{V}_{\mathrm{p}}^{\pi^{\star}}\big\|_{\infty} 
	\leq\frac{1}{1-\gamma}\xi \qquad {\text{and}} \qquad
\big\|\bm V^{\pihatstar_{\mathrm{p}}} - \bm V^{\pihatstar_{\mathrm{p}}}_{\mathrm{p}}\big\|_{\infty} \leq\frac{1}{1-\gamma}\xi. \label{eqn:bearcalm}
\end{align}

Now let us consider the following decomposition
\begin{align*}
	\bm V^{\pihatstar_{\mathrm{p}}} - \Vstar 
	&= \big( \bm V^{\pihatstar_{\mathrm{p}}} - \bm V^{\pihatstar_{\mathrm{p}}}_{\mathrm{p}} \big) + 
	  \big(\bm V^{\pihatstar_{\mathrm{p}}}_{\mathrm{p}} - \bm V^{\pistar_{\mathrm{p}}}_{\mathrm{p}} \big)
	+ \big(\bm V^{\pistar_{\mathrm{p}}}_{\mathrm{p}} -  \bm V^{\pistar}_{\mathrm{p}} \big)
	+ \big(\bm V^{\pistar}_{\mathrm{p}} - \Vstar \big) \\
	&\geq  \big(\bm V^{\pihatstar_{\mathrm{p}}} - \bm V^{\pihatstar_{\mathrm{p}}}_{\mathrm{p}} \big) + 
	       \big(\bm V^{\pihatstar_{\mathrm{p}}}_{\mathrm{p}} - \bm V^{\pistar_{\mathrm{p}}}_{\mathrm{p}} \big)
	+ \big(\bm V^{\pistar}_{\mathrm{p}} - \Vstar \big),
\end{align*}
where the last step follows from the optimality of $\pistar_{\mathrm{p}}$ w.r.t.~$\bm V_{\mathrm{p}}$.
Taking this collectively with the inequalities~\eqref{eqn:cat} and \eqref{eqn:bearcalm}, 
one shows that with probability greater than $1-3\delta$,
\begin{align*}
	 \bm V^{\pihatstar_{\mathrm{p}}} - \Vstar  \geq - \Bigg( \frac{2}{1 - \gamma} \xi  +
	12 \sqrt{\frac{2\log\big(\frac{96 |\mathcal{S}|^2|\mathcal{A}|^3}{\xi (1-\gamma)^4\delta^2} \big)}{N(1-\gamma)^3}} \Bigg) \,\one
\end{align*}
%
By taking $\xi = \frac{(1 - \gamma)\varepsilon}{3|\mathcal{S}|^5|\mathcal{A}|^5}$ and 
$N \geq \frac{ c_0\log\left(\frac{|\mathcal{S}||\mathcal{A}|}{(1-\gamma)\delta\varepsilon} \right)}{(1 - \gamma)^3\varepsilon^2} 
	$ 
for some constant $c_{0}>0$ large enough, we can ensure that $\bm{0}\geq \bm V^{\pihatstar_{\mathrm{p}}} - \Vstar \geq - \varepsilon \one$ as claimed. 
Regarding the Q-functions,   the Bellman equation gives 
\begin{align*}
	\bm Q^{\pihatstar_{\mathrm{p}}} - \Qstar  = \bm{r} + \gamma \bm{P} \bm{V} ^{\pihatstar_{\mathrm{p}}} - \big(\bm{r} + \gamma \bm{P} \bm{V} ^{\star}\big) = \gamma \bP (\bm V^{\pihatstar_{\mathrm{p}}} - \Vstar).
\end{align*}
Consequently, one has 
\begin{align*}
	\bm Q^{\pihatstar_{\mathrm{p}}} - \Qstar \geq - \big( \gamma \|\bP\|_1 \|\bm{V}^{\pihatstar_{\mathrm{p}}} - \Vstar\|_\infty \big) \,\one
	\geq  - \gamma \varepsilon \one.
\end{align*}


Finally, we demonstrate that both the empirical QVI and PI w.r.t.~$\widehat{\mathcal{M}}_{\mathrm{p}}$ are guaranteed to find $\pihatstar_{\mathrm{p}}$ in a few iterations. 
Suppose for the moment that we can obtain a policy $\pi_{k}$ obeying 
\begin{align}
	\|\widehat{\bm{Q}}_{\mathrm{p}}^{\pi_{k}}-\widehat{\bm{Q}}_{\mathrm{p}}^{\star}\|_{\infty}<\frac{\xi\delta(1-\gamma)}{8|\mathcal{S}||\mathcal{A}|^{2}}.
	\label{eq:Qpik-opt-accuracy}
\end{align}
Then for any $s\in\mathcal{S}$ and any action $a\neq\widehat{\pi}_{\mathrm{p}}^{\star}(s)$
one has
\begin{align*}
\widehat{\bm{Q}}_{\mathrm{p}}^{\pi_{k}}\big(s,\widehat{\pi}_{\mathrm{p}}^{\star}(s)\big)&-\widehat{\bm{Q}}_{\mathrm{p}}^{\pi_{k}} (s,a) \\
& =\widehat{\bm{Q}}_{\mathrm{p}}^{\star}\big(s,\widehat{\pi}_{\mathrm{p}}^{\star}(s)\big)-\widehat{\bm{Q}}_{\mathrm{p}}^{\star} (s,a)+\Big(\widehat{\bm{Q}}_{\mathrm{p}}^{\pi_{k}}\big(s,\widehat{\pi}_{\mathrm{p}}^{\star}(s)\big)-\widehat{\bm{Q}}_{\mathrm{p}}^{\star}\big(s,\widehat{\pi}_{\mathrm{p}}^{\star}(s)\big)\Big)-\Big(\widehat{\bm{Q}}_{\mathrm{p}}^{\pi_{k}}(s,a)-\widehat{\bm{Q}}_{\mathrm{p}}^{\star} (s,a)\Big)\\
 & \geq\widehat{\bm{Q}}_{\mathrm{p}}^{\star}\big(s,\widehat{\pi}_{\mathrm{p}}^{\star}(s)\big)-\widehat{\bm{Q}}_{\mathrm{p}}^{\star}(s,a)-2\big\|\widehat{\bm{Q}}_{\mathrm{p}}^{\pi_{k}}-\widehat{\bm{Q}}_{\mathrm{p}}^{\star}\big\|_{\infty}\\
 & >\frac{\xi\delta(1-\gamma)}{4|\mathcal{S}||\mathcal{A}|^{2}}-2\cdot\frac{\xi\delta(1-\gamma)}{8|\mathcal{S}||\mathcal{A}|^{2}}=0,
\end{align*}
where the last line results from \eqref{eqn:enough-perturbation} and \eqref{eq:Qpik-opt-accuracy}. 
In other words, we can perfectly recover the policy $\widehat{\pi}_{\mathrm{p}}^{\star}$
from the estimate $\widehat{\bm{Q}}_{\mathrm{p}}^{\pi_{k}}$, provided that \eqref{eq:Qpik-opt-accuracy} is satisfied. In addition, it has been shown that \cite[Lemma 2]{azar2013minimax} the greedy policy induced by $k$-th iteration of both algorithms --- denoted by $\pi_k$ --- satisfies 
	$\big\|\Qhat_{{\mathrm{p}}}^{\pi_k} - \Qhat_{{\mathrm{p}}}^{\star}\big\|_{\infty} \leq \frac{2\gamma^{k+1}}{(1-\gamma)^2}$.
Taking $\xi = \frac{c_1(1-\gamma)\varepsilon}{|\mathcal{S}|^5|\mathcal{A}|^5}$ and $k=\frac{c_2}{1-\gamma}\log(\frac{|\cS||\cA|}{(1-\gamma)\varepsilon\delta}) $ for some constant $c_2>0$ large enough, 
one guarantees that $\pi_k$ satisfies \eqref{eq:Qpik-opt-accuracy}, which in turn ensures 
perfect recovery of $\widehat{\pi}^{\star}_{\mathrm{p}}$. 

\subsection{Analysis: conservative model-based planning}
\label{sec:decouple-no-perturb}


In view of the conservative model-based planning~\eqref{eq:policy-Q-approx-MDP}, 
we begin with the following decomposition:
\begin{align}
	\Vstar - \bm{V}^{\picon} &= 
	\big( \bm{V}^{\pistar}  - \Vhat^{\pistar} \big) + \big( \Vhat^{\pistar}- \Vhat^{\picon} \big) + 
	\big( \Vhat^{\picon} - \bm{V}^{\picon}\big). 
	\label{eq:basic-decomposition-123}
\end{align}
In order to control the second term on the right-hand side of the above identity, we resort to the following lemma, whose proof is postponed to Section~\ref{sec:proof-inequality-upper-bound-picon-Vhat-Vstar}.
\begin{lemma}
\label{lem:upper-bound-picon-Vhat-Vstar}
It holds that 
\begin{align}
	 \Vhat^{\pistar}- \Vhat^{\picon}  \leq \frac{\xi }{1-\gamma} \one. 
	\label{eq:upper-bound-picon-Vhat-Vstar}
\end{align}
\end{lemma}
Combining Lemma~\ref{lem:upper-bound-picon-Vhat-Vstar} with \eqref{eq:basic-decomposition-123}, we arrive at
\begin{align}
	\Vstar - \bm{V}^{\picon} &= 
	\big( \bm{V}^{\pistar}  - \Vhat^{\pistar} \big) + \big( \Vhat^{\pistar}- \Vhat^{\picon} \big) + 
	\big( \Vhat^{\picon} - \bm{V}^{\picon}\big) 
	 \nonumber\\
	&\leq \big( \bm{V}^{\pistar} - \Vhat^{\pistar}  \big) +  \frac{\xi }{1-\gamma} \one + \big(\Vhat^{\picon} - \bm{V}^{\picon} \big) .
	\label{eq:decomposition-V-pic-hat}
\end{align}
%
%
Clearly, the first term of \eqref{eq:decomposition-V-pic-hat} has already been controlled in Section~\ref{sec:estimation-error-fixed-policy}, 
while the second term of \eqref{eq:decomposition-V-pic-hat} is extremely small
when we take $\xi =O\big(\frac{(1-\gamma)\varepsilon}{|\cS|^5|\cA|^5}\big)$. 
It thus suffices to bound the third term of \eqref{eq:decomposition-V-pic-hat}, which again requires decoupling the statistical dependence between $\picon$ and $\widehat{\bm P}$.

%
%
%
%

\paragraph{Representing $\picon$ via a small set of policies independent of $\widehat{\bm{P}}_{s,a}$.}

Akin to our analysis for the perturbed model-based planning algorithm in Section~\ref{sec:analysis}, a key step lies in demonstrating the connection between $\picon$ and a reasonably small collection of leave-one-out auxiliary MDPs.  
Towards this, we are in need of the following lemma,  
which characterizes certain ``stability'' of our conservative model-based strategy and lies at the core of our analysis. The proof is deferred to Section~\ref{sec:proof-lem:picon-is-stable}. 
\begin{lemma}
\label{lem:picon-is-stable}
	Consider any given Q-function $\widehat{Q}: \cS\times \cA \rightarrow \mathbb{R}$ and its associated value function $\widehat{V}: \cS \rightarrow \mathbb{R}$ (i.e.~$\widehat{V}(s)=\max_a \widehat{Q}(s,a)$ for all $s$).  Generate an independent random variable $\newnoise \sim \mathsf{Unif}(0,\xi)$. Then with probability at least $1-\delta$, 
\begin{align}
\label{eqn:pi-stable-nice} 
	\forall s \in \mathcal{S}, \qquad 
	\big\{a \in \mathcal{A}: \widehat{Q}(s,a) > \widehat{V}(s) - \newnoise \big\} 
	= \big\{ a \in \mathcal{A}: Q(s,a) > V(s) - \newnoise \big\}
\end{align}
holds simultaneously for all Q-function  $Q: \cS\times \cA \rightarrow \mathbb{R}$ (and its associated value function $V: \cS \rightarrow \mathbb{R}$) obeying 
\begin{equation}
	\max_{s,a} \big| Q(s,a) - \widehat{Q}(s,a) \big| \leq \frac{\xi\delta}{8|\mathcal{S}||\mathcal{A}|}.
	\label{eq:Q-Qhat-distance-bound}
\end{equation}
\end{lemma}
As an important implication of this lemma, the policy $\picon$ computed in \eqref{eq:policy-Q-approx-MDP} remains unchanged upon slight perturbation of the Q-function estimates.  
Armed with Lemma~\ref{lem:picon-is-stable} and the leave-one-out auxiliary MDPs $\{\widehat{\mathcal{M}}_{s,a,u}\}$ constructed in Section~\ref{sec:analysis}, our analysis proceeds as follows. 
\begin{itemize}
	\item For each $(s,a)\in \cS\times \cA$, there exists a point $u_0 \in \mathcal{N}_{(1-\gamma)\omega}$ (cf.~\eqref{eqn:net}) such that 
		the optimal Q-function of $\widehat{\bm{Q}}^{\star}_{s,a,u_0}$ of $\widehat{\mathcal{M}}_{s,a,u_0}$ obeys
		\begin{equation}
			\big\| \widehat{\bm{Q}}^{\star}_{s,a,u_0} - \widehat{\bm{Q}}^{\star} \big\|_{\infty} \leq \omega. 
			\label{eq:Qstar-Qhat-star-omega-dist}
		\end{equation}
		This is a fact that has already been established in the proof of Lemma~\ref{LemUnique}; see \eqref{eqn:watermelon}. 

	\item   Define a conservative policy for $\widehat{\mathcal{M}}_{s,a,u_0}$ as follows: 
		\[
			\forall s'\in \cS: \qquad 
			\widehat{\pi}_{s,a,u_0,\mathrm{c}} (s') 
			= \min \big\{ a'\in \cA: \widehat{Q}^{\star}_{s,a,u_0}(s',a') > \widehat{V}^{\star}_{s,a,u_0}(s') - \newnoise  \big\},
		\]
		where $\widehat{Q}^{\star}_{s,a,u_0}$ and $\widehat{V}^{\star}_{s,a,u_0}$ denote the optimal Q-function and optimal value function of $\widehat{\mathcal{M}}_{s,a,u_0}$, respectively. 
		Taking $\omega= \frac{\xi\delta}{8|\mathcal{S}||\mathcal{A}|}$ and invoking Lemma~\ref{lem:picon-is-stable} and \eqref{eq:Qstar-Qhat-star-omega-dist}, we arrive at
		\begin{equation}
			\picon = \widehat{\pi}_{s,a,u_0,\mathrm{c}}. 
			\label{eq:picon-equivalence-pi-sau}
		\end{equation}
		%
\end{itemize}
This result \eqref{eq:picon-equivalence-pi-sau} parallels Lemma~\ref{LemUnique} for the perturbed model-based planning algorithm, 
revealing that $\picon$ is representable using a policy independent of the randomness associated with $(s,a)$. 
The remaining proof of Theorem~\ref{Thm:sample-compl-conservative} then follows from an identical argument as in the proof of Theorem~\ref{Thm:sample-compl-main}, and is hence omitted here.

%
%
%
%
%
%
%

\section{Analysis: finite-horizon MDPs}

In this section, we outline the proof of Theorem~\ref{Thm:sample-compl-finite}. 
We shall start by introducing a set of convenient matrix notation before embarking on the main proof.

\subsection{Matrix notation and Bellman equations}

Akin to the infinite-horizon case, we introduce some matrix notation for finite-horizon MDPs. Analogous to Section~\ref{subsec:matrix-notation},  we introduce the following set of notation. 
\begin{itemize}
\item
$\bm{r}_h\in\mathbb{R}^{|\mathcal{S}||\mathcal{A}|}$: a vector representing the reward function $r_h$ at step $h$. 
\item 
$\bm{V}_h^{\pi}\in\mathbb{R}^{|\mathcal{S}|}$: a vector representing the value function $V_h^{\pi}$ of $\pi$ at step $h$.
\item
$\bm{V}_h^{\star}\in\mathbb{R}^{|\mathcal{S}|}$: a vector representing the optimal value function $V_h^{\star}$ at step $h$.
\item
$\bm{Q}_h^{\pi}\in\mathbb{R}^{|\mathcal{S}||\mathcal{A}|}$: a vector representing the Q-function $Q_h^{\pi}$ of $\pi$ at step $h$.
\item
$\bm{Q}_h^{\star}\in\mathbb{R}^{|\mathcal{S}||\mathcal{A}|}$: a vector representing the optimal Q-function ${Q}_h^{\star}$ at step $h$. 
\item
$\bm{P}_h\in\mathbb{R}^{|\mathcal{S}||\mathcal{A}|\times|\mathcal{S}|}$: a matrix representing the probability transition kernel $P_h$ at step $h$.
\item
$\bm P_{h,\pi}\in\mathbb{R}^{|\mathcal{S}|\times|\mathcal{S}|}$: a submatrix of $\bm{P}_h$,  which consists of the rows with indices coming from $\{(s,\pi_h(s))\mid s\in \cS\}$. 
\item
$\bm r_h^{\pi} \in\mathbb{R}^{|\mathcal{S}|}$: a subvector of $\bm r_h$, which consists of the rows with indices coming from $\{(s,\pi_h(s))\mid s\in \cS\}$.
\end{itemize}
Armed with the above notation, the Bellman equation here is given by
\begin{align}
\bm Q^{\pi}_h = \bm r_h + \bm P_h \bm V^{\pi}_{h+1}, \qquad 1\leq h\leq H, 
\end{align}
where we recall that for all $s \in \cS$
\begin{align}
V^{\pi}_h(s) = Q^{\pi}_h\big( s, \pi_h(s) \big) \quad \text{and} \quad
V^{\pi}_{H+1}(s) = 0.
\end{align}
%
%
This also allows one to derive
\begin{align}
\bm V^{\pi}_h = \bm r_h^{\pi} + \bm P_{h,\pi} \bm V^{\pi}_{h+1}. 
\end{align}
We shall also define $\widehat{\bm{V}}_h^{\pi}$, $\widehat{\bm{Q}}_h^{\pi}$, $\widehat{\bm{V}}_h^{\star}$, $\widehat{\bm{Q}}_h^{\star}$, $\widehat{\bm{P}}_h$, $\widehat{\bm{P}}_{h,\pi}$ w.r.t.~the empirical MDP $\widehat{\mathcal{M}}$ in an analogous fashion.

\subsection{An auxiliary value function sequence obeying Bernstein-type conditions}

Similar to the infinite-horizon case (in particular, Section~\ref{sec:value-estimation-Bernstein}), 
we find it convenient to introduce a collection of auxiliary vectors as follows.
For any $l\geq 0$, define
\begin{align}
	\bm{V}_{H+1}^{(l)} :=\bm 0 \qquad \text{and} \qquad \widehat{\bm{V}}_{H+1}^{(l)}:=\bm 0; 
	\label{defn-r-v-iterate-finite-Hplus1}
\end{align}
and for any $1 \le h \le H$ and any policy $\pi$, define the following sequences recursively: 
\begin{align}
\label{defn-r-v-iterate-finite}
\begin{array}{lll}
	\bm{r}_h^{(0)}:=\bm{r}_h^{\pi}, & \bm{V}_h^{(0)}:=\bm r_h^{(0)} + \bm P_{h,\pi} \bm V^{(0)}_{h+1}, & \widehat{\bm{V}}_h^{(0)}:=\bm r_h^{(0)} + \widehat{\bm P}_{h,\pi} \widehat{\bm V}^{(0)}_{h+1}, \\
	\bm{r}_h^{(l)}:=\sqrt{\mathsf{Var}_{\bm P_{h,\pi}}\big[\bm{V}^{(l-1)}_{h+1}\big]}, \quad & \bm{V}^{(l)}_h:=\bm r_h^{(l)} + \bm P_{h,\pi} \bm V^{(l)}_{h+1}, \quad & \widehat{\bm{V}}^{(l)}_h:=\bm r_h^{(l)} + \widehat{\bm P}_{h,\pi} \widehat{\bm V}^{(l)}_{h+1}, \qquad   l\geq 1.
\end{array}
\end{align}
As can be easily verified, $\{\bm{V}_h^{(0)}\}$ coincides with the value function of policy $\pi$ in the true MDP $\mathcal{M}$, 
while $\{\widehat{\bm{V}}_h^{(0)}\}$ corresponds to the value function of policy $\pi$ in the empirical MDP $\widehat{\mathcal{M}}$.

As it turns out, if the above auxiliary sequence satisfies certain Bernstein-type conditions, then 
we can establish a useful upper bound on the entrywise difference between $\bm{V}_h^{(0)}$ and $\widehat{\bm{V}}_h^{(0)}$, as stated below. 
The proof of this lemma is deferred to Section~\ref{sec:proof-lemma:dependent-policy-error-finite}. 
\begin{lemma}
\label{lemma:dependent-policy-error-finite}
Suppose that there exists some quantity $\beta_1>0$ such that the sequence $\{\bm{V}^{(l)}_{h+1}\}$ constructed in \eqref{defn-r-v-iterate-finite} obeys
\begin{equation}
	\left|\big(\widehat{\bm P}_{h,\pi} - \bm P_{h,\pi} \big)\bm{V}^{(l)}_{h+1}\right|\leq\sqrt{\frac{\beta_1}{N}}\sqrt{\mathsf{Var}_{\bm P_{h,\pi}}\big[\bm{V}^{(l)}_{h+1}\big]}+\frac{\beta_1 \big\|\bm{V}^{(l)}_{h+1}\big\|_{\infty}}{N}\bm{1}, \quad \forall 0\leq l\leq \log_2 H,\, 1 \le h \le H-1.
	\label{eq:Bernstein-Vl-general-finite}
\end{equation}
In addition, assume that $N>12H\beta_{1}$. Then we have  
\begin{align}
	\big\|\widehat{\bm{V}}^{(0)}_h - \bm{V}^{(0)}_h\big\|_{\infty} 
	\leq 6H \sqrt{\frac{3\beta_{1}H}{N}}
\end{align}
for all $1 \le h \le H$. 
\end{lemma} 
%

\subsection{Proof of Theorem \ref{Thm:sample-compl-finite}}

Let us begin with the following elementary decomposition:
\begin{align}
\bm{V}_{h}^{\star}-\bm{V}_{h}^{\widehat{\pi}^{\star}} & =\big(\Vhat_{h}^{\widehat{\pi}^{\star}}-\bm{V}_{h}^{\widehat{\pi}^{\star}}\big)+\big(\Vhat_{h}^{\pistar}-\Vhat_{h}^{\widehat{\pi}^{\star}}\big)+\big(\Vstar_{h}-\Vhat_{h}^{\pistar}\big)\nonumber\\
 & \leq\big(\Vhat_{h}^{\widehat{\pi}^{\star}}-\bm{V}_{h}^{\widehat{\pi}^{\star}}\big)+\big(\bm{V}_{h}^{\pistar}-\Vhat_{h}^{\pistar}\big). 
	\label{eq:Vhat-Vstar-pistar-tot-finite}
\end{align}
Here, the inequality follows from the definition $\bm{V}^{\star}_h=\bm{V}^{\pistar}_h$, as well as the fact that  $\Vhat^{\pistar}_h \leq \Vhat^{\widehat{\pi}^{\star}}_h$ (since $\widehat{\pi}^{\star}$ is the optimal policy of the empirical MDP). 
%
%
In light of \eqref{eq:Vhat-Vstar-pistar-tot-finite}, there are two terms that need to be controlled.

We intend to bound both $\Vhat^{\widehat{\pi}^{\star}}_h - \bm{V}^{\widehat{\pi}^{\star}}_h$ and $\bm{V}^{\pistar}_h - \Vhat^{\pistar}_h$ by means of Lemma \ref{lemma:dependent-policy-error-finite}. 
Towards this, we first note that for any policy $\pi$, the associated value functions of the MDP and the empirical MDP obey the following Bellman equations:
\begin{align*}
 & \bm{V}_{h}^{\pi}:=\bm{r}_{h}^{\pi}+\bm{P}_{h,\pi}\bm{V}_{h+1}^{\pi},\qquad\widehat{\bm{V}}_{h}^{\pi}:=\bm{r}_{h}^{\pi}+\widehat{\bm{P}}_{h,\pi}\widehat{\bm{V}}_{h+1}^{\pi},
\end{align*}
along with the boundary conditions
$\bm{V}_{H+1}^{\pi}=\widehat{\bm{V}}_{H+1}^{\pi}=\bm{0}$. 
This indicates that the vector $\bm{V}_h^{(0)}$ (resp.~$\widehat{\bm{V}}_h^{(0)}$) constructed in \eqref{defn-r-v-iterate-finite} is precisely the value function of policy $\pi$ at step $h$ in the true MDP (resp.~empirical MDP). 
As a result, in order to invoke Lemma \ref{lemma:dependent-policy-error-finite}, it is sufficient to verify the Bernstein-type condition \eqref{eq:Bernstein-Vl-general-finite} w.r.t.~policies $\pistar$ and $\widehat{\pi}^{\star}$ for some sufficiently small quantity $\beta_1$. 
\begin{itemize}
	\item Let us begin with the optimal policy $\pistar$, which is fixed and statistically independent of the data samples. 
		As a result, if we take $\pi=\pistar$ during the construction \eqref{defn-r-v-iterate-finite}, then it is clearly seen that 
		$\Phat_h$ is statistically independent of $\bm{V}_{h+1}^{(l)}$. 
		Applying the Bernstein inequality together with the union bound then guarantees that: with probability exceeding $1 - \delta$, 
\begin{align}
	\Big|(\Phat_h - \bP_h) \bm{V}_{h+1}^{(l)}\Big| \leq 
	\sqrt{\frac{\beta_1}{N}}\sqrt{  \mathsf{Var}_{\bP_h}\big[\bm{V}^{(l)}_{h+1}\big]  }+\frac{\beta_1\big\|\bm{V}_{h+1}^{(l)}\big\|_{\infty}}{N}\bm 1
	\label{eq:bernstein-pi-star-finite}
\end{align}
holds uniformly over all $0\leq l\leq  \log_2 H$, $1 \le h \le H-1$, $(s,a) \in \mathcal{S} \times \mathcal{A}$,
where $\beta_1$ is given by
\begin{align}
	\beta_{1}:=4\log\Big(\frac{H |\mathcal{S}||\mathcal{A}|}{\delta}\Big).
	\label{eq:size-beta1-finite-star}
\end{align}
Armed with Condition~\eqref{eq:bernstein-pi-star-finite}, we can readily invoke Lemma \ref{lemma:dependent-policy-error-finite} to reach 
\begin{align*}
	\big\|  \widehat{\bm{V}}_{h}^{\pi^{\star}} - \bm{V}_{h}^{\pi^{\star}} \big\|_{\infty} &= \big\|  \widehat{\bm{V}}_{h}^{(0)} - \bm{V}_{h}^{(0)} \big\|_{\infty}  \leq 6H \sqrt{\frac{3\beta_1 H}{N}} 
	 \leq 6H \sqrt{\frac{12 H \log\big(\frac{H |\mathcal{S}||\mathcal{A}|}{\delta}\big) }{N}}
\end{align*}
with probability at least $1-\delta$.

\item Next, we move on to the policy $\widehat{\pi}^{\star}$ by taking $\pi=\widehat{\pi}^{\star}$ during the construction \eqref{defn-r-v-iterate-finite}. 
Note that $\bm{V}_{h+1}^{(l)}$ depends only on $\widehat{\pi}_j^{\star}$ $(j\geq h+1)$. 
		In view of our assumption on $\widehat{\pi}^{\star}$ (i.e., it is computed backward via dynamic programming), $\widehat{\pi}_i^{\star}$ is independent of any $\widehat{\bm P}_j$ with $j<i$, 
		and hence    $\bm{V}_{h+1}^{(l)}$ is statistically independent from $\Phat_h$. Consequently, the preceding bounds \eqref{eq:bernstein-pi-star-finite} and \eqref{eq:size-beta1-finite-star} continue to hold. 
		All of this immediately results in
\begin{align*}
	\big\|  \widehat{\bm{V}}_{h}^{\widehat{\pi}^{\star}} - \bm{V}_{h}^{\widehat{\pi}^{\star}} \big\|_{\infty}  
	 \leq 6H \sqrt{\frac{12 H \log\big(\frac{H |\mathcal{S}||\mathcal{A}|}{\delta}\big) }{N}}
\end{align*}
with probability exceeding $1-\delta$.
\end{itemize}

Substituting the above bounds into \eqref{eq:Vhat-Vstar-pistar-tot-finite}, we arrive at
\begin{align}
	\bm{0} \leq \bm{V}_{h}^{\star}-\bm{V}_{h}^{\widehat{\pi}^{\star}} 
	\leq 12H \sqrt{\frac{12 H \log\big(\frac{H |\mathcal{S}||\mathcal{A}|}{\delta}\big) }{N}} \,\bm{1} 
	\label{eq:RHS-537}
\end{align}
with probability greater than $1-2\delta$, provided that $N\geq 48 H\log\big(\frac{H |\mathcal{S}||\mathcal{A}|}{\delta}\big)$. 
By taking the right-hand side of \eqref{eq:RHS-537} to be smaller than $\varepsilon \bm 1$, we immediately conclude the proof.

\section{Discussion}
\label{sec:discussion}

This paper has demonstrated that (some variants of) model-based planning algorithms achieve the minimax sample complexity in the presence of a generative model, as soon as
the sample size exceeds the order of $\frac{|\cS||\cA|}{1-\gamma}$ for $\gamma$-discounted infinite horizon MDPs and $|\cS||\cA|H^2$ for time-inhomogeneous finite-horizon MDPs (modulo some log factor). Compared to prior literature, our result has considerably broadened the sample size range,  allowing us to pin down a complete trade-off curve between sample complexity and statistical accuracy.


The present work opens up several directions for future investigation, which we discuss in passing below. 

\begin{itemize}
	\item {\em Is perturbation or conservative action selection necessary for infinite-horizon MDPs?} The planning algorithm analyzed here for infinite-horizon MDPs is either applied to a perturbed variant of the empirical MDP (as in perturbed model-based planning) or run in a conservative manner (as in conservative model-based planning).
		This, however, gives rise to a natural question regarding the necessity of perturbation or conservative action selection: can we achieve optimal performance directly using {\em plain} model-based planning on the empirical MDP? While we conjecture that the answer is affirmative, settling this conjecture requires new techniques beyond the analysis framework of this paper.

\item {\em Improved analysis for model-free algorithms.} As mentioned previously, a even more severe sample complexity barrier is present in all prior theory regarding model-free approaches (e.g.~\citet{wainwright2019variance,sidford2018near,li2020sample}). Our analysis might shed light on how to overcome such barriers for model-free approaches.

\item {\em Time-homogeneous finite-horizon MDPs.} 
When it comes to finite-horizon MDPs, the present work concentrates on time-inhomogeneous MDPs where the probability transition kernels may vary across time steps. 
		Another important scenario is concerned with time-homogeneous MDPs, where $P_1=P_2=\cdots=P_H$. It remains unclear how to develop tight sample analysis for time-homogeneous MDPs due to the lack of statistical independence across time steps (namely, we shall use all samples to estimate the kernels across time steps as they are identical). 

\item {\em Markovian sample trajectories.} 
Going beyond the generative model,  another common form of data samples takes the form of a Markovian sample trajectory, which is generated by taking actions according to a stationary behavior policy in the MDP. 
This is also referred to as the asynchronous setting in the context of Q-learning \citep{tsitsiklis1994asynchronous}. 
While the sample complexity of several RL algorithms under this data-generating mechanism has been studied in prior literature (e.g.~\citet{li2020sample,li2021q,qu2020finite}), 
it remains unclear how to achieve  minimax optimality for the full $\varepsilon$-range, due to the complicated statistical dependency across time. 
The recent work \citet{li2022settling} demonstrated the plausibility of converting a finite-horizon Markovian trajectory into independent samples via two-fold sample splitting in the context of offline RL. 
It would be interesting to investigate whether one could employ a similar idea --- in conjunction with a proper leave-one-out analysis framework --- to settle the sample complexity in the presence of Markovian samples.

\item {\em Online exploratory RL.} 
	In practice, there is no shortage of applications where the learner acquires data samples by executing the MDP in real time. 
		This corresponds to an important setting, called online RL, that requires careful managing of the exploration-exploitation tradeoff  \citep{jin2018q,bai2019provably,li2021breaking}. 
		Interestingly, the model-based approach --- with proper modification to implement optimism in the face of uncertainty --- achieves minimax-optimal regret asymptotically  \citep{azar2017minimax}, 
		although its performance in the sample-starved regime remains largely unknown. 
		It would be of great interest to see whether the analysis ideas developed herein 
		could help characterize the sample efficiency of model-based online RL for the entire $\varepsilon$-range.

\item {\em Beyond the tabular setting.} The current paper focuses on the tabular setting with finite state and action spaces.
While we improve the sample size range, the sample complexities might still be prohibitively large when $|\cS|$ and $|\cA|$ are enormous. 
Therefore, it is desirable to further investigate settings where low-complexity function approximation is employed to improve the efficiency (e.g.~\citet{yang2019sample,jin2020provably,li2021sample}).

\end{itemize}

\section*{Acknowledgements}

Y.~Wei is supported in part by the the NSF grants CCF-2106778, DMS-2147546/2015447 and  CAREER award DMS-2143215.
Y.~Chi is supported in part by the ONR grants N00014-18-1-2142 and N00014-19-1-2404, 
the NSF grants CCF-1806154, CCF-2007911 and CCF-2106778. 
Y.~Chen is supported in part by the Alfred P.~Sloan Research Fellowship, the Google Research Scholar Award, the AFOSR grants FA9550-19-1-0030 and FA9550-22-1-0198, 
the ONR grant N00014-22-1-2354,  
and the NSF grants CCF-2221009, CCF-1907661, DMS-2014279, IIS-2218713 and IIS-2218773.  
We thank Qiwen Cui for pointing out an issue in Section~\ref{sec:proof-lem-tie-breaking} in an early version of this paper, and 
thank Shicong Cen, Chen Cheng and Cong Ma for numerous discussions. 
Part of this work was done while G.~Li, Y.~Wei and Y.~Chen were visiting the Simons Institute for the Theory of Computing.



\appendix 

\section{Preliminary facts}

We begin by recording a few elementary facts about $\bm{P}^{\pi}$ and $\bm{P}_{\pi}$ (see definitions in \eqref{eqn:ppivq}). 
These are standard results and we omit the proofs for brevity. 

\begin{lemma}\label{lemma:basic-properties-I-gammaP}For any policy $\pi$, any probability
	transition matrix $\bm{P}\in \mathbb{R}^{|\mathcal{S}||\mathcal{A}|\times |\mathcal{S}|}$ and any $0<\gamma<1$, one has 

\begin{itemize}

	\item[(a)] $(\bm{I}-\gamma\bm{P}_{\pi})^{-1}=\sum_{i=0}^{\infty}(\gamma\bm{P}_{\pi})^{i}$;

\item[(b)] All entries of the matrix $(\bm{I}-\gamma\bm{P}_{\pi})^{-1}$
are non-negative;

\item[(c)] $\|(\bm{I}-\gamma\bm{P}_{\pi})^{-1}\|_{1}\leq1/(1-\gamma)$; 

\item[(d)] $(1-\gamma)(\bm{I}-\gamma\bm{P}_{\pi})^{-1}\bm{1}=\bm{1}$; 

\item[(e)] For any non-negative vectors $\bm{0}\leq\bm{r}_{1}\leq\bm{r}_{2}$ of compatible dimension,
one has $\bm{0}\leq(\bm{I}-\gamma\bm{P}_{\pi})^{-1}\bm{r}_{1}\leq(\bm{I}-\gamma\bm{P}_{\pi})^{-1}\bm{r}_{2}$. 

\end{itemize}
	The above results continue to hold if $\bm{P}_{\pi}$ is replaced by $\bm{P}^{\pi}$. 
\end{lemma}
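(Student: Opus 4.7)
The plan is to derive all five properties as elementary consequences of the fact that $\bm{P}_{\pi} = \bm{\Pi}^{\pi}\bm{P}$ (and analogously $\bm{P}^{\pi} = \bm{P}\bm{\Pi}^{\pi}$) is row-stochastic. First I would verify this: each row of $\bm{P}$ is a probability distribution, and each row of $\bm{\Pi}^{\pi}$ is the indicator of a single action, so the product matrix has non-negative entries with every row summing to one. Under the convention $\|\bm{A}\|_1:=\max_i \sum_j |A_{i,j}|$ adopted in Section~\ref{sec:notation}, this immediately gives $\|\bm{P}_{\pi}\|_1 = 1$, and the same reasoning shows $\|\bm{P}^{\pi}\|_1 = 1$.

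Given row-stochasticity, part (a) follows because $\|\gamma\bm{P}_{\pi}\|_1 \leq \gamma < 1$ guarantees invertibility of $\bm{I} - \gamma\bm{P}_{\pi}$ and absolute convergence of the Neumann series $\sum_{i\geq 0}(\gamma\bm{P}_{\pi})^i$. Part (b) is then a direct consequence: every $(\gamma\bm{P}_{\pi})^i$ is a product of non-negative matrices, hence non-negative, so the infinite sum inherits non-negativity entry by entry. For part (c), sub-multiplicativity of $\|\cdot\|_1$ combined with the Neumann expansion from (a) gives
\begin{align*}
\bigl\|(\bm{I}-\gamma\bm{P}_{\pi})^{-1}\bigr\|_{1} \;\leq\; \sum_{i=0}^{\infty} \gamma^{i}\|\bm{P}_{\pi}\|_{1}^{i} \;=\; \frac{1}{1-\gamma}.
\end{align*}

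Part (d) uses the stochasticity in a slightly different way: $\bm{P}_{\pi}\bm{1} = \bm{1}$ implies $(\bm{I}-\gamma\bm{P}_{\pi})\bm{1} = (1-\gamma)\bm{1}$, so applying $(\bm{I}-\gamma\bm{P}_{\pi})^{-1}$ to both sides (justified by (a)) yields the identity. Part (e) then follows from part (b): writing $\bm{r}_{2} - \bm{r}_{1} \geq \bm{0}$ and noting that a non-negative matrix maps non-negative vectors to non-negative vectors, we get both $(\bm{I}-\gamma\bm{P}_{\pi})^{-1}(\bm{r}_{2}-\bm{r}_{1}) \geq \bm{0}$ and $(\bm{I}-\gamma\bm{P}_{\pi})^{-1}\bm{r}_{1} \geq \bm{0}$, which together give the sandwich claimed. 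Since every step above used only the row-stochastic structure, the identical chain of reasoning transfers verbatim to $\bm{P}^{\pi}$ in place of $\bm{P}_{\pi}$. I do not anticipate any real obstacle here — these are textbook facts about contractive stochastic operators, and the only detail to watch is making sure the ``$\|\cdot\|_1$'' used is the paper's max-row-sum norm (which coincides with the $\ell_\infty \to \ell_\infty$ operator norm and for which row-stochastic matrices have unit norm).
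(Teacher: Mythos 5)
Your proof is correct and complete. The paper itself omits the proof of this lemma, remarking only that these are standard facts; your argument via row-stochasticity of $\bm{P}_{\pi}=\bm{\Pi}^{\pi}\bm{P}$ (and of $\bm{P}^{\pi}=\bm{P}\bm{\Pi}^{\pi}$), the Neumann series, and non-negativity is exactly the standard route being alluded to, including the correct identification of the paper's $\|\cdot\|_1$ as the max-row-sum norm.
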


\section{Proofs of auxiliary lemmas: infinite-horizon MDPs}

\subsection{Proofs of Lemma~\ref{lemma:fixed-policy-error} and Lemma~\ref{lemma:dependent-policy-error}}
\label{sec:proof-lemma-fixed-policy-error}

\paragraph{Auxiliary notation and preliminaries.} 

Before proceeding, we define several $|\mathcal{S}|$-dimensional auxiliary vectors $\bm{r}^{(i)},\bm{V}^{(i)},\widehat{\bm{V}}^{(i)}$
($1\leq i\leq m$) recursively as follows
\begin{align}
\label{eqn:superV}
\begin{array}{lll}
	\bm{r}^{(0)}:=\bm{r}_{\pi}, & \bm{V}^{(0)}:=(\bm{I}-\gamma\Pv)^{-1}\bm{r}^{(0)}, & \widehat{\bm{V}}^{(0)}:=(\bm{I}-\gamma\Phatv)^{-1}\bm{r}^{(0)},\\
\bm{r}^{(1)}:=\sqrt{\mathsf{Var}_{\Pv}\big[\bm{V}^{(0)}\big]}, & \bm{V}^{(1)}:=(\bm{I}-\gamma\Pv)^{-1}\bm{r}^{(1)}, & \widehat{\bm{V}}^{(2)}:=(\bm{I}-\gamma\Phatv)^{-1}\bm{r}^{(1)},\\
\quad\quad\quad\vdots & \quad\quad\quad\vdots & \quad\quad\quad\vdots\\
\bm{r}^{(m)}:=\sqrt{\mathsf{Var}_{\Pv}\big[\bm{V}^{(m-1)}\big]}, & \bm{V}^{(m)}:=(\bm{I}-\gamma\Pv)^{-1}\bm{r}^{(m)}, & \widehat{\bm{V}}^{(m)}:=(\bm{I}-\gamma\Phatv)^{-1}\bm{r}^{(m)},
\end{array}
\end{align}
where $m$ will be specified momentarily.

A crucial quantity that appears repeatedly in analyzing the above terms is $\|(\bm{I}-\gamma\bm{P}_{\pi})^{-1}\sqrt{\mathsf{Var}_{\bm{P}_{\pi}}(\bm{V})}\|_{\infty}$,
whose importance was already made apparent in the work \cite{azar2013minimax}.
A widely used upper bound on this quantity, originally due to
\cite[Lemma 8]{azar2013minimax}, is given by
\begin{equation}
\left\Vert (\bm{I}-\gamma\bm{P}_{\pi})^{-1}\sqrt{\mathsf{Var}_{\bm{P}_{\pi}}(\bm{V})}\right\Vert _{\infty}\leq\frac{2\log2}{\gamma(1-\gamma)^{1.5}}\left\Vert \bm{r}\right\Vert _{\infty}.
\label{eq:Azar-bound}
\end{equation}
This bound turns out to be loose for our purpose, and
we develop an improved bound as follows, whose proof is deferred to Appendix~\ref{sec:proof_lemma_varp-vbound}. 


\begin{lemma}
\label{lemma:VarP-V-bound}
Consider any policy $\pi$ and any probability transition matrix $\bm{P}\in \mathbb{R}^{|\mathcal{S}||\mathcal{A}|\times |\mathcal{S}|}$. Let 
 $\bm{V}$ be a vector
obeying $\bm{V}=(\bm{I}-\gamma\bm{P}_{\pi})^{-1}\bm{r}_{\pi}$ for some $ |\mathcal{S}|$-dimensional vector
$\bm{r}_{\pi}\geq\bm{0}$.
For any $0<\gamma<1$, one has 
\begin{equation}
\left\Vert (\bm{I}-\gamma\bm{P}_{\pi})^{-1}\sqrt{\mathsf{Var}_{\bm{P}_{\pi}}(\bm{V})}\right\Vert _{\infty}\leq\frac{4}{\gamma\sqrt{1-\gamma}}\left\Vert \bm{V}\right\Vert _{\infty}.\label{eq:lemma-VarP-V-bound}
\end{equation}
\end{lemma}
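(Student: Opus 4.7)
\textbf{Proof proposal for Lemma~\ref{lemma:VarP-V-bound}.}

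The plan is to square both sides of the target inequality and bound the easier quantity $\|(\bm{I}-\gamma\bm{P}_{\pi})^{-1}\mathsf{Var}_{\bm{P}_{\pi}}(\bm{V})\|_{\infty}$ first, then recover the claimed bound on the square-root version via a Cauchy--Schwarz/Jensen step. Concretely, since every row of $(\bm{I}-\gamma\bm{P}_{\pi})^{-1}$ is non-negative (Lemma~\ref{lemma:basic-properties-I-gammaP}(b)) and sums to $1/(1-\gamma)$ (Lemma~\ref{lemma:basic-properties-I-gammaP}(d)), the entrywise Cauchy--Schwarz inequality yields
\begin{equation*}
\big\|(\bm{I}-\gamma\bm{P}_{\pi})^{-1}\sqrt{\mathsf{Var}_{\bm{P}_{\pi}}(\bm{V})}\big\|_{\infty} \;\leq\; \frac{1}{\sqrt{1-\gamma}}\,\sqrt{\big\|(\bm{I}-\gamma\bm{P}_{\pi})^{-1}\mathsf{Var}_{\bm{P}_{\pi}}(\bm{V})\big\|_{\infty}}.
\end{equation*}
So it suffices to show that the $\infty$-norm on the right-hand side is at most $2\|\bm{V}\|_{\infty}^{2}/\gamma^{2}$.

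For this, I would exploit the Bellman consistency $\bm{V}=\bm{r}_{\pi}+\gamma\bm{P}_{\pi}\bm{V}$, which gives $\gamma\bm{P}_{\pi}\bm{V}=\bm{V}-\bm{r}_{\pi}$. Squaring entrywise and substituting into the definition $\mathsf{Var}_{\bm{P}_{\pi}}(\bm{V})=\bm{P}_{\pi}(\bm{V}\circ\bm{V})-(\bm{P}_{\pi}\bm{V})\circ(\bm{P}_{\pi}\bm{V})$ produces the algebraic identity
\begin{equation*}
\gamma^{2}\mathsf{Var}_{\bm{P}_{\pi}}(\bm{V}) \;=\; 2\,\bm{V}\circ\bm{r}_{\pi} - \bm{r}_{\pi}\circ\bm{r}_{\pi} - (\bm{I}-\gamma^{2}\bm{P}_{\pi})(\bm{V}\circ\bm{V}).
\end{equation*}
Applying $(\bm{I}-\gamma\bm{P}_{\pi})^{-1}$ to both sides, and then using the simple algebraic factorisation
\begin{equation*}
(\bm{I}-\gamma\bm{P}_{\pi})^{-1}(\bm{I}-\gamma^{2}\bm{P}_{\pi}) \;=\; \bm{I}+\gamma(1-\gamma)(\bm{I}-\gamma\bm{P}_{\pi})^{-1}\bm{P}_{\pi}
\end{equation*}
(which follows by writing $\bm{I}-\gamma^{2}\bm{P}_{\pi}=(\bm{I}-\gamma\bm{P}_{\pi})+\gamma(1-\gamma)\bm{P}_{\pi}$), the third term becomes $-\bm{V}\circ\bm{V}-\gamma(1-\gamma)(\bm{I}-\gamma\bm{P}_{\pi})^{-1}\bm{P}_{\pi}(\bm{V}\circ\bm{V})$, which is entrywise non-positive.

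Discarding these non-positive terms and the $-\bm{r}_{\pi}\circ\bm{r}_{\pi}$ term (all of which have the right sign by Lemma~\ref{lemma:basic-properties-I-gammaP}(b),(e)), I obtain
\begin{equation*}
\gamma^{2}(\bm{I}-\gamma\bm{P}_{\pi})^{-1}\mathsf{Var}_{\bm{P}_{\pi}}(\bm{V}) \;\leq\; 2\,(\bm{I}-\gamma\bm{P}_{\pi})^{-1}(\bm{V}\circ\bm{r}_{\pi}) \;\leq\; 2\|\bm{V}\|_{\infty}\,(\bm{I}-\gamma\bm{P}_{\pi})^{-1}\bm{r}_{\pi} \;=\; 2\|\bm{V}\|_{\infty}\,\bm{V},
\end{equation*}
where the last step uses the hypothesis $\bm{V}=(\bm{I}-\gamma\bm{P}_{\pi})^{-1}\bm{r}_{\pi}$. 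Taking $\infty$-norms gives $\|(\bm{I}-\gamma\bm{P}_{\pi})^{-1}\mathsf{Var}_{\bm{P}_{\pi}}(\bm{V})\|_{\infty}\leq 2\|\bm{V}\|_{\infty}^{2}/\gamma^{2}$, and plugging into the Cauchy--Schwarz step yields $\sqrt{2}\|\bm{V}\|_{\infty}/(\gamma\sqrt{1-\gamma})\leq 4\|\bm{V}\|_{\infty}/(\gamma\sqrt{1-\gamma})$, which is the desired bound. The main obstacle I anticipate is step 3---finding the right quadratic Bellman identity and the algebraic rewriting of $(\bm{I}-\gamma\bm{P}_{\pi})^{-1}(\bm{I}-\gamma^{2}\bm{P}_{\pi})$ that collapses the nasty $(\bm{V}\circ\bm{V})$-term into a constant-sign contribution; without this identity one is stuck with the coarser bound $\mathsf{Var}\leq\bm{P}_{\pi}(\bm{V}\circ\bm{V})\leq\|\bm{V}\|_{\infty}^{2}\bm{1}$, which loses the needed $\sqrt{1-\gamma}$ factor and yields only the classical estimate~\eqref{eq:Azar-bound} instead of the improvement we want.
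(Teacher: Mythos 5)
Your proposal is correct, and it reaches the stated bound (in fact with the sharper constant $\sqrt{2}$ in place of $4$). It shares the paper's two central ingredients --- the Jensen/Cauchy--Schwarz reduction of $\|(\bm{I}-\gamma\bm{P}_{\pi})^{-1}\sqrt{\mathsf{Var}_{\bm{P}_{\pi}}(\bm{V})}\|_{\infty}$ to the un-rooted quantity, and the quadratic Bellman identity $\gamma^{2}\mathsf{Var}_{\bm{P}_{\pi}}(\bm{V})=2\,\bm{V}\circ\bm{r}_{\pi}-\bm{r}_{\pi}\circ\bm{r}_{\pi}-(\bm{I}-\gamma^{2}\bm{P}_{\pi})(\bm{V}\circ\bm{V})$ --- but it diverges in how the troublesome $(\bm{I}-\gamma^{2}\bm{P}_{\pi})(\bm{V}\circ\bm{V})$ term is neutralized. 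The paper first replaces $(\bm{I}-\gamma\bm{P}_{\pi})^{-1}$ by $2(\bm{I}-\gamma^{2}\bm{P}_{\pi})^{-1}$ via an external norm inequality imported from \cite[Lemma 4]{agarwal2019optimality}, precisely so that $(\bm{I}-\gamma^{2}\bm{P}_{\pi})^{-1}$ exactly cancels $(\gamma^{2}\bm{P}_{\pi}-\bm{I})$; it then splits the remainder with $\sqrt{a+b}\le\sqrt{a}+\sqrt{b}$ and bounds the two pieces separately, accumulating the constant $\sqrt{2}+2\le 4$. You instead keep $(\bm{I}-\gamma\bm{P}_{\pi})^{-1}$ throughout and use the elementary factorisation $(\bm{I}-\gamma\bm{P}_{\pi})^{-1}(\bm{I}-\gamma^{2}\bm{P}_{\pi})=\bm{I}+\gamma(1-\gamma)(\bm{I}-\gamma\bm{P}_{\pi})^{-1}\bm{P}_{\pi}$ to see that this term contributes with a definite (non-positive) sign and can simply be discarded, after which $(\bm{I}-\gamma\bm{P}_{\pi})^{-1}(\bm{V}\circ\bm{r}_{\pi})\le\|\bm{V}\|_{\infty}\bm{V}$ finishes the job in one line. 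What your route buys is self-containedness (no appeal to the Agarwal--Kakade--Yang lemma, whose entrywise version is in fact false and which must be invoked carefully as a norm statement), a single-term bound rather than a two-term split, and a slightly better constant; the paper's route buys nothing extra here beyond reusing a lemma it already cites elsewhere. All the individual steps you give --- the row-sum identity from Lemma~\ref{lemma:basic-properties-I-gammaP}(d), the non-negativity and monotonicity from parts (b) and (e), and the sign analysis --- check out.
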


\begin{remark}In comparison to the bound \eqref{eq:Azar-bound}
derived in \cite[Lemma 8]{azar2013minimax}, Lemma \ref{lemma:VarP-V-bound}
offers an improved upper bound stated directly in terms of
the properties of $\bm{V}$ rather than those of $\bm{r}$. \end{remark}

As it turns out, Lemma \ref{lemma:VarP-V-bound}
allows us to obtain an entrywise bound for $\bm{V}^{(l)}$ $(1\leq l\leq m)$.
To begin with, the first term $\bm{V}^{(1)}$ satisfies 
\begin{equation}
	\big\|\bm{V}^{(1)}\big\|_{\infty}=\Big\|(\bm{I}-\gamma\Pv)^{-1}\sqrt{\mathsf{Var}_{\Pv}\big[\bm{V}^{(0)}\big]}\Big\|_{\infty} 
	\label{eq:V1-inf-bound}
\end{equation}
since $\bm{r}^{(1)}=\sqrt{\mathsf{Var}_{\Pv}\big[\bm{V}^{(0)}\big]}$. 
Next, for any $l>1$ one has 
\begin{align*}
	\big\|\bm{V}^{(l)}\big\|_{\infty} & = \big\|(\bm{I}-\gamma\Pv)^{-1}\bm{r}^{(l)}\big\|_{\infty}  =\Big\|(\bm{I}-\gamma\Pv)^{-1}\sqrt{\mathsf{Var}_{\Pv}\big[\bm{V}^{(l-1)}\big]}\Big\|_{\infty}\\
 & \leq\frac{4}{\gamma\sqrt{1-\gamma}}\big\|\bm{V}^{(l-1)}\big\|_{\infty},
\end{align*}
where the second identity results from the definition of $\bm{r}^{(l)}$,
and the last inequality comes from Lemma \ref{lemma:VarP-V-bound}.
As a consequence, applying this inequality recursively gives 
\begin{align}
\big\|\bm{V}^{(l)}\big\|_{\infty}\leq\Big(\frac{4}{\gamma\sqrt{1-\gamma}}\Big)^{l-1}\big\|\bm{V}^{(1)}\big\|_{\infty} .
 \label{eq:Vl-inf-norm-bound}
\end{align}


\paragraph*{Main proof.}
Equipped with the above facts, we are now in a position to prove the
lemmas, for which we start with the more general one --- Lemma~\ref{lemma:dependent-policy-error}. Consider any $0\leq l \leq m$. We first observe that
\begin{align}
\widehat{\bm{V}}^{(l)}-\bm{V}^{(l)} & =(\bm{I}-\gamma\Phatv)^{-1}\bm{r}^{(l)}-\bm{V}^{(l)}\nonumber \\
 & =\big(\bm{I}-\gamma\Phatv\big)^{-1} \big(\bm{I}-\gamma\Pv\big)\bm{V}^{(l)} -\big(\bm{I}-\gamma\Phatv\big)^{-1}\big(\bm{I}-\gamma\Phatv\big)\bm{V}^{(l)}\nonumber \\
 & =\gamma\big(\bm{I}-\gamma\Phatv\big)^{-1}\big(\Phatv-\Pv\big)\bm{V}^{(l)},\label{eq:Vl-Vhat-l-perturbation}
\end{align}
where the second line follows since, by definition, $\big(\bm{I}-\gamma\Pv\big)\bm{V}^{(l)}=\bm{r}^{(l)}$.
Suppose that there exists some quantity $\beta_1>0$ such that the following condition 
\begin{equation}
	\Big|\big(\Phatv-\Pv\big)\bm{V}^{(l)}\Big|\leq\sqrt{\frac{\beta_1}{N}}\sqrt{\mathsf{Var}_{\Pv}\big[\bm{V}^{(l)}\big]}+\frac{\big\|\bm{V}^{(l)}\big\|_{\infty}\beta_1}{N}\bm{1}
	\label{eq:Bernstein-Vl}
\end{equation}
holds uniformly for all $0\leq l\leq m$. Then this combined with (\ref{eq:Vl-Vhat-l-perturbation}) gives 
\begin{align*}
\big\|\widehat{\bm{V}}^{(l)}-\bm{V}^{(l)}\big\|_{\infty} & =\gamma\Big\|\big(\bm{I}-\gamma\Phatv\big)^{-1}\big(\Phatv-\Pv\big)\bm{V}^{(l)}\Big\|_{\infty}\nonumber\\
 & \overset{(\mathrm{i})}{\leq}\gamma\Big\|\big(\bm{I}-\gamma\Phatv\big)^{-1}\Big|\big(\Phatv-\Pv\big)\bm{V}^{(l)}\Big|\Big\|_{\infty}\nonumber\\
 & \overset{(\mathrm{ii})}{\leq}\gamma\sqrt{\frac{\beta_{1}}{N}}\Big\|\big(\bm{I}-\gamma\Phatv\big)^{-1}\sqrt{\mathsf{Var}_{\Pv}\big[\bm{V}^{(l)}\big]}\Big\|_{\infty}+\frac{\gamma\big\|\bm{V}^{(l)}\big\|_{\infty}\beta_{1}}{N}\Big\|\big(\bm{I}-\gamma\Phatv\big)^{-1}\bm{1}\Big\|_{\infty} . \nonumber
\end{align*}
Here, (i) follows since $\big(\bm{I}-\gamma\Phatv\big)^{-1}$
is a non-negative matrix, (ii) comes from (\ref{eq:Bernstein-Vl})
and the triangle inequality.
Recalling the definition of $\bm{r}^{(l)}$
and $\widehat{\bm{V}}^{(l)}$ and invoking Lemma~\ref{lemma:basic-properties-I-gammaP}(d), we can further bound the above as 
\begin{align}
\big\|\widehat{\bm{V}}^{(l)}-\bm{V}^{(l)}\big\|_{\infty} & \leq\gamma\sqrt{\frac{\beta_{1}}{N}}\big\|\widehat{\bm{V}}^{(l+1)}\big\|_{\infty}+\frac{\gamma\big\|\bm{V}^{(l)}\big\|_{\infty}\beta_{1}}{(1-\gamma)N}\nonumber\\
 & \leq\gamma\sqrt{\frac{\beta_{1}}{N}}\big\|\widehat{\bm{V}}^{(l+1)}-\bm{V}^{(l+1)}\big\|_{\infty}
   +\gamma\sqrt{\frac{\beta_{1}}{N}}\big\|\bm{V}^{(l+1)}\big\|_{\infty}+\frac{\gamma\beta_{1}}{(1-\gamma)N}\big\|\bm{V}^{(l)}\big\|_{\infty}, 
   \label{eq:upper-bound-Vl-Vhatl}
\end{align}
where the last inequality is a consequence of the triangle inequality.

The above inequality \eqref{eq:upper-bound-Vl-Vhatl} provides a recursive relation that in turn allows
for an effective upper bound. Specifically, combining the inequalities (\ref{eq:Vl-inf-norm-bound}) and (\ref{eq:upper-bound-Vl-Vhatl}) leads to 
\begin{align*}
  \big\|\widehat{\bm{V}}^{(0)}-\bm{V}^{(0)}\big\|_{\infty} 
	&\leq 
  \gamma\sqrt{\frac{\beta_{1}}{N}}\big\|\widehat{\bm{V}}^{(1)}-\bm{V}^{(1)}\big\|_{\infty}
   + \gamma\sqrt{\frac{\beta_{1}}{N}}\big\|\bm{V}^{(1)}\big\|_{\infty}
   + \frac{\gamma\beta_{1}}{(1-\gamma)N}\big\|\bm{V}^{(0)}\big\|_{\infty} \\
	& =: 
  	b_1 \big\|\widehat{\bm{V}}^{(1)}-\bm{V}^{(1)}\big\|_{\infty}
   + b_1 \big\|\bm{V}^{(1)}\big\|_{\infty}
   + \frac{\gamma\beta_{1}}{(1-\gamma)N}\big\|\bm{V}^{(0)}\big\|_{\infty}	,
   \end{align*}
and for $l \geq 1$,
\begin{align*}  
  \big\|\widehat{\bm{V}}^{(l)}-\bm{V}^{(l)}\big\|_{\infty}
  & \leq\gamma\sqrt{\frac{\beta_{1}}{N}}\big\|\widehat{\bm{V}}^{(l+1)}-\bm{V}^{(l+1)}\big\|_{\infty}
  +
  \left(4\sqrt{\frac{\beta_1}{(1-\gamma)N}} + \frac{\gamma\beta_1}{(1-\gamma)N}\right)
  \left(\frac{4}{\gamma\sqrt{1-\gamma}}\right)^{l-1}\|\bm V^{(1)}\|_{\infty}\\
 & =: b_{1}\big\|\widehat{\bm{V}}^{(l+1)}-\bm{V}^{(l+1)}\big\|_{\infty}+b_{2}b_{3}^{l-1}\|\bm V^{(1)}\|_{\infty}.
\end{align*}
Here for notational simplicity, we introduce 
\begin{align*}
b_{1}:=\gamma\sqrt{\frac{\beta_{1}}{N}},
\qquad 
b_{2}:= 4\sqrt{\frac{\beta_1}{(1-\gamma)N}} + \frac{\gamma\beta_1}{(1-\gamma)N},
\qquad b_{3}:=\frac{4}{\gamma\sqrt{1-\gamma}}.
\end{align*}
Invoking the above recursive relation, we can arrange terms to reach
\begin{align}
\big\|\widehat{\bm{V}}^{(0)}-\bm{V}^{(0)}\big\|_{\infty} & \leq\underbrace{b_{1}^{m}\big\|\widehat{\bm{V}}^{(m)}-\bm{V}^{(m)}\big\|_{\infty}}_{=:\alpha_{1}}+
\underbrace{\Big( b_1b_{2}\sum_{l=0}^{m-2}(b_{1}b_{3})^{l} + b_1 \Big)}_{=:\alpha_{2}}\big\|\bm{V}^{(1)}\big\|_{\infty}
   + \frac{\gamma\beta_{1}}{(1-\gamma)N}\big\|\bm{V}^{(0)}\big\|_{\infty}.
\label{eqn:Chopin}
\end{align}

\medskip
\noindent \textit{Controlling the quantity $\alpha_{2}$.}
Now it suffices to control the two terms on the right-hand side of the inequality~\eqref{eqn:Chopin} separately, towards which we shall
start with the quantity $\alpha_{2}.$ Assuming that $N\geq 64\beta_1/(1-\gamma)$,
one can easily verify $b_{1}b_{3}\leq1/2$. The summation of the geometric
sequence thus gives 
\begin{align}
	\notag \alpha_{2} &:= b_1 b_{2}\sum_{l=0}^{m-2}(b_{1}b_{3})^{l} + b_1 
\leq \frac{b_1b_{2}}{1-b_{1}b_{3}} + b_1 
\leq 2b_1b_{2} + b_1
	\\ &= \gamma \sqrt{\frac{\beta_1}{N}}
	\left\{1 + 8\sqrt{\frac{\beta_1}{(1-\gamma)N}} + \frac{2\gamma\beta_1}{(1-\gamma)N} \right\}
  \leq 3\gamma \sqrt{\frac{\beta_1}{N}}, \label{eq:T2-bound}
\end{align}
where the last step holds with the assumption $N\geq\frac{64\beta_{1}}{1-\gamma}$.

\medskip
\noindent \textit{Controlling the quantity $\alpha_{1}$.} 
Next, we proceed to the quantity $\alpha_{1}$, which requires the
control of \mbox{$\big\|\widehat{\bm{V}}^{(m)}-\bm{V}^{(m)}\big\|_{\infty}$}.
In view of the identity \eqref{eq:Vl-Vhat-l-perturbation}, we obtain
\begin{align*}
\big\|\widehat{\bm{V}}^{(m)}-\bm{V}^{(m)}\big\|_{\infty} & =\gamma\big\|(\bm{I}-\gamma\Phatv)^{-1}(\Pv-\Phatv)\bm{V}^{(m)}\big\|_{\infty}\\
 & \leq\gamma\Bigg\|(\bm{I}-\gamma\Phatv)^{-1}\Bigg(\sqrt{\frac{\beta_{1}}{N}}\sqrt{\mathsf{Var}_{\Pv}\big[\bm{V}^{(m)}\big]}+\frac{\big\|\bm{V}^{(m)}\big\|_{\infty}\beta_{1}}{N}\bm{1}\Bigg)\Bigg\|_{\infty},
\end{align*}
where the last inequality follows from the Bernstein-type condition~\eqref{eq:Bernstein-Vl}
and the fact that $\big(\bm{I}-\gamma\Phatv\big)^{-1}$
has non-negative entries. By virtue of the simple relation $\sqrt{\mathsf{Var}_{\Pv}\big[\bm{V}^{(m)}\big]}\leq\big\|\bm{V}^{(m)}\big\|_{\infty}$
and the fact that $\|\big(\bm{I}-\gamma\Phatv\big)^{-1}\|_{1}\leq\frac{1}{1-\gamma}$ (cf.~Lemma~\ref{lemma:basic-properties-I-gammaP}(c)),
it is further guaranteed that 
\begin{align*}
\big\|\widehat{\bm{V}}^{(m)}-\bm{V}^{(m)}\big\|_{\infty} & \leq\gamma\|(\bm{I}-\gamma\Phatv)^{-1}\|_{1}\cdot\Bigg\|\sqrt{\frac{\beta_{1}}{N}}\sqrt{\mathsf{Var}_{\Pv}\big[\bm{V}^{(m)}\big]}+\frac{\big\|\bm{V}^{(m)}\big\|_{\infty}\beta_{1}}{N}\bm{1}\Bigg\|_{\infty},\\
 & \leq\frac{\gamma}{1-\gamma}\Bigg(\sqrt{\frac{\beta_{1}}{N}}+\frac{\beta_{1}}{N}\Bigg)\big\|\bm{V}^{(m)}\big\|_{\infty},
\end{align*}
which combined with the bound~\eqref{eq:Vl-inf-norm-bound} yields
\begin{align*}
\big\|\widehat{\bm{V}}^{(m)}-\bm{V}^{(m)}\big\|_{\infty} & \leq \frac{\gamma}{1-\gamma}
\Bigg(\sqrt{\frac{\beta_{1}}{N}}+\frac{\beta_{1}}{N}\Bigg)
\Big(\frac{4}{\gamma\sqrt{1-\gamma}}\Big)^{m-1}\|\bm V^{(1)}\|_{\infty} .
\end{align*}
Putting the above bounds together yields 
\begin{align}
\alpha_{1}:=b_{1}^{m}\big\|\widehat{\bm{V}}^{(m)}-\bm{V}^{(m)}\big\|_{\infty} & \leq \left(\gamma\sqrt{\frac{\beta_{1}}{N}}\right)^{m}
\frac{\gamma}{1-\gamma} \Bigg(\sqrt{\frac{\beta_{1}}{N}}+\frac{\beta_{1}}{N}\Bigg) \Big(\frac{4}{\gamma\sqrt{1-\gamma}}\Big)^{m-1}\|\bm V^{(1)}\|_{\infty}
\nonumber\\
 & =\left(\sqrt{\frac{16\beta_{1}}{N(1-\gamma)}}\right)^{m-1}\Bigg(\sqrt{\frac{\beta_{1}}{N}}+1\Bigg)
 \frac{\gamma^2\beta_{1}}{(1-\gamma)N}\|\bm V^{(1)}\|_{\infty} \nonumber\\
 & \leq \left(\frac{1}{e}\right)^{m-1} \frac{1.1 \gamma^2\beta_{1}}{(1-\gamma)N} \|\bm V^{(1)}\|_{\infty},
 \label{eq:T1-bound}
\end{align}
where the last inequality holds provided that $N>\frac{16e^{2}}{1-\gamma}\beta_{1}$.

\medskip
\noindent \textit{Putting all this together.}
Combining the inequalities~\eqref{eqn:Chopin}, \eqref{eq:T2-bound} and \eqref{eq:T1-bound} gives 
\begin{align*}
  \big\|\widehat{\bm{V}}^{(0)}-\bm{V}^{(0)}\big\|_{\infty}
	\leq
	\frac{\gamma\beta_{1}}{(1-\gamma)N}\big\|\bm{V}^{(0)}\big\|_{\infty} +
	\left \{3 \gamma \sqrt{\frac{\beta_1}{N}}
	+
	\left(\frac{1}{e}\right)^{m-1} \frac{1.1\gamma^2\beta_{1}}{(1-\gamma)N} \right\} \|\bm V^{(1)}\|_{\infty} . 
\end{align*}
To finish up, set $m=\log (\frac{e}{1-\gamma})$ and assume that $N>\frac{16e^{2}}{1-\gamma}\beta_{1}$. Recognizing that $\bm{V}^{(0)}=\bm{V}^{\pi}$ and $\widehat{\bm{V}}^{(0)}=\widehat{\bm{V}}^{\pi}$, 
we arrive at
\begin{align}
	\notag \big\|\widehat{\bm{V}}^{\pi}-\bm{V}^{\pi}\big\|_{\infty}  =
	\big\|\widehat{\bm{V}}^{(0)}-\bm{V}^{(0)}\big\|_{\infty}
	&\leq 
	\frac{\gamma\beta_{1}}{(1-\gamma)N}\big\|\bm{V}^{(0)}\big\|_{\infty} + \left \{3 \gamma \sqrt{\frac{\beta_1}{N}}
	+
	\left(\frac{1}{e}\right)^{m-1} \frac{1.1\gamma^2\beta_{1}}{(1-\gamma)N} \right\} \|\bm V^{(1)}\|_{\infty} \\
 	& \leq \frac{\gamma\beta_{1}}{(1-\gamma)N}\big\|\bm{V}^{\pi} \big\|_{\infty} + 
 	  4 \gamma \sqrt{\frac{\beta_1}{N}} 
 	  \Big\|(\bm{I}-\gamma\Pv)^{-1}\sqrt{\mathsf{Var}_{\Pv}\big[\bm{V}^{\pi}\big]}\Big\|_{\infty},
 	  \label{eqn:ice-cream}
\end{align}
provided that $N\geq\frac{16e^2 \beta_{1}}{1-\gamma}$.

\paragraph{Proof of Lemma~\ref{lemma:dependent-policy-error}.} 
Invoking the inequality~\eqref{eq:Azar-bound} to bound the second term of \eqref{eqn:ice-cream}, we reach 
\begin{align}
\label{eqn:for-lemma-3}
	\big\|\widehat{\bm{V}}^{\pi}-\bm{V}^{\pi}\big\|_{\infty} \leq 
	\frac{\gamma\beta_{1}}{(1-\gamma)N}\big\|\bm{V}^{\pi} \big\|_{\infty} + 
 	 8 \log 2 \sqrt{\frac{\beta_1}{N(1-\gamma)^3}} \|\bm r\|_\infty 
 	 \leq 
 	 6 \sqrt{\frac{\beta_1}{N(1-\gamma)^3}} \|\bm r\|_\infty,
\end{align}
where the last inequality uses the elementary fact that $\big\|\bm{V}^{\pi} \big\|_{\infty} \leq \frac{1}{1 - \gamma}\|\bm r\|_\infty$ and the assumption that $N>\frac{16e^{2}}{1-\gamma}\beta_{1}$.
We complete the proof of Lemma~\ref{lemma:dependent-policy-error}.

\paragraph{Proof of Lemma~\ref{lemma:fixed-policy-error}.} 
Finally, to establish Lemma~\ref{lemma:fixed-policy-error}, we observe that: for any fixed policy $\pi$, the vector  $\bm{V}^{(l)}$ is independent of $\Phatv$. 
The Bernstein inequality (e.g.~\cite[Lemma 6]{agarwal2019optimality})
then reveals that with probability at least $1-\delta$, 
\begin{equation}
\Big|\big(\Phatv-\Pv\big)\bm{V}^{(l)}\Big|\leq\sqrt{\frac{2\log\big(\frac{4m|\mathcal{S}|}{\delta}\big)}{N}}\sqrt{\mathsf{Var}_{\Pv}\big[\bm{V}^{(l)}\big]}+\frac{\big\|\bm{V}^{(l)}\big\|_{\infty}\log\big(\frac{4m|\mathcal{S}|}{\delta}\big)}{N}\bm{1}\label{eq:Bernstein-Vl-special}
\end{equation}
holds uniformly for all $0\leq l\leq m$. This means that we can take $\beta_1 := 2\log\big(\frac{4m|\mathcal{S}|}{\delta}\big)$ with $m=\log (\frac{e}{1-\gamma})$ for this case. Combining this with the inequality~\eqref{eqn:ice-cream}, we derive the advertised instance-dependent bound  
\begin{align*}
	\big\|\widehat{\bm{V}}^{\pi}-\bm{V}^{\pi}\big\|_{\infty} 
	&\leq 
	\frac{2\gamma \log\big(\frac{4|\mathcal{S}| \log \frac{e}{1-\gamma}}{\delta}\big)}{(1-\gamma)N}\big\|\bm{V}^{\pi} \big\|_{\infty} 
	+ 
 	4 \gamma \sqrt{\frac{2 \log\big(\frac{4|\mathcal{S}| \log \frac{e}{1-\gamma}}{\delta}\big)}{(1-\gamma)N}} 
 	  \, \Big\|(\bm{I}-\gamma\Pv)^{-1}\sqrt{\mathsf{Var}_{\Pv}\big[\bm{V}^{\pi}\big]}\Big\|_{\infty}.
\end{align*}
Further,  this taken collectively with \eqref{eq:Azar-bound} and the crude bound $\|\bm{V}^{\pi}\|_{\infty} \leq \frac{1}{1-\gamma} \|\bm{r}\|_{\infty}$ gives
\begin{align*}
\notag\big\|\widehat{\bm{V}}^{\pi}-\bm{V}^{\pi}\big\|_{\infty} & \leq\frac{2\gamma\log\big(\frac{4|\mathcal{S}|\log\frac{e}{1-\gamma}}{\delta}\big)}{(1-\gamma)^{2}N}\big\|\bm{r}\big\|_{\infty}+8\log2\sqrt{\frac{2\log\big(\frac{4|\mathcal{S}|\log\frac{e}{1-\gamma}}{\delta}\big)}{(1-\gamma)^{3}N}}\,\|\bm{r}\|_{\infty}\\
 & \leq6\sqrt{\frac{2\log\big(\frac{4|\mathcal{S}|\log\frac{e}{1-\gamma}}{\delta}\big)}{(1-\gamma)^{3}N}}\,\|\bm{r}\|_{\infty},
\end{align*}
with the proviso that $N\geq\frac{32e^{2}}{1-\gamma}\log\big(\frac{4|\mathcal{S}|\log\frac{e}{1-\gamma}}{\delta}\big).$

%
%
%
%

\subsubsection{Proof of Lemma \ref{lemma:VarP-V-bound}}\label{sec:proof_lemma_varp-vbound}
To begin with, we make the observation that 
\begin{align}
\mathsf{Var}_{\bm{P}_{\pi}}(\bm{V}) & =\bm{P}_{\pi}(\bm{V}\circ\bm{V})-(\bm{P}_{\pi}\bm{V})\circ(\bm{P}_{\pi}\bm{V})\nonumber \\
 & =\bm{P}_{\pi}(\bm{V}\circ\bm{V})-\frac{1}{\gamma^{2}}(\bm{V}-\bm{r}_{\pi})\circ(\bm{V}-\bm{r}_{\pi})\label{eq:VarP-V-expand1}\\
 & =\bm{P}_{\pi}(\bm{V}\circ\bm{V})-\frac{1}{\gamma^{2}}\bm{V}\circ\bm{V}+\frac{2}{\gamma^{2}}\bm{V}\circ\bm{r}_{\pi}-\frac{1}{\gamma^{2}}\bm{r}_{\pi}\circ\bm{r}_{\pi}\nonumber \\
 & \leq\frac{1}{\gamma^{2}}(\gamma^{2}\bm{P}_{\pi}-\bm{I})(\bm{V}\circ\bm{V})+\frac{2}{\gamma^{2}}\bm{V}\circ\bm{r}_{\pi},\label{eq:Var-V-expand-4}
\end{align}
where the identity (\ref{eq:VarP-V-expand1}) makes use of the relation
 $\bm{V}=\bm{r}_{\pi}+\gamma\bm{P}_{\pi}\bm{V}$. In addition, one can
deduce that
\begin{align}
 \left\Vert (\bm{I}-\gamma\bm{P}_{\pi})^{-1}\sqrt{\mathsf{Var}_{\bm{P}_{\pi}}(\bm{V})}\right\Vert _{\infty} & =\frac{1}{1-\gamma}\left\| (1-\gamma)(\bm{I}-\gamma\bm{P}_{\pi})^{-1}\sqrt{  \mathsf{Var}_{\bm{P}_{\pi}}(\bm{V}) }\right\| _{\infty} \nonumber \\
 & \overset{(\mathrm{i})}{\leq}\frac{1}{\sqrt{1-\gamma}}\left\Vert \sqrt{(\bm{I}-\gamma\bm{P}_{\pi})^{-1}\mathsf{Var}_{\bm{P}_{\pi}}(\bm{V})}\right\Vert _{\infty}\nonumber \\
 & \overset{(\mathrm{ii})}{\leq}\frac{1}{\sqrt{1-\gamma}}\sqrt{\left\Vert 2(\bm{I}-\gamma^{2}\bm{P}_{\pi})^{-1}\mathsf{Var}_{\bm{P}_{\pi}}(\bm{V})\right\Vert _{\infty}}\nonumber.
\end{align}
Here, (i) comes from Jensen's inequality (so that $\mathbb{E}[\sqrt{v}]\leq\sqrt{\mathbb{E}[v]}$) recognizing that each row of \mbox{$(1-\gamma)(\bm{I}-\gamma\bm{P}_{\pi})^{-1}$} is a probability distribution,
and Lemma \ref{lemma:basic-properties-I-gammaP}(d), (ii) is an elementary
fact established in \cite[Lemma 4]{agarwal2019optimality}.
Combining Lemma \ref{lemma:basic-properties-I-gammaP}(e) and the inequality \eqref{eq:Var-V-expand-4} 
further yields 
\begin{align}
\left\Vert (\bm{I}-\gamma\bm{P}_{\pi})^{-1}\sqrt{\mathsf{Var}_{\bm{P}_{\pi}}(\bm{V})}\right\Vert _{\infty} & \leq \frac{1}{\sqrt{1-\gamma}}\sqrt{\left\Vert 2(\bm{I}-\gamma^{2}\bm{P}_{\pi})^{-1}\left(\frac{1}{\gamma^{2}}\Big(\gamma^{2}\bm{P}_{\pi}-\bm{I}\Big)(\bm{V}\circ\bm{V})+\frac{2}{\gamma^{2}}\bm{V}\circ\bm{r}_{\pi}\right)\right\Vert _{\infty}}\nonumber \\
 & \leq\frac{1}{\gamma\sqrt{1-\gamma}}\sqrt{2\left\Vert \bm{V}\circ\bm{V}\right\Vert _{\infty}}+\frac{2}{\gamma\sqrt{1-\gamma}}\sqrt{\left\Vert (\bm{I}-\gamma^{2}\bm{P}_{\pi})^{-1}\left(\bm{V}\circ\bm{r}_{\pi}\right)\right\Vert _{\infty}}.\label{eq:VarP-V-expand2}
\end{align}
where the last step arises from the triangle inequality and
$\sqrt{a+b}\leq\sqrt{a}+\sqrt{b}$. This leaves us with two terms
to deal with. 

Regarding the first term of (\ref{eq:VarP-V-expand2}), we observe
that $\left\Vert (\bm{V}\circ\bm{V})\right\Vert _{\infty}=\left\Vert \bm{V}\right\Vert _{\infty}^{2}$.
When it comes to the second term of (\ref{eq:VarP-V-expand2}), it
is seen that 
\begin{align*}
\left\Vert (\bm{I}-\gamma^{2}\bm{P}_{\pi})^{-1}\left(\bm{V}\circ\bm{r}_{\pi}\right)\right\Vert _{\infty} & \leq\left\Vert (\bm{I}-\gamma\bm{P}_{\pi})^{-1}\left(\bm{V}\circ\bm{r}_{\pi}\right)\right\Vert _{\infty}\\
 & \leq\|\bm{V}\|_{\infty}\left\Vert (\bm{I}-\gamma\bm{P}_{\pi})^{-1}\bm{r}_{\pi}\right\Vert _{\infty}\\
 & =\|\bm{V}\|_{\infty}^{2}.
\end{align*}
Here, the first inequality holds true since $(\bm{I}-\gamma^{2}\bm{P}_{\pi})^{-1}=\sum_{i=0}^{\infty}\gamma^{2i}\bm{P}_{\pi}^{i}\leq\sum_{i=0}^{\infty}\gamma^{i}\bm{P}_{\pi}^{i}=(\bm{I}-\gamma\bm{P}_{\pi})^{-1}$,
while the second line holds since $\bm{V}$, $\bm{r}$ and $(\bm{I}-\gamma\bm{P}_{\pi})^{-1}$
are all non-negative. Substitution into (\ref{eq:VarP-V-expand2})
thus yields 
\begin{align*}
\left\Vert (\bm{I}-\gamma\bm{P}_{\pi})^{-1}\sqrt{\mathsf{Var}_{\bm{P}_{\pi}}(\bm{V})}\right\Vert _{\infty} & \leq\frac{\sqrt{2}}{\gamma\sqrt{1-\gamma}}\left\Vert \bm{V}\right\Vert _{\infty}+\frac{2}{\gamma\sqrt{1-\gamma}}\left\Vert \bm{V}\right\Vert _{\infty}\leq\frac{4}{\gamma\sqrt{1-\gamma}}\left\Vert \bm{V}\right\Vert _{\infty}
\end{align*}
as claimed.

\subsection{Proof of Lemma~\ref{LemAbsToNormalQV}}
\label{sec:proof-LemAbsToNormalQV}

To establish Lemma~\ref{LemAbsToNormalQV}, it suffices to check that $\Vstar$ and $\Qstar$ satisfy the Bellman optimality equations underlying $\MDP_{s,a,u^{\star}}$. 
	Towards this end, we study the absorbing state-action pair $(s,a)$ and other pairs separately. For notational simplicity, we shall let $\abPbm$ and $\rewabs(\cdot,\cdot)$ denote respectively the probability transition matrix and the reward function associated with $\MDP_{s,a, u^{\star}}$. 

First, we observe that, by construction,  
\begin{align*}
	\rewabs (s,a) + \gamma (\abPbm \Vstar)_{s,a} = u^{\star} + \gamma V^{\star}(s).
\end{align*}
%
Recall that $\Vstar$ satisfies the Bellman optimality equation w.r.t.~the original MDP, namely, 
$Q^\star(s,a) = r(s,a) + \gamma (\bP \Vstar)_{s,a}$.
	This together with our choice of $u^{\star}$ gives
\begin{align*}
	u^{\star} + \gamma V^{\star}(s) &=Q^\star(s,a)- \gamma V^{\star}(s) +  \gamma V^{\star}(s)   = Q^\star(s,a). 
\end{align*}
Putting the above identities together, we arrive at  
\begin{align}
	\label{eqn:bellman-sa}
	Q^\star(s,a) = \rewabs (s,a) + \gamma (\abPbm \Vstar)_{s,a}.
\end{align}
Next, consider any state-action pair $(s',a')\neq(s,a)$.
Recalling again the properties of $\MDP_{s,a,u^{\star}}$, we reach 
\begin{align}
\label{eqn:bellman-notsa}
	\rewabs (s',a') + \gamma (\abPbm \Vstar)_{s',a'}
	= 
	  r (s',a') +  \gamma (\bP \Vstar)_{s',a'} = Q^\star(s',a').
\end{align}
Here the last identity is due to the Bellman equation w.r.t.~the original MDP.
Combining \eqref{eqn:bellman-sa} and \eqref{eqn:bellman-notsa} implies that 
$\Vstar$ and $\Qstar$ satisfy Bellman's optimality equations in  $\MDP_{s,a,u^{\star}}$, thus concluding the proof. 

\subsection{Proof of Lemma~\ref{LemUnique}}
\label{sec:proof-LemUnique}

Our first observation is that 
$\Qhatstar_{s,a,u}$ satisfies Lipschitz continuity w.r.t.~$u$ in the sense that
\begin{align}
\label{eqn:lipschitz}
	\big\|\Qhatstar_{s,a,u} - \Qhatstar_{s,a,u'}\|_{\infty} \leq \frac{1}{1- \gamma} |u - u'|.
\end{align}
The proof of this relation is identical to that of \cite[Lemma 8]{agarwal2019optimality}; we omit here for brevity. 
	In view of Lemma~\ref{LemAbsToNormalQV}, if we set $\widehat{u}^{\star} \defn r(s,a) + \gamma (\Phat \Vhatstar)_{s,a} - \gamma \widehat{V}^\star(s)$, then one has 
\begin{align*}
	\Qhatstar_{s,a,\widehat{u}^{\star}} = \Qhatstar \qquad \text{and }~~\Vhatstar_{s,a,\widehat{u}^{\star}} = \Vhatstar.
\end{align*}
	In addition, there exists a point $u_0$ in the epsilon-net $\net{(1-\gamma)\gap/4}$ such that $|\widehat{u}^{\star} - u_0| \leq (1-\gamma)\gap/4$, which combined with
the Lipschitz continuity property~\eqref{eqn:lipschitz} gives
\begin{align}
\label{eqn:watermelon}
	\big\|\Qhatstar  - \Qhatstar_{s,a,u_0}\|_{\infty} = \big\|\Qhatstar_{s,a,\widehat{u}^{\star}}  - \Qhatstar_{s,a,u_0}\|_{\infty} \leq \frac{1}{1- \gamma}  |\widehat{u}^{\star} - u_0| \leq  \frac{\gap}{4}.
\end{align}

Additionally, for any $s' \in \mathcal{S}$ and any $a_{1}, a_{2} \in \mathcal{A}$ with $a_1\neq a_2$, we have the following decomposition 
\begin{align}
\notag	&\widehat{Q}^\star_{s,a,u_0}(s',a_1) - \widehat{Q}^\star_{s,a,u_0}(s',a_2) \nonumber\\
\notag	&  \qquad = \widehat{Q}^\star(s',a_1) - \widehat{Q}^\star(s',a_2) + 
	\widehat{Q}^\star_{s,a,u_0}(s',a_1) - \widehat{Q}^\star(s',a_1) - \left(\widehat{Q}^\star_{s,a,u_0}(s',a_2) - \widehat{Q}^\star(s',a_2) \right)
	 \nonumber\\
	 & \qquad \geq \widehat{Q}^\star(s',a_1) - \widehat{Q}^\star(s',a_2) - 2 \big\|\Qhatstar  - \Qhatstar_{s,a,u_0}\|_{\infty}  \nonumber\\
	& \qquad \geq \widehat{Q}^\star(s',a_1) - \widehat{Q}^\star(s',a_2) - \frac{\gap}{2},  \label{eqn:uzerotou}
\end{align}
where the last inequality invokes the inequality~\eqref{eqn:watermelon}. Moreover, our separation condition defined in \eqref{eq:defn-separation-event} requires that: for any $s' \in \mathcal{S}$ and any $a_{2} \neq \widehat{\pi}^\star(s')$, one has $\widehat{Q}^\star(s',\widehat{\pi}^\star(s')) -\widehat{Q}^\star(s',a_2) \geq \gap$, which together with \eqref{eqn:uzerotou} reveals that 
\begin{align}
	\widehat{Q}^\star_{s,a,u_0}(s',\widehat{\pi}^\star(s')) - \widehat{Q}^\star_{s,a,u_0}(s',a_2) 
	 \geq \widehat{Q}^\star(s',\widehat{\pi}^\star(s')) - \widehat{Q}^\star(s',a_2)- \frac{\gap}{2}
	 \geq \frac{\gap}{2} . 
	 \label{eq:separation-Q-uas}
\end{align}
Given that $\pihatstar_{s,a,u_0}(s') \defn \arg\max_{a'} \widehat{Q}^\star_{s,a,u_0}(s',a')$, 
it is seen from \eqref{eq:separation-Q-uas} that 
\begin{align*}
	\widehat{\pi}^\star_{s,a,u_0}(s') = \widehat{\pi}^\star(s'),
\end{align*}
which holds true for all $s' \in \mathcal{S}$. This concludes the proof.






\subsection{Proof of Lemma~\ref{lem:gap-separation-condition}}
\label{sec:proof-lem:gap-separation-condition}

 We start by bounding $\big\|\widehat{\bm{V}}^{\pihatstar}-\bm{V}^{\pihatstar}\big\|_{\infty}$. 
Recall the definition of the series $\{\bm{V}^{(l)}\}$ in \eqref{eqn:superV}. 
Throughout this proof, we shall write $\bm{V}_{\pi}^{(l)}$ instead in order to make apparent the dependency on
the policy $\pi$.

For each state-action pair $(s,a)$, let us construct the epsilon-net 
$\net{(1-\gamma)\gap/4}$ as in the expression~\eqref{eqn:net}. 
For every $u \in \net{(1-\gamma)\gap/4}$, recall that $\pihatstar_{s,a,u}$ is defined as the optimal policy with respect to the $(s,a)$-absorbing MDP $\widehat{\MDP}_{s,a,u}$. 
By construction, the set of policies $\pihatstar_{s,a,u}$ ($u \in \net{(1-\gamma)\gap/4}$) is independent of $\Phat_{s,a}$. 
%
The Bernstein inequality (e.g.~\cite[Lemma 6]{agarwal2019optimality}) taken together with the union bound thus guarantees that with probability at least $1 - \delta$, 
\begin{align}
\label{eqn:bernstein-union}
	\Big|(\Phat - \bP)_{s,a} \bm{V}_{\pihatstar_{s,a,u}}^{(l)}\Big| \leq 
	\sqrt{\frac{\beta_1}{N}}\sqrt{ \big( \mathsf{Var}_{\bP}\big[\bm{V}^{(l)}_{\pihatstar_{s,a,u}}\big] \big)_{s,a} }+\frac{\big\|\bm{V}_{\pihatstar_{s,a,u}}^{(l)}\big\|_{\infty}\beta_1}{N}
\end{align}
holds uniformly over all $0\leq l\leq  \log \frac{e}{1-\gamma}$, $u \in \net{(1-\gamma)\gap/4}$, $(s,a) \in \mathcal{S} \times \mathcal{A}$. Here, $\beta_1$ is given by
\[
\beta_{1}:=2\log\Big(\frac{4\log\big(\frac{e}{1-\gamma}\big)|\net{(1-\gamma)\omega/4}||\mathcal{S}||\mathcal{A}|}{\delta}\Big) 
\leq 2\log\Big(\frac{32}{(1-\gamma)^{2}\omega\delta}|\mathcal{S}||\mathcal{A}|\log\big(\frac{e}{1-\gamma}\big)\Big),
\]
where we have used the fact $|\net{(1-\gamma)\omega/4}| \leq \frac{8}{(1-\gamma)^2\omega}$. 
In addition, for any $0<\gap <1$,  Lemma~\ref{LemUnique}
guarantees that for each state-action pair $(s,a) \in \mathcal{S} \times \mathcal{A}$, there exists a point $u_0 \in \net{(1-\gamma)\gap/4}$ such that 
$\pihatstar = \pihatstar_{s,a,u_0}$. 
Invoking this important fact, we obtain 
\begin{align}
\notag  \left| (\Phat - \bP)_{s,a} \bm{V}_{\pihatstar}^{(l)} \right| 
&  =  \left|(\Phat - \bP)_{s,a} \bm{V}^{(l)}_{\pihatstar_{s,a,u_0}}
	\right| \\
	&\leq \sqrt{\frac{\beta_1}{N}}\sqrt{ \big( \mathsf{Var}_{\bP}\big[\bm{V}^{(l)}_{\pihatstar_{s,a,u_0}}\big] \big)_{s,a} }+\frac{\big\|\bm{V}_{\pihatstar_{s,a,u_0}}^{(l)}\big\|_{\infty}\beta_1}{N} \notag\\
	& = \sqrt{\frac{\beta_1}{N}}\sqrt{ \big( \mathsf{Var}_{\bP}\big[\bm{V}^{(l)}_{\pihatstar}\big] \big)_{s,a} }+\frac{\big\|\bm{V}^{(l)}_{\pihatstar}\big\|_{\infty}\beta_1}{N}. \notag
\end{align}
The above inequality further allows us to deduce that, with probability $1 - \delta$, 
\begin{align*}
	\Big| \big(\Phat_{\pihatstar}-\bP_{\pihatstar}\big)\bm{V}_{\pihatstar}^{(l)} \Big| &= \Big| \bPi^{\pihatstar} (\Phat - \bP) \bm{V}_{\pihatstar}^{(l)} \Big| \\
	&\leq \sqrt{\frac{\beta_1}{N}} \sqrt{ \bPi^{\pihatstar} \mathsf{Var}_{\bP}\big[\bm{V}^{(l)}_{\pihatstar}\big]}+\frac{\big\|\bm{V}^{(l)}_{\pihatstar}\big\|_{\infty}\beta_1}{N} \one  \notag\\
	&= \sqrt{\frac{\beta_1}{N}}\sqrt{\mathsf{Var}_{\bP_{\pihatstar}}\big[\bm{V}^{(l)}_{\pihatstar}\big]}+\frac{\big\|\bm{V}^{(l)}_{\pihatstar}\big\|_{\infty}\beta_1}{N} \one .
\end{align*}
The above derivation validates the assumption required for Lemma~\ref{lemma:dependent-policy-error}.
As a result, if $N > \frac{16e^{2}}{1-\gamma}\beta_{1}$ and $0<
\omega < 1$, then Lemma~\ref{lemma:dependent-policy-error} leads to the advertised bound
\begin{align}
\big\|\widehat{\bm{V}}^{\pihatstar}-\bm{V}^{\pihatstar}\big\|_{\infty} & \leq\frac{6}{1-\gamma}\sqrt{\frac{\beta_{1}}{N(1-\gamma)}}\leq\frac{6}{1-\gamma}\sqrt{\frac{2\log\Big(\frac{32}{(1-\gamma)^{2}\omega\delta}|\mathcal{S}||\mathcal{A}|\log\big(\frac{e}{1-\gamma}\big)\Big)}{N(1-\gamma)}} \notag\\
 & \leq 6 \sqrt{\frac{2\log\Big(\frac{32|\mathcal{S}||\mathcal{A}|}{(1-\gamma)^{3}\omega\delta}\Big)}{N(1-\gamma)^3}} .  \label{eq:Vhat-pihat-bound-proof}
\end{align}

Finally, we move on to the term $\Vstar - \Vhat^{\pistar} $. Given that  $\pi^{\star}$ is independent of $\widehat{\bm{P}}$, invoke Lemma~\ref{lemma:fixed-policy-error} to reach
\begin{align}
	\left\| \Vhat^{\pistar} - \Vstar \right\|_{\infty} = \big\| \Vhat^{\pistar} - \bm{V}^{\pistar} \big\|_{\infty} &  \leq 6\sqrt{2}  \sqrt{\frac{\log\big(\frac{4|\mathcal{S}|}{\delta}\big)+\log\log\big(\frac{e}{1-\gamma}\big)}{N(1-\gamma)^3}} \nonumber \\
&	\leq 6 \sqrt{\frac{2\log\big(\frac{32 |\mathcal{S}||\mathcal{A}|}{(1-\gamma)^2\omega\delta} \big)}{N(1-\gamma)^3}}
	\label{eq:Vhat-Vstar-pistar}
\end{align}
with probability at least $1 - \delta$. This together with \eqref{eq:Vhat-Vstar-pistar-tot} and \eqref{eq:Vhat-pihat-bound-proof}  immediately establishes Lemma~\ref{lem:gap-separation-condition}.

\subsection{Proof of Lemma~\ref{LemTieBreaking}}
\label{sec:proof-lem-tie-breaking}

The proofs for $Q_{{\mathrm{p}}}^{\star}$ and
$\widehat{Q}_{{\mathrm{p}}}^{\star}$ are exactly the same;
for the sake of conciseness, we shall only provide the proof for $Q_{{\mathrm{p}}}^{\star}$.
Here we aim to prove a more general result than Lemma~\ref{LemTieBreaking}, namely, with probability at least $1-\delta$, 
\begin{align*}
	\forall s\in\mathcal{S}  \text{ and } a_{1},a_{2}\in\mathcal{A}  \text{ with }a_1\neq a_2: 
	\qquad 
	\big| Q_{{\mathrm{p}}}^{\star}(s,a_1)-Q_{{\mathrm{p}}}^{\star}(s,a_2)  \big|
	> \frac{\xi\delta(1-\gamma)}{4|\mathcal{S}||\mathcal{A}|^{2}}.
\end{align*}

Consider any state $s$ and any actions $a_1\neq a_2$. 
In what follows, we allow $r_{\mathrm{p}}(s,a_{1})=\tau$
to vary, while \emph{freezing} the values of all other rewards $\{ r_{\mathrm{p}}(\tilde{s},a) \mid 
(\tilde{s},a)\neq(s,a_{1})\}$. To streamline notation, we define
\begin{itemize}
\item $\bm{r}_{\tau}=[r_{\tau}(s,a)]_{(s,a)\in\mathcal{S}\times\mathcal{A}}$:
the reward vector obeying 
\[
	r_{\tau}(s,a_{1})=\tau \qquad\quad \text{and} \qquad\quad r_{\tau}(\tilde{s},a)=r_{\mathrm{p}}(\tilde{s},a) \quad \text{for all }(\tilde{s},a)\neq(s,a_{1});
\]
\item $Q_{\tau}^{\star}$: the optimal Q-function when the reward vector
is $\bm{r}_{\tau}$; 
\item $V_{\tau}^{\star}$: the optimal value function when the reward vector
is $\bm{r}_{\tau}$; 
\item $\pi_{\tau}^{\star}$: the optimal policy when the reward vector
is $\bm{r}_{\tau}$. 
\end{itemize}
Additionally, we claim for the moment that there exists a phase transition
boundary $\tau_{\mathrm{th}}$ such that
\begin{subequations}
\label{eq:pi-phase-transition}
\begin{align}
\pi_{\tau}^{\star}(s)\neq a_{1},\quad & \text{for all }\tau<\tau_{\mathrm{th}};\label{eq:pi-phase-transition-1}\\
\pi_{\tau}^{\star}(s)=a_{1},\quad & \text{for all }\tau>\tau_{\mathrm{th}}.\label{eq:pi-phase-transition-2}
\end{align}
\end{subequations}
The proof of this claim is deferred to the end of this section. 
To establish Lemma~\ref{LemTieBreaking}, the idea is to control the size of the set
\begin{align}
\label{eqn:setIzero}
\mathcal{I}_{0,\omega}:=\left\{ \tau\mid\big|Q_{\tau}^{\star}(s,a_{1})-Q_{\tau}^{\star}(s,a_{2})\big|<\omega\right\} 
\end{align}
for some $\omega > 0$ to be specified shortly. 
As motivated by \eqref{eq:pi-phase-transition}, we further break down this set into two parts $\mathcal{I}_{0,\omega} = \mathcal{I}_{1,\omega}\cup\mathcal{I}_{2,\omega}$, where
\begin{subequations}
\begin{align}
 \mathcal{I}_{1,\omega} &:=\left\{ \tau\mid\tau<\tau_{\mathrm{th}},\,\big|Q_{\tau}^{\star}(s,a_{1})-Q_{\tau}^{\star}(s,a_{2})\big|<\omega\right\}, \label{eq:defn-I1-omega}	\\
\mathcal{I}_{2,\omega} &:=\left\{ \tau\mid\tau \geq \tau_{\mathrm{th}},\,\big|Q_{\tau}^{\star}(s,a_{1})-Q_{\tau}^{\star}(s,a_{2})\big|<\omega\right\} \label{eq:defn-I2-omega}.
 \end{align} 
\end{subequations}
In what follows, we first control the size of each set separately, and then demonstrate that the probability of these events happening is very small. 

\paragraph{Step 1.} We begin with $ \mathcal{I}_{1,\omega} $ associated  with the range $\tau<\tau_{\mathrm{th}}$.
In this case, the value function $V_{\tau}^{\star}$ does not vary with $\tau$,
since the reward $r_{\mathrm{\tau}}(s,a_{1})=\tau$ is never active when calculating $V_{\tau}^{\star}$ (by virtue of \eqref{eq:pi-phase-transition-1}). Thus, the Bellman equation allows us to write
\[
Q_{\tau}^{\star}(s,a_{1})=\tau+B_{1}\qquad\text{and}\qquad Q_{\tau}^{\star}(s,a_{2})=B_{2}
\]
for some quantities $B_{1}$ and $B_{2}$, 
where neither $B_{1}$ nor $B_{2}$ depends on the value of $\tau$. 
Armed with this observation, we can easily show
that: for any $\omega>0$, the interval $\mathcal{I}_{1,\omega}$ (cf.~\eqref{eq:defn-I1-omega}) obeys
\begin{align*}
\mathcal{I}_{1,\omega}\subseteq\left\{ \tau\mid|\tau+B_{1}-B_{2}| < \omega\right\} ,
\end{align*}
and hence has length (or Lebesgue measure) at most $2\omega$.

\paragraph{Step 2.} We then move on to $ \mathcal{I}_{2,\omega} $ associated with the range $\tau>\tau_{\mathrm{th}}$ in which case $\pi_{\tau}^{\star}(s) = a_{1}$. 
Towards this, we first make some useful observations. 

\begin{itemize}
	\item To begin with, given the relation $\bm Q_{\tau}^{\star} = \bm r_{\tau} + \gamma \bm P \bm V_{\tau}^{\star}$, it is easily seen that for any $\tau_2 > \tau_1 > \tau_{\mathrm{th}}$,
	\begin{align}
		\bm 0 \le \bm Q_{\tau_2}^{\star} - \bm Q_{\tau_1}^{\star} \le \bm r_{\tau_2} - \bm r_{\tau_1} + \gamma \|\bm V_{\tau_2}^{\star} - \bm V_{\tau_1}^{\star}\|_{\infty}.
		\label{eq:Q-tau2-tau1-bound-V}
	\end{align}
	In addition, for any state-action pair $(\tilde{s},a)\neq(s,a_{1})$, by construction we have $r_{\tau_2}(\tilde{s},a) - r_{\tau_1}(\tilde{s},a) = 0$,
	which together with \eqref{eq:Q-tau2-tau1-bound-V} indicates that
	\begin{align} 
		\label{eq:Q_diff}
		\forall (\tilde{s},a)\neq(s,a_{1}):\quad 
		0\leq Q_{\tau_2}^{\star}(\tilde{s},a) - Q_{\tau_1}^{\star}(\tilde{s},a) \le \gamma \|\bm V_{\tau_2}^{\star} - \bm V_{\tau_1}^{\star}\|_{\infty}.
	\end{align}

	\item Next, observe that for any $\tau_2 > \tau_1 > \tau_{\mathrm{th}}$, 
	\begin{align}
		\forall s'\in \cS: \quad 
		0\leq V_{\tau_{2}}^{\star}(s')-V_{\tau_{1}}^{\star}(s')=\max_{a}Q_{\tau_{2}}^{\star}(s',a)-\max_{a}Q_{\tau_{1}}^{\star}(s',a)
		\leq \big\|\bm{Q}_{\tau_{2}}^{\star}-\bm{Q}_{\tau_{1}}^{\star} \big\|_{\infty}
	\end{align}
	and hence $\|\bm Q_{\tau_2}^{\star} - \bm Q_{\tau_1}^{\star}\|_{\infty} \ge \|\bm V_{\tau_2}^{\star} - \bm V_{\tau_1}^{\star}\|_{\infty}$. This combined with  \eqref{eq:Q_diff} and the fact $\gamma<1$ implies that
	\begin{align}
		Q_{\tau_2}^{\star}(s,a_{1}) - Q_{\tau_1}^{\star}(s,a_{1}) &\ge \|\bm V_{\tau_2}^{\star} - \bm V_{\tau_1}^{\star}\|_{\infty} ,
		\label{eq:Q-tau2-Q-tau1-gap-LB}
	\end{align}
	which together with the facts $V_{\tau_1}^{\star}(s) = Q_{\tau_1}^{\star}(s,a_{1})$ and  $V_{\tau_2}^{\star}(s) = Q_{\tau_2}^{\star}(s,a_{1})$ (by virtue of \eqref{eq:pi-phase-transition-2}) yields
	\begin{align*}
		V_{\tau_2}^{\star}(s) &- V_{\tau_1}^{\star}(s) = Q_{\tau_2}^{\star}(s,a_{1}) - Q_{\tau_1}^{\star}(s,a_{1}) \ge \|\bm V_{\tau_2}^{\star} - \bm V_{\tau_1}^{\star}\|_{\infty} \\
		& \Longrightarrow \quad Q_{\tau_2}^{\star}(s,a_{1}) - Q_{\tau_1}^{\star}(s,a_{1}) = \|\bm V_{\tau_2}^{\star} - \bm V_{\tau_1}^{\star}\|_{\infty}. 
	\end{align*}
	Invoke the Bellman equation to further derive 
	\begin{align}
		Q_{\tau_2}^{\star}(s,a_{1}) - Q_{\tau_1}^{\star}(s,a_{1}) 
		& =  \|\bm V_{\tau_2}^{\star} - \bm V_{\tau_1}^{\star}\|_{\infty} 
		 = 	\big\|\bm r_{\tau_2}  + \gamma \bm{P} \bm V_{\tau_2}^{\star} - \bm r_{\tau_1} - \gamma \bm{P} \bm V_{\tau_1}^{\star} \big\|_{\infty}  \notag\\
		&\geq \big\|\bm r_{\tau_2} - \bm{r}_{\tau_1} \big\|_{\infty} = \tau_2 - \tau_1,
		\label{eq:Q_diff_1}
	\end{align}
		where the last inequality holds since $\bm r_{\tau_2}  + \gamma \bm{P} \bm V_{\tau_2}^{\star} - \bm r_{\tau_1} - \gamma \bm{P} \bm V_{\tau_1}^{\star}\geq \bm r_{\tau_2}-\bm r_{\tau_1}\geq \bm{0}$ (due to the monotonicity properties $\bm r_{\tau_2}\geq \bm r_{\tau_1}$ and  $\bm V_{\tau_2}^{\star}\geq \bm V_{\tau_1}^{\star}$), and the last identity follows from the definition of $\bm{r}_{\tau}$. 


\end{itemize}

With the above two properties \eqref{eq:Q_diff} and \eqref{eq:Q_diff_1} in mind, 
we are ready to locate $\mathcal{I}_{2,\omega}$ by showing that 
\begin{align}
	\mathcal{I}_{2,\omega}\subseteq \Big[\tau_{\mathrm{th}},\tau_{\mathrm{th}}+\frac{\omega}{1-\gamma}\Big]. \label{eq:I2-omega-range}
\end{align}
Given that $Q_{\tau_{\mathrm{th}}}^{\star}(s,a_{1}) \ge Q_{\tau_{\mathrm{th}}}^{\star}(s,a_{2})$ (in view of \eqref{eq:pi-phase-transition}),
we have for any $\tau\geq \tau_{\mathrm{th}}$ and any $a_2\neq a_1$ that
\begin{align}
	Q_{\tau}^{\star}(s,a_{1}) - Q_{\tau}^{\star}(s,a_{2}) 
	&\ge \big(Q_{\tau}^{\star}(s,a_{1}) - Q_{\tau_{\mathrm{th}}}^{\star}(s,a_{1}) \big) - \big( Q_{\tau}^{\star}(s,a_{2}) - Q_{\tau_{\mathrm{th}}}^{\star}(s,a_{2}) \big) \notag\\
	&\ge (1-\gamma) \big(Q_{\tau}^{\star}(s,a_{1}) - Q_{\tau_{\mathrm{th}}}^{\star}(s,a_{1})\big) .
	\label{eq:Q-s-a1-a2-gap-LB}
\end{align}
Here, the last inequality holds since 
\begin{align*}
	Q_{\tau}^{\star}(s,a_{2}) - Q_{\tau_{\mathrm{th}}}^{\star}(s,a_{2}) 
	\overset{\mathrm{(i)}}{\le} \gamma \big\| \bm{V}_{\tau}^{\star} - \bm{V}_{\tau_{\mathrm{th}}}^{\star} \big\|_{\infty}
	\overset{\mathrm{(ii)}}{\le} \gamma \big( Q_{\tau}^{\star}(s,a_{1}) - Q_{\tau_{\mathrm{th}}}^{\star}(s,a_{1}) \big),
\end{align*}
where (i) follows from \eqref{eq:Q_diff} and (ii) is due to \eqref{eq:Q-tau2-Q-tau1-gap-LB}. 
As a result, for any $\tau > \tau_{\mathrm{th}}+\frac{\omega}{1-\gamma}$, one can invoke \eqref{eq:Q-s-a1-a2-gap-LB} and \eqref{eq:Q_diff_1} to see that
\begin{align*}
	\forall a_2\neq a_1:\quad 
	Q_{\tau}^{\star}(s,a_{1}) - Q_{\tau}^{\star}(s,a_{2}) \ge (1-\gamma) \big( Q_{\tau}^{\star}(s,a_{1}) - Q_{\tau_{\mathrm{th}}}^{\star}(s,a_{1}) \big)
	\geq (1-\gamma)(\tau - \tau_{\mathrm{th}}) > \omega,
\end{align*}
which necessarily implies that such a $\tau$ does not lie within the interval $\mathcal{I}_{2,\omega}$ as defined in 
\eqref{eq:defn-I2-omega}. This establishes the claimed relation \eqref{eq:I2-omega-range}.

\paragraph{Step 3.} Putting the results in the above two steps together, we see the set $\mathcal{I}_{0,\omega}$ (cf.~\eqref{eqn:setIzero})
has total length (or Lebesgue measure) at most $\frac{3\omega}{1-\gamma}$.
Given that $r_{\mathrm{p}}(s,a)=r(s,a)+\zeta(s,a)$ with $\zeta(s,a)\sim\mathsf{Unif}(0,\xi)$,
one has
\begin{align*}
\mathbb{P}\left\{ \big|Q_{{\mathrm{p}}}^{\star}(s,a_{1})-Q_{{\mathrm{p}}}^{\star}(s,a_{2})\big|<\omega\right\}  & \leq\mathbb{P}\left\{ r(s,a)+\zeta(s,a)\in\mathcal{I}_{0,\omega}\right\} 
  \leq\frac{3\omega}{\xi(1-\gamma)}.
\end{align*}
By setting $\omega=\frac{\delta(1-\gamma)\xi}{3|\mathcal{S}||\mathcal{A}|^{2}}$,
we arrive at
\begin{align*}
\mathbb{P}\left\{ \big|Q_{{\mathrm{p}}}^{\star}(s,a_{1})-Q_{{\mathrm{p}}}^{\star}(s,a_{2})\big|<\frac{\delta(1-\gamma)\xi}{3|\mathcal{S}||\mathcal{A}|^{2}}\right\}  & \leq\frac{\delta}{|\mathcal{S}||\mathcal{A}|^{2}}.
\end{align*}
Finally, taking the union bound over all $s,a_{1},a_{2}$, we conclude
that
\begin{align*}
\mathbb{P}\left\{ \exists s,a_{1}\neq a_{2}:\text{ }\big|Q_{{\mathrm{p}}}^{\star}(s,a_{1})-Q_{{\mathrm{p}}}^{\star}(s,a_{2})\big|<\frac{\delta(1-\gamma)\xi}{3|\mathcal{S}||\mathcal{A}|^{2}}\right\}  & \leq\delta ,
\end{align*}
thus establishing Lemma~\ref{LemTieBreaking} as long as the claim \eqref{eq:pi-phase-transition} is valid. 

\begin{proof}[Proof of the claim \eqref{eq:pi-phase-transition}]To
	establish the claim, it suffices to take
\begin{equation}
\tau_{\mathrm{th}}=\sup\left\{ u\mid\pi_{\tau}^{\star}(s)\neq a_{1}\text{ for all }\tau<u\right\} .\label{eq:defn-tau-th}
\end{equation}
%
	It thus suffices to verify \eqref{eq:pi-phase-transition-2} for
our choice \eqref{eq:defn-tau-th}. Towards this, suppose instead
that there exist some $\tau_{3} < \tau_{\mathrm{th}} \leq \tau_{2} < \tau_{1}$
such that 
\[
\pi_{\tau_{3}}^{\star}(s)\neq a_{1},\qquad\pi_{\tau_{2}}^{\star}(s)=a_{1},\qquad\text{and}\qquad\pi_{\tau_{1}}^{\star}(s)\neq a_{1}.
\]
It is straightforward to see that $V_{\tau_{1}}^{\star}=V_{\tau_{3}}^{\star}$,
since in both cases, the reward $r_{\mathrm{\tau}}(s,a_{1})$ does
	not enter the calculation of the optimal value function (while the rewards in other state-action pairs 
	are identical in both cases). 
In view of the monotonicity of the value function w.r.t.~the reward vector,
we have
\[
V_{\tau_{1}}^{\star}=V_{\tau_{2}}^{\star} = V_{\tau_{3}}^{\star}.
\]
However, this contradicts our assumption that $a_{1}$ is
	the optimal action for state $s$ at $\tau_{2}$ but not at $\tau_3$,
since enlarging $\tau_{2}$ to $\tau_{3}$ otherwise will enlarge the optimal value function
$V_{\tau_{3}}^{\star}$. 
We have thus established \eqref{eq:pi-phase-transition}. \end{proof}

\subsection{Proof of Lemma~\ref{lem:upper-bound-picon-Vhat-Vstar}}
\label{sec:proof-inequality-upper-bound-picon-Vhat-Vstar}
To begin, 
we find it helpful to introduce a modified reward function $\widetilde{\bm r} \in \mathbb{R}^{|\mathcal{S}||\mathcal{A}|}$ as follows
\begin{equation}
\widetilde{r}(s, a) \defn 
\begin{cases}
r(s, a) + \widehat{V}^{\star}(s) - \widehat{Q}^{\star}(s, a), & \text{if}~a = \picon(s), \\
r(s, a), & \text{otherwise}.
\end{cases}
\label{eq:new-r-tilde-definition}
\end{equation}
Armed with this new reward function, we subsequently define a vector $ \widetilde{\bm Q}= [\widetilde{Q}(s,a)]$ as follows
\begin{align}
	\widetilde{\bm Q} \defn \widetilde{\bm r} + \gamma \widehat{\bm P}\widehat{\bm V}^{\star}.
	\label{eq:construction-Q-widetilde-approx}
\end{align}
In view of the Bellman optimality equation $\widehat{\bm Q}^{\star} = \bm r + \gamma \widehat{\bm P}\widehat{\bm V}^{\star}$, 
we see that $\widetilde{\bm Q} $ satisfies $\widetilde{\bm{Q}}=\widetilde{\bm{r}}+\widehat{\bm{Q}}^{\star}-\bm{r}$, which combined with the construction \eqref{eq:new-r-tilde-definition} gives
\[
\forall (s,a) \in \cS \times \cA: \qquad 
\widetilde{Q}(s,a)=\begin{cases}
\widehat{V}^{\star}(s), & \text{if }a=\picon(s),\\
\widehat{Q}^{\star}(s,a), \qquad & \text{else}.
\end{cases}
\]
As a consequence, it is easily seen that
\[
	\forall s\in\cS:\qquad\max_{a}\widetilde{Q}(s,a)=\widehat{V}^{\star}(s)
	\qquad\text{and}\qquad
	\widetilde{Q}\big(s,\picon(s)\big)=\widehat{V}^{\star}(s)
\]
This taken collectively with \eqref{eq:construction-Q-widetilde-approx} demonstrates that $\widetilde{\bm Q}$ and $\widehat{\bm V}^{\star}$ 
are respectively the optimal Q-function and optimal value function of the MDP $\widetilde{\mathcal{M}}=(\cS,\cA, \widetilde{r}, \widehat{P}, \gamma)$, since they satisfy the Bellman optimality condition w.r.t.~$\widetilde{\mathcal{M}}$. 
	In addition, if we let $\widetilde{\bm{V}}^{\picon}$ represent the value function of the policy $\picon$ in $\widetilde{\mathcal{M}}$, then the preceding relation clearly implies that $\widetilde{\bm{V}}^{\picon} = \widehat{\bm{V}}^{\star}$.

Using the above properties, one can deduce that
\begin{align*}
	\Vhat^{\star}-\Vhat^{\picon} &=  \widetilde{\bm{V}}^{\picon}-\Vhat^{\picon} \stackrel{\mathrm{(i)}}{=} 
	\big(\bm{I}-\gamma\widehat{\bm{P}}_{\picon}\big)^{-1}\widetilde{\bm{r}} 
	- \big(\bm{I}-\gamma\widehat{\bm{P}}_{\picon}\big)^{-1} \bm{r}
	= \big(\bm{I}-\gamma\widehat{\bm{P}}_{\picon}\big)^{-1}\big(\widetilde{\bm{r}}-\bm{r}\big)\\
	& \leq  \big\| \big(\bm{I}-\gamma\widehat{\bm{P}}_{\picon}\big)^{-1} \big\|_1 \|\widetilde{\bm{r}}-\bm{r} \|_{\infty} \one
	\stackrel{\mathrm{(ii)}}{=}  \frac{1}{1-\gamma}\|\widetilde{\bm{r}}-\bm{r}\|_{\infty}\one
	\stackrel{\mathrm{(iii)}}{\leq}~\frac{\newnoise}{1-\gamma}\one
	\stackrel{\mathrm{(iv)}}{\leq}~\frac{\xi}{1-\gamma}\one. 
\end{align*}
Here, (i) is due to the Bellman equation, (ii) relies on the fact $\| \big(\bm{I}-\gamma\widehat{\bm{P}}_{\picon}\big)^{-1} \|_1 = \frac{1}{1-\gamma}$, (iii) arises since $\widehat{V}^{\star}(s) - \widehat{Q}^{\star}(s, \picon(s)) \leq \newnoise $ by construction of $\picon$,
whereas (iv) is valid since $\newnoise\in [0, \xi]$. The lemma then follows by recognizing that $ \Vhat^{\pistar}\leq 	\Vhat^{\star}$ due to the optimality of $\Vhat^{\star}$.
%

\subsection{Proof of Lemma~\ref{lem:picon-is-stable}}
\label{sec:proof-lem:picon-is-stable}

We first make the key observation that, with probability at least $1 - \delta$, the following event holds true:
\begin{align}
	\label{eqn:two-side-relation}
	\forall (s,a) \in \mathcal{S} \times \mathcal{A}, \qquad 
	\big| \widehat{V}(s) - \widehat{Q}(s,a) - \newnoise \big| > \frac{\xi \delta}{2|\mathcal{S}||\mathcal{A}|}. 
\end{align}
To justify this claim, note that its complementary event satisfies
\begin{align*}
	\mprob \left(\exists(s,a) \in \mathcal{S} \times \mathcal{A}, ~|\widehat{V}(s) - \widehat{Q}(s,a) - \newnoise | 
	\leq \frac{\xi \delta}{2|\mathcal{S}||\mathcal{A}|}\right)
	&\leq 
	\sum_{s,a}\mprob \left(\newnoise\in\Big[\widehat{V}(s)-\widehat{Q}(s,a)\pm\frac{\xi\delta}{2|\mathcal{S}||\mathcal{A}|}\Big]\right) \\
	& \leq 	
	|\mathcal{S}||\mathcal{A}| \cdot \frac{\xi \delta}{\xi|\mathcal{S}||\mathcal{A}|} = \delta,
\end{align*}
where the first inequality applies the union bound, and the last line follows since $\newnoise  \sim \mathsf{Unif}(0,\xi)$. Here, we abbreviate $[a\pm b] := [a-b,a+b]$. 
Combining \eqref{eqn:two-side-relation} with the assumption \eqref{eq:Q-Qhat-distance-bound}, we further reach
\begin{align*}
	\big|V(s)-Q(s,a)-\newnoise\big|\geq\big|\widehat{V}(s)-\widehat{Q}(s,a)-\newnoise\big|-2\max_{s,a}\big|Q(s,a)-\widehat{Q}(s,a)\big|
	> \frac{\xi\delta}{4|\mathcal{S}||\mathcal{A}|}	 
\end{align*}
for all $(s,a)\in \cS\times \cA$, or equivalently, 
\begin{align}
	\label{eqn:two-side-relation-Q-Qhat}
	V(s)-Q(s,a)< \newnoise - \frac{\xi\delta}{4|\mathcal{S}||\mathcal{A}|}	 
	~~ \text{ or }~~
	V(s)-Q(s,a)> \newnoise + \frac{\xi\delta}{4|\mathcal{S}||\mathcal{A}|}
\end{align}
holds for all $(s,a)\in \cS\times \cA$.

We are now prepared to justify the claim of this lemma. 
To begin with,  consider any action $\widehat{a} \in \{a \in \mathcal{A}: Q(s,a) > V(s) - \newnoise \}$. Comparing this with the condition \eqref{eqn:two-side-relation-Q-Qhat}, we can easily see that the only possibility is 
\[
	V(s)-Q(s, \widehat{a}) <\newnoise-\frac{\xi\delta}{4|\mathcal{S}||\mathcal{A}|}.
\]
Therefore, by invoking a basic decomposition, we ensure that 
\begin{align*}
	\notag \widehat{V}(s) - \widehat{Q}(s,\ahat) 
	& = \widehat{V}(s) - V(s) + V(s) - 	Q(s,\ahat) + Q(s,\ahat) - \widehat{Q}(s, \ahat) \\
	\notag &\leq V(s) - Q(s,\ahat) + \big| V(s) - \widehat{V}(s) \big| + \big| \widehat{Q}(s,\ahat) - Q(s, \ahat) \big| \\
	\notag &< \newnoise  - \frac{\xi\delta}{4|\mathcal{S}||\mathcal{A}|} + 2\max_{s,a}\big|Q(s,a)-\widehat{Q}(s,a)\big| \\
	& \leq \newnoise  - \frac{\xi\delta}{4|\mathcal{S}||\mathcal{A}|} + 2\cdot \frac{\xi\delta}{8|\mathcal{S}||\mathcal{A}|} 
	= \newnoise .
\end{align*}
This essentially implies that
\begin{align}
\label{eqn:subset-I} 
	\big\{a \in \mathcal{A}: Q(s,a) > V(s) - \newnoise \big\} 
	~\subseteq~ 
	\big\{a \in \mathcal{A}: \widehat{Q}(s,a) > \widehat{V}(s) - \newnoise \big\}.
\end{align}
%
Applying exactly the same argument for any action $\widehat{a} \in \{a \in \mathcal{A}: Q(s,a) \leq V(s) - \newnoise \}$, we can also derive
%
\begin{align}
\label{eqn:subset-I-reverse} 
	\big\{a \in \mathcal{A}: \widehat{Q}(s,a) > \widehat{V}(s) - \newnoise \big\} 
	~\subseteq~ 
	\big\{a \in \mathcal{A}: Q(s,a) > V(s) - \newnoise \big\}.
\end{align}
These two set inequalities taken collectively establish the lemma. 

\section{Proofs of auxiliary lemmas: finite-horizon MDPs}
\label{sec:proof-auxiliary-finite}

To begin, we find it helpful to control the entrywise magnitudes of $\{\bm{V}^{(l)}_h\}$. 
This is accomplished via the following lemma, 
with the proof postponed to Section~\ref{sec:proof-lem:Vhl-inf-norm-UB-finite}. 
\begin{lemma}
\label{lem:Vhl-inf-norm-UB-finite}
The vectors $\big\{ \bm{V}^{(l)}_h \big\}$ defined in \eqref{defn-r-v-iterate-finite} obey 
\begin{align} \label{eq:Vl-finite}
\big\|\bm{V}^{(l)}_h\big\|_{\infty} \le \big(\sqrt{3H}\big)^l H 
\end{align}
for all $1 \le h \le H$ and all $l\geq 0$. 
\end{lemma}

\subsection{Proof of Lemma~\ref{lemma:dependent-policy-error-finite}}
\label{sec:proof-lemma:dependent-policy-error-finite}
Consider any $l$ obeying $1 \le l \le m \defn \log_2 H$.
By construction (cf.~\eqref{defn-r-v-iterate-finite}), we see that
\begin{align*}
\widehat{\bm{V}}^{(l)}_h - \bm{V}^{(l)}_h &= \widehat{\bm P}_{h,\pi} \widehat{\bm V}^{(l)}_{h+1} - {\bm P}_{h,\pi} \bm V^{(l)}_{h+1} \\
&= \widehat{\bm P}_{h,\pi} \big(\widehat{\bm V}^{(l)}_{h+1} - \bm V^{(l)}_{h+1}\big) + \big(\widehat{\bm P}_{h,\pi} - {\bm P}_{h,\pi}\big) \bm V^{(l)}_{h+1},
\end{align*}
which connects $\bm V^{(l)}_{h}$ and $\widehat{\bm V}^{(l)}_{h}$ with $\bm V^{(l)}_{h+1}$ and $\widehat{\bm V}^{(l)}_{h+1}$. 
Apply the above relation recursively and make use of the conditions \eqref{defn-r-v-iterate-finite-Hplus1} to arrive at
\begin{align*}
\widehat{\bm{V}}^{(l)}_h - \bm{V}^{(l)}_h = \sum_{j = h}^{H-1} \prod_{i = h}^{j-1} \widehat{\bm P}_{i,\pi} \big(\widehat{\bm P}_{j,\pi} - \bm P_{j,\pi}\big) \bm V^{(l)}_{j+1} ,
\end{align*}
where we adopt the convenient notation and let $\prod_{i = h}^{h-1} \widehat{\bm P}_{i,\pi} = \bm{I}.$
%
	
%
According to the triangle inequality, we can further deduce that
\begin{align}
\big|\widehat{\bm{V}}^{(l)}_h - \bm{V}^{(l)}_h\big| &\le \sum_{j = h}^{H-1} \prod_{i = h}^{j-1} \widehat{\bm P}_{i, \pi} \Big|\big(\widehat{\bm P}_{j,\pi} - \bm P_{j,\pi}\big) \bm V^{(l)}_{j+1}\Big| \notag\\
&\le \sum_{j = h}^{H-1} \prod_{i = h}^{j-1} \widehat{\bm P}_{i,\pi}\bigg(\sqrt{\frac{\beta_1}{N}}\sqrt{\mathsf{Var}_{\bm P_{j,\pi}}\big[\bm{V}^{(l)}_{j+1}\big]} + \frac{\beta_1 \big\|\bm{V}^{(l)}_{j+1}\big\|_{\infty}}{N}\bm{1}\bigg) \notag\\
& = \sqrt{\frac{\beta_{1}}{N}}\bigg(\sum_{j=h}^{H-1}\prod_{i=h}^{j-1}\widehat{\bm{P}}_{i,\pi}\bm{r}_{j}^{(l+1)}\bigg)+\sum_{j=h}^{H-1}\frac{\beta_{1}\big\|\bm{V}_{j+1}^{(l)}\big\|_{\infty}}{N}\bm{1} \notag\\
& \leq \sqrt{\frac{\beta_{1}}{N}}\bigg(\sum_{j=h}^{H-1}\prod_{i=h}^{j-1}\widehat{\bm{P}}_{i,\pi}\bm{r}_{j}^{(l+1)}\bigg)+
	\frac{\beta_{1} H \max_{j}\big\|\bm{V}_{j}^{(l)}\big\|_{\infty}}{N}\bm{1} ,
\label{eq:Vl-diff-UB-123}
\end{align}
where the second line follows from the assumption \eqref{eq:Bernstein-Vl-general-finite}, 
and the third line makes use of the definition \eqref{defn-r-v-iterate-finite} of $\bm{r}_h^{(l)}$ and the elementary identity $\prod_{i = h}^{j-1} \widehat{\bm P}_{i,\pi}\bm{1} = \bm{1}$ (since each $\widehat{\bm P}_{i,\pi}$ is a probability transition matrix). 
In view of the construction \eqref{defn-r-v-iterate-finite}, we can also derive recursively that
\begin{align}
\widehat{\bm{V}}_{h}^{(l)}=\sum_{j=h}^{H-1}\prod_{i=h}^{j-1}\widehat{\bm{P}}_{i,\pi}\bm{r}_{j}^{(l)},
\end{align}	
which combined with \eqref{eq:Vl-diff-UB-123} yields
\begin{align}
\big|\widehat{\bm{V}}^{(l)}_h - \bm{V}^{(l)}_h\big| &\le 
\sqrt{\frac{\beta_1}{N}}\widehat{\bm{V}}^{(l+1)}_h + \frac{\beta_1 H\max_{j}\big\|\bm{V}_{j}^{(l)}\big\|_{\infty}}{N}\bm{1}. 
\label{eq:Vl-diff-UB-456}
\end{align}
%


Further, the above inequality together with the triangle inequality immediately results in the following recursive relation
\begin{align*}
\big\|\widehat{\bm{V}}_{h}^{(l)}-\bm{V}_{h}^{(l)}\big\|_{\infty} & \le\sqrt{\frac{\beta_{1}}{N}}\big\|\widehat{\bm{V}}_{h}^{(l+1)}\big\|_{\infty}+\frac{\beta_{1}H}{N}\max_{j}\big\|\bm{V}_{j}^{(l)}\big\|_{\infty}\\
 & \le\sqrt{\frac{\beta_{1}}{N}}\big\|\widehat{\bm{V}}_{h}^{(l+1)}-\bm{V}_{h}^{(l+1)}\big\|_{\infty}+\sqrt{\frac{\beta_{1}}{N}}\big\|\bm{V}_{h}^{(l+1)}\big\|_{\infty}+\frac{\beta_{1}H}{N}\max_{j}\big\|\bm{V}_{j}^{(l)}\big\|_{\infty},
\end{align*}
thus revealing a useful connection between $\|\widehat{\bm{V}}_{h}^{(l)}-\bm{V}_{h}^{(l)}\|_{\infty}$
and $\|\widehat{\bm{V}}_{h}^{(l+1)}-\bm{V}_{h}^{(l+1)}\|_{\infty}$. 
Applying this relation recursively with a little algebra leads to 
\begin{align}
\big\|\widehat{\bm{V}}^{(0)}_h - \bm{V}^{(0)}_h\big\|_{\infty} &\le \bigg(\sqrt{\frac{\beta_1}{N}}\bigg)^m\big\|\widehat{\bm{V}}^{(m)}_h - \bm{V}^{(m)}_h\big\|_{\infty} \notag\\
& \qquad + \sum_{l = 1}^m \bigg(\sqrt{\frac{\beta_1}{N}}\bigg)^{l}\big\|\bm{V}^{(l)}_h\|_{\infty} + \frac{\beta_1 H}{N}\sum_{l = 1}^{m} \bigg(\sqrt{\frac{\beta_1}{N}}\bigg)^{l-1}\max_{j}\big\|\bm{V}_{j}^{(l-1)}\big\|_{\infty} .
	\label{eq:V0-hat-diff-UB123}
\end{align}
Additionally, it is easily seen from the definition \eqref{defn-r-v-iterate-finite} that
\[
	\bm{r}_{h}^{(l+1)}\leq\bm{V}_{h+1}^{(l)} \leq \big \| \bm{V}_{h+1}^{(l)}  \big\|_{\infty} \bm{1}, 
\]
which taken together with \eqref{eq:Vl-diff-UB-123} and the elementary identity $\prod_{i = h}^{j-1} \widehat{\bm P}_{i,\pi}\bm{1} = \bm{1}$ implies that
\begin{align*}
\big|\widehat{\bm{V}}_{h}^{(m)}-\bm{V}_{h}^{(m)}\big| & \leq\sqrt{\frac{\beta_{1}}{N}}\max_{j}\big\|\bm{r}_{j}^{(m+1)}\big\|_{\infty}\bigg(\sum_{j=h}^{H-1}\prod_{i=h}^{j-1}\widehat{\bm{P}}_{i,\pi}\bm{1}\bigg)+\frac{\beta_{1}H\max_{j}\big\|\bm{V}_{j}^{(m)}\big\|_{\infty}}{N}\bm{1}\\
 & \leq\sqrt{\frac{\beta_{1}}{N}}\max_{j}\big\|\bm{V}_{j}^{(m)}\big\|_{\infty}\bigg(\sum_{j=h}^{H-1}\bm{1}\bigg)+\frac{\beta_{1}H\max_{j}\big\|\bm{V}_{j}^{(m)}\big\|_{\infty}}{N}\bm{1}\\
 & \leq H\left(\sqrt{\frac{\beta_{1}}{N}}+\frac{\beta_{1}}{N}\right)\max_{j}\big\|\bm{V}_{j}^{(m)}\big\|_{\infty}\bm{1}.
\end{align*}
Substitution into \eqref{eq:V0-hat-diff-UB123} results in
\begin{align}
\big\|\widehat{\bm{V}}^{(0)}_h - \bm{V}^{(0)}_h\big\|_{\infty} 
&\le H\bigg(\sqrt{\frac{\beta_1}{N}}\bigg)^m\bigg(\sqrt{\frac{\beta_1}{N}} + \frac{\beta_1 }{N}\bigg)\max_j\big\|\bm{V}^{(m)}_{j}\big\|_{\infty} \notag\\
	& \qquad + \sum_{l = 1}^m \bigg(\sqrt{\frac{\beta_1}{N}}\bigg)^{l}\big\|\bm{V}^{(l)}_h\big\|_{\infty} + \frac{\beta_1 H}{N}\sum_{l = 1}^{m} \bigg(\sqrt{\frac{\beta_1}{N}}\bigg)^{l-1}\max_j\big\|\bm{V}^{(l-1)}_{j}\big\|_{\infty}. 
	\label{eq:V0-hat-diff-UB456}
\end{align}
%
%

To finish up, it remains to control the terms $\big\{ \|\bm{V}^{(l)}_h\|_{\infty} \big\}$.   
Towards this, combining Lemma~\ref{lem:Vhl-inf-norm-UB-finite} with the above inequality \eqref{eq:V0-hat-diff-UB456} yields
\begin{align}
\big\|\widehat{\bm{V}}_{h}^{(0)}-\bm{V}_{h}^{(0)}\big\|_{\infty} & \le H^{2}\bigg(\sqrt{\frac{3\beta_{1}H}{N}}\bigg)^{m}\Big(\sqrt{\frac{\beta_{1}}{N}}+\frac{\beta_{1}}{N}\Big)+\sum_{l=1}^{m}H\bigg(\sqrt{\frac{3\beta_{1}H}{N}}\bigg)^{l}+\sum_{l=1}^{m}\frac{\beta_{1}H^{2}}{N}\bigg(\sqrt{\frac{3\beta_{1}H}{N}}\bigg)^{l-1} \notag\\
 & =H^{2}\bigg(\sqrt{\frac{3\beta_{1}H}{N}}\bigg)^{m}\Big(\sqrt{\frac{\beta_{1}}{N}}+\frac{\beta_{1}}{N}\Big)+\left\{ H\sqrt{\frac{3\beta_{1}H}{N}}+\frac{\beta_{1}H^{2}}{N}\right\} \sum_{l=1}^{m}\bigg(\sqrt{\frac{3\beta_{1}H}{N}}\bigg)^{l-1} \notag\\
 & \leq H^{2}\bigg(\sqrt{\frac{3\beta_{1}H}{N}}\bigg)^{m}\Big(\sqrt{\frac{\beta_{1}}{N}}+\frac{\beta_{1}}{N}\Big)+2\left\{ H\sqrt{\frac{3\beta_{1}H}{N}}+\frac{\beta_{1}H^{2}}{N}\right\} ,
	\label{eq:V0-hat-diff-UB789}
\end{align}
Here, the last inequality holds true since
\[
\sum_{l=1}^{m}\bigg(\sqrt{\frac{3\beta_{1}H}{N}}\bigg)^{l-1}\leq\frac{1}{1-\sqrt{\frac{3\beta_{1}H}{N}}}\leq2,
\]
provided that $\sqrt{\frac{3\beta_{1}H}{N}}\leq1/2$. Invoking the assumption $N\geq 12 H \beta_1$ again and taking $m=\log_2 H$, 
we have $\big(\sqrt{\frac{3\beta_{1}H}{N}}\big)^{m} \leq 1/H$. This combined with \eqref{eq:V0-hat-diff-UB789} immediately leads to
\begin{align}
\big\|\widehat{\bm{V}}_{h}^{(0)}-\bm{V}_{h}^{(0)}\big\|_{\infty} 
 & \leq 3\left\{ H\sqrt{\frac{3\beta_{1}H}{N}}+\frac{\beta_{1}H^{2}}{N}\right\} \leq 6 H\sqrt{\frac{3\beta_{1}H}{N}},
	\label{eq:V0-hat-diff-UB789}
\end{align}
with the proviso that $N\geq 3\beta_1 H$. This concludes the proof. 
%
%
%

\subsection{Proof of Lemma~\ref{lem:Vhl-inf-norm-UB-finite}}
\label{sec:proof-lem:Vhl-inf-norm-UB-finite}
To begin with, it is seen from the notation \eqref{eq:matrix-VarP-V-expression} that for all $1 \le j \le H$,
\begin{align}
\mathsf{Var}_{\bm P_{j, \pi}}\big(\bm{V}^{(l)}_{j+1}\big) &= \bm P_{j,\pi}\big(\bm{V}^{(l)}_{j+1} \circ \bm{V}^{(l)}_{j+1}\big) - \big(\bm P_{j,\pi}\bm{V}^{(l)}_{j+1}\big) \circ \big(\bm P_{j,\pi}\bm{V}^{(l)}_{j+1}\big) \nonumber\\
&= \bm P_{j,\pi}\big(\bm{V}^{(l)}_{j+1} \circ \bm{V}^{(l)}_{j+1}\big) - \big(\bm{V}^{(l)}_{j}-\bm r_{j}^{(l)}\big) \circ \big(\bm{V}^{(l)}_{j}-\bm r_{j}^{(l)}\big) \nonumber\\
&= \bm P_{j,\pi}\big(\bm{V}^{(l)}_{j+1} \circ \bm{V}^{(l)}_{j+1}\big) - \bm{V}^{(l)}_{j} \circ \bm{V}^{(l)}_{j} + 2 \bm{V}^{(l)}_{j} \circ \bm r_{j}^{(l)} - \bm r_{j}^{(l)} \circ \bm r_{j}^{(l)} \notag\\
&\le \bm P_{j,\pi}\big(\bm{V}^{(l)}_{j+1} \circ \bm{V}^{(l)}_{j+1}\big) - \bm{V}^{(l)}_{j} \circ \bm{V}^{(l)}_{j} + 2 \bm{V}^{(l)}_{j} \circ \bm r_{j}^{(l)} \label{eq:var-finite}, 
\end{align}
where the second identity makes use of the fact that $\bm{V}^{(l)}_{j}=\bm r_{j}^{(l)}+\bm P_{j,\pi}\bm{V}^{(l)}_{j+1}$ (cf.~\eqref{defn-r-v-iterate-finite}).
Moreover, 
from the construction \eqref{defn-r-v-iterate-finite} we can easily derive 
\begin{align}
	{\bm{V}}_{h}^{(l)}=\sum_{j=h}^{H-1}\prod_{i=h}^{j-1}{\bm{P}}_{i,\pi}\bm{r}_{j}^{(l)}
	= \sum_{j=h}^{H-1}\prod_{i=h}^{j-1}{\bm{P}}_{i,\pi}\sqrt{\mathsf{Var}_{\bm P_{j,\pi}}\big[\bm{V}^{(l-1)}_{j+1}\big]} .
	\label{eq:Vh-l-expression-long}
\end{align}	
The above two results taken collectively give
\begin{align*}
\big|\bm{V}^{(l+1)}_h\big| &= \sum_{j = h}^{H-1} \prod_{i = h}^{j-1} \bm P_{i,\pi}\sqrt{\mathsf{Var}_{\bm P_{j,\pi}}\big[\bm{V}^{(l)}_{j+1}\big]} 
\overset{\mathrm{(i)}}{\leq} \sum_{j = h}^{H-1} \sqrt{\prod_{i = h}^{j-1} \bm P_{i,\pi} \mathsf{Var}_{\bm P_{j,\pi}}\big[\bm{V}^{(l)}_{j+1}\big]} \\
&\overset{\mathrm{(ii)}}{\leq} \sqrt{H\sum_{j = h}^{H-1} \prod_{i = h}^{j-1} \bm P_{i,\pi}\mathsf{Var}_{\bm P_{j,\pi}}\big[\bm{V}^{(l)}_{j+1}\big]} \\
&\overset{\mathrm{(iii)}}{\leq} \sqrt{H\sum_{j = h}^{H-1} \prod_{i = h}^{j-1} \bm P_{i,\pi}\bigg[\bm P_{j,\pi}\big(\bm{V}^{(l)}_{j+1} \circ \bm{V}^{(l)}_{j+1}\big) - \bm{V}^{(l)}_{j} \circ \bm{V}^{(l)}_{j} + 2 \bm{V}^{(l)}_{j} \circ \bm r_{j}^{(l)}\bigg]} \\
& =  \sqrt{H\sum_{j = h}^{H-1} \Big( \prod_{i = h}^{j} \bm P_{i,\pi}  \big(\bm{V}^{(l)}_{j+1} \circ \bm{V}^{(l)}_{j+1}\big) - \prod_{i = h}^{j-1} \bm P_{i,\pi}  \big( \bm{V}^{(l)}_{j} \circ \bm{V}^{(l)}_{j} \big)  \Big) + 2 H \sum_{j = h}^{H-1} \prod_{i = h}^{j-1} \bm P_{i,\pi}  \big (\bm{V}^{(l)}_{j} \circ \bm r_{j}^{(l)} \big) }.
\end{align*}
Here,  
	(i) arises from Jensen's inequality; 
	(ii) holds true due to the Cauchy-Schwarz inequality; (iii) follows from \eqref{eq:var-finite}.
By telescoping summation, one further arrives at 
\begin{align*}
\big|\bm{V}^{(l+1)}_h\big|	&\leq \sqrt{H\bigg[\prod_{i = h}^{H-1} \bm P_{i,\pi}\big(\bm{V}^{(l)}_{H} \circ \bm{V}^{(l)}_{H}\big) - \bm{V}^{(l)}_{h} \circ \bm{V}^{(l)}_{h}\bigg] + 2H\max_{j} \big\|\bm{V}^{(l)}_{j} \big\|_{\infty}\sum_{j = h}^{H-1} \prod_{i = h}^{j-1} \bm P_{i,\pi} \bm r_{j}^{(l)}} \\
	&\overset{\mathrm{(iv)}}{=} \sqrt{H\bigg[\prod_{i = h}^{H-1} \bm P_{i,\pi}\big(\bm{V}^{(l)}_{H} \circ \bm{V}^{(l)}_{H}\big) - \bm{V}^{(l)}_{h} \circ \bm{V}^{(l)}_{h}\bigg] + 2H\max_{j} \big\|\bm{V}^{(l)}_{j} \big\|_{\infty} \bm{V}_h^{(l)} } \\
	&\leq \sqrt{H \prod_{i = h}^{H-1} \bm P_{i,\pi}\big(\bm{V}^{(l)}_{H} \circ \bm{V}^{(l)}_{H}\big)  + 2H \Big( \max_{j} \big\|\bm{V}^{(l)}_{j} \big\|_{\infty} \Big)^2 \bm{1} } \\
	&\overset{\mathrm{(v)}}{\leq} \sqrt{3H}\max_{j} \big\| \bm{V}^{(l)}_{j} \big\|_{\infty} \bm{1}. 
\end{align*}
Here (iv) invokes the relation $\bm{V}^{(l)}_h = \sum_{j = h}^{H-1} \prod_{i = h}^{j-1} \bm P_{i,\pi} \bm r_{j}^{(l)}$ (see \eqref{eq:Vh-l-expression-long});
	and (v) holds true since 
	\[
		\bigg \| \prod_{i = h}^{H-1} \bm P_{i,\pi}\big(\bm{V}^{(l)}_{H} \circ \bm{V}^{(l)}_{H}\big) \bigg\|_{\infty}
		\leq \bigg \| \prod_{i = h}^{H-1} \bm P_{i,\pi} \bigg\|_1  \big\| \big(\bm{V}^{(l)}_{H} \circ \bm{V}^{(l)}_{H}\big) \big\|_{\infty}
		\leq \max_j \big\| \bm{V}^{(l)}_{j} \big\|_{\infty}^2. 
	\]

As a consequence, the above inequality allows one to deduce that
\begin{align*}
	\max_j \big\|\bm{V}_j^{(l+1)}\big\|_{\infty}
	\leq \sqrt{3H}\max_{j} \big\| \bm{V}^{(l)}_{j} \big\|_{\infty} ,
\end{align*}
and therefore, 
\begin{align*}
	\max_j \big\|\bm{V}_j^{(l)}\big\|_{\infty}
	\leq \big( \sqrt{3H} \, \big)^l \max_{j} \big\| \bm{V}^{(0)}_{j} \big\|_{\infty} \leq \big( \sqrt{3H} \, \big)^l H,
\end{align*}
where the last inequality arises from the trivial upper bound $\max_{j} \|\bm{V}^{(0)}_{j}\|_{\infty} \le H$.

\bibliography{bibfileRL}
\bibliographystyle{apalike}

\end{document}